\documentclass[12pt]{article}
\usepackage{threeparttable}
\usepackage{rotating}
\usepackage{amsfonts}
\usepackage{amsmath}
\usepackage{amsmath,amssymb}
\usepackage{mathtools}
\usepackage{amssymb}
\usepackage{amsmath}
\usepackage{amsthm}
\usepackage{thmtools}
\usepackage{graphicx}
\usepackage{array, color}
\usepackage{bm}
\usepackage{threeparttable}
\usepackage{algorithm}
\usepackage{algorithmic}
\usepackage{float}
\usepackage{mathtools}
\usepackage{multirow}
\usepackage[title]{appendix}
\usepackage[authoryear, round]{natbib}
\usepackage{geometry}
\geometry{left=1.0in, right=1.0in, top=1.0in, bottom=1.0in}

\usepackage[colorlinks= false, urlcolor=blue]{hyperref}
\def\*#1{\bm{#1}}


\newtheorem{theorem}{Theorem}
\newtheorem{lemma}{Lemma}

\newtheorem{model}{Model}

\newtheorem{ND}{ND}
\newtheorem{NGS}{NG}
\newtheorem{condition}{Condition}

\def\Dbb{\mathbb{D}}

\def\Ebb{\mathbb{E}}

\def\cF{\mathcal{F}}

\newcommand{\bI}{\mbox{\bf I}}

\def\cX{\mathcal{X}}

\def\cY{\mathcal{Y}}



\title{Wasserstein Generative Regression}

\author{Shanshan Song\thanks{Shangshan Song and Tong Wang contributed equally to this work.}
\thanks{Department of Statistics, The Chinese University of Hong Kong,  Hong Kong SAR, China}
\ Tong Wang$^*$\thanks{Department of Statistics, The Chinese University of Hong Kong,  Hong Kong SAR, China}
\ Guohao Shen\thanks{Department of Applied Mathematics, The Hong Kong Polytechnic University,  Hong Kong SAR, China}
\ Yuanyuan Lin\thanks{Department of Statistics, The Chinese University of Hong Kong,  Hong Kong SAR, China} $^{**}$
Jian Huang\thanks{Department of Applied Mathematics, The Hong Kong Polytechnic University,  Hong Kong SAR, China} \thanks{Corresponding authors, Yuanyuan Lin, email: ylin@sta.cuhk.edu.hk;  Jian Huang, email: j.huang@polyu.edu.hk}}


\begin{document}
\maketitle

\begin{abstract}
In this paper, we propose a new and unified approach for nonparametric regression and conditional distribution learning. Our approach simultaneously estimates a regression function and a conditional generator using a generative learning framework, where a conditional generator is a function that can generate samples from a conditional distribution. The main idea is to estimate a conditional generator that satisfies the constraint that it produces a good regression function estimator. We use deep neural networks to model the conditional generator. Our approach can handle problems with multivariate outcomes and covariates, and can be used to construct prediction intervals. We provide theoretical guarantees by deriving non-asymptotic error bounds and the distributional consistency of our approach under suitable assumptions. We also perform numerical experiments with simulated and real  data to demonstrate the effectiveness and superiority of our approach over some existing approaches in various scenarios.
\end{abstract}

\noindent
\textbf{keywords}: Conditional distribution, deep neural networks,  generative learning, nonparametric regression, non-asymptotic error bounds.

\section{Introduction}
Regression models and conditional distributions play a key role in a variety of prediction and inference problems in statistics.
There is a vast literature on nonparametric methods for regression analysis and conditional density estimation. Most existing methods use smoothing and basis expansion techniques, including kernel smoothing, local polynomials, and splines
 \citep{silverman86, scott1992, fan1996local, gyorfi2006distribution, wasserman2006, npe2008}.
However, the existing nonparametric regression and conditional
density estimation methods suffer from the  ``curse of dimensionality'', that is, their performance deteriorates dramatically as the dimensionality of  data increases. Indeed, most existing methods can only effectively handle up to a few predictors. Moreover, most existing methods only consider the case when the response is a scalar, but are not applicable to the settings with a high-dimensional response vector.

To circumvent the curse of dimensionality, many researchers have proposed and studied non- and semi-parametric models that impose certain structural constraints that reduce the model dimensionality.
Some notable examples include the single index model
 \citep{ichimura1993,hardle1993optimal},
 the generalized additive model
 \citep{hastie1986, stone1986dimensionality},
  and the projection pursuit model
  \citep{friedman1981projection},
 among others. However, these methods make strong assumptions
about the model structure, which may not hold in reality.
Moreover, these methods aim at estimating the regression function, but do not learn the conditional distribution. Therefore, they can only provide point prediction, not interval prediction with a measure of uncertainty.

In recent years, there have been many important developments in \textit{deep generative learning} \citep{sala2015}, in which deep neural networks are used  to approximate high-dimensional functions, such as generator and discriminator functions. In particular, for learning distributions of high-dimensional data arising in image analysis and natural language processing, the {generative adversarial networks} (GANs) \citep{goodfellow2014generative,arjovsky17}
have proven to be effective and achieved impressive success \citep{reed16,zhu17}.
Instead of estimating the functional form of a density function, GANs start from a known reference distribution and learn a map that pushes the reference distribution to the data distribution. GANs have also been extended to learn conditional distributions \citep{mirza2014cgan,kovachki2021conditional,zjlh2022,liu2021wasserstein}.

One of the main challenges in nonparametric regression is to estimate a function that can accurately capture the relationship between covariate and response variables. GANs can learn complex distributions. However, to the best of our knowledge,
there have not been systematic studies on how GANs can be used for nonparametric regression, despite their successes in distribution learning. Furthermore, conditional GANs, which are a natural extension of GANs for learning conditional distributions,
 do not automatically guarantee a good estimation of a regression function.

We propose a new and unified approach for nonparametric regression and conditional distribution estimation. Our approach estimates the regression function and a conditional generator at the same time using a generative learning method. A conditional generator is a function that transforms a random vector from a known reference distribution to the response variable space, which can be used to sample from a conditional distribution. Thus, when a conditional generator is estimated, it can be used to explore the target conditional distribution. Theoretically, the regression function is the expectation of the conditional generator with respect to the reference distribution. However, empirically such an expectation may not produce a good estimator
of the regression function.

Our main idea is to constrain the conditional generator to produce samples that minimize the quadratic loss of the regression function, which is computed as the expectation of the conditional generator.
Specifically, in the objective function for estimating the conditional generator based on distribution matching using the Wasserstein distance, we incorporate a quadratic loss term to control the error of the estimated regression function. We use deep neural networks to approximate the conditional generator, which can capture the complex structure of the data distribution.
In principle, other approximation methods such as splines can also be used. However, deep neural network approximation has the important advantage of being able to adapting to the latent structure of the data distribution. For simplicity, we call our method Wasserstein generative regression (WGR).


The proposed method has several attractive properties. First,  it is applicable to problems with a high-dimensional response variable, while the existing methods typically only consider the case of a scalar response. Second, the proposed method allows continuous, discrete and mixed types of predictors and responses, while the smoothing and basis expansion methods are mainly applicable to  continuous-type variables. Third, since the proposed method learns a conditional distribution generator, it can be used for constructing prediction intervals. In comparison, the existing nonparametric regression can only give point prediction. Finally, the proposed method is able to adapt to the latent data structure in a data-driven manner and thus can mitigate the curse of dimensionality, under the assumption the data distribution is supported on an approximate low-dimensional set.

The rest of the paper is organized as follows. In Section \ref{Method} we describe the proposed WGR method. We present the implementation details in Section \ref{imp}.
In Section \ref{sec: thm} we establish  non-asymptotic error bounds for the proposed estimator and show that it is consistent. In Section \ref{EXPs} we conduct numerical experiments, including simulation studies and real data analysis, to evaluate the performance of the proposed method. Technical proofs and additional numerical experiments are given in Appendix.


\section{Method}\label{Method}

Consider a pair of random vectors  $(X, Y) \in \cX \times \cY$,  where $X$ is a vector of predictors and $Y$ is a vector of response variables. Suppose $\cX \subseteq \mathbb{R}^d$ and $\cY \subset \mathbb{R}^q$ with $d, q \ge 1$. We allow either or both of $X$ and $Y$ to be high-dimensional.
The predictor $X$ or the response $Y$ can contain both continuous and categorical components. Our goal is to learn  the conditional distribution of $Y$ given $X=x$ and
 estimate the regression function $\Ebb(Y|X=x)$ in a unified framework.

We describe the proposed WGR method in detail below, which
has three main ingredients,
a conditional distribution generator,
a quadratic loss for regression,  and
the Wasserstein metric for distribution matching.

\subsection{Conditional generator}

The theoretical foundation of WGR is the noise outsourcing lemma \citep{kall2002}. 
This lemma states that, if $\cY$ is a standard Borel space \citep{standardBorel2009}, there exist
a Borel-measurable function $g^*:  \cX \times \mathbb{R}^m \to \cY$ and a random variable
$\eta \sim \text{Uniform}[0,1]$ such that $\eta$ is independent of $X$ and
\begin{equation}
\label{noise-out1}
			(X, Y)= (X, g^*(X,\eta))  \  \ \ \ \text{almost surely.}
\end{equation}
We note that the condition that $\cY$ being a standard Borel space is satisfied in all the applications we are interested in. In (\ref{noise-out1}),  for simplicity, $\eta$ is taken to be a uniform random variable. In general, we can  take $\eta$ to be a random vector from  a given reference distribution $P_{\eta}$ that is easy to sample from. For example, we can take $P_{\eta}$ to be the standard multivariate normal $N(\mathbf{0}, \bI_m)$ with $m \ge 1,$ which allows us to control the noise level more easily in practice.

We call the function $g^*$ in (\ref{noise-out1}) a conditional generator, since if $g^*$ satisfies (\ref{noise-out1}), it also satisfies
\begin{equation}
\label{noise-out2}
g^*(\eta, x) \sim P_{Y|X=x},  \eta \sim P_{\eta}, x \in \cX.
\end{equation}
So for a given $x$, to sample from the conditional distribution $P_{Y|X=x}$, we can first generate
$\eta \sim P_{\eta}$, then calculate $g^*(\eta, x)$, which gives a sample from $P_{Y|X=x}.$
 In addition, we can calculate any moments of
$P_{Y|X=x}$ via $g^*(\cdot, x)$. In particular, we have
\[
\Ebb(Y|X=x) = \Ebb_{\eta} [g^*(\eta, x)], x \in \cX.
\]

In summary,  we can determine the usual regression function  (the conditional mean)
and sample from the conditional distribution as follows:
\begin{itemize}
 \setlength\itemsep{-0.05 cm}
\item
Regression function or conditional mean:
$
\mathbb{E}(Y|X=x) = \mathbb{E}_{\eta} g^*(x; \eta), x \in \mathcal{X},
$
\item
Conditional distribution:
$
g^*(x;\eta) \sim P_{Y|X=x}, x \in \cX, \eta \sim P_{\eta}.
$
\end{itemize}
Hence, the conditional generator provides a basis for
a unified framework for nonparametric regression and conditional distribution learning.

\subsection{Objective function}
Let $P_{X, g}$ denote the joint distribution of $(X, g(X,\eta))$, which is the generated distribution based on a conditional generator $g(x, \eta), \eta \sim P_{\eta}, x \in \cX.$
One possible way to measure the quality of a conditional generator $g$ is to compare the generated distribution $P_{X,g}$ with the data distribution $P_{X,Y}$.
A good conditional generator $g$ should ensure that the generated distribution is close to the data distribution in some sense. For example, one can use a distance metric such as the Wasserstein distance or a divergence measure such as the Kullback-Leibler divergence to quantify the discrepancy between the two distributions.

Let $\Dbb$ be a divergence measure
for the difference between $P_{X,g}$ and $P_{X,Y}$. Then, we formulate an objective function that combines this divergence measure with the least squares loss to minimize the distribution mismatch and the prediction error simultaneously. The objective function is
\begin{equation}
\label{obj1}
\lambda_{w}  { \Dbb (P_{X,g} \Vert P_{X,Y})} +  \lambda_{\ell} \mathbb{E}\Vert Y-\mathbb{E}_\eta g(X,\eta)\|^2.
\end{equation}
Here, both $\lambda_{\ell}$ and $\lambda_{w}$ are tuning parameters weighing two losses, which are assumed to be nonnegative and $\lambda_{\ell}+\lambda_{w}=1$.
The objective function (\ref{obj1}) combines two types of losses: the first one evaluates how closely the generated distribution $P_{X,g}$ resembles the data distribution $P_{X,Y};$ the second one is a criterion quantifying how well the regression function fits the data. Intuitively, the objective function (\ref{obj1}) tries to learn the conditional distribution of $Y$ given $X$ with the regularization that the conditional mean is well estimated.

We take $\Dbb$ to be the
\textit{1-Wasserstein} distance.
A computationally convenient form of the 1-Wasserstein distance between $P_{X,g}$ and $P_{X,Y}$ is the Monge-Rubinstein dual \citep{villani2008optimal2}:
\begin{align}
\Dbb_W(P_{X,g}, P_{X,Y}) = \sup_{f\in \mathcal{F}^1_{\text{Lip}}}\big\{
\Ebb_{(X, \eta)} f(X,g(X,\eta))-\Ebb_{(X, Y)}
 f(X,Y)\big\}, \label{wass_divergence}
\end{align}
where
$\mathcal{F}^1_{\text{Lip}}=\{f: \cX \times \cY \to \mathbb{R}, |f(u)-f(v)| \le \|u-v\|_2, \forall u, v \in \cX \times \cY\}$ is a 1-Lipschitz class of functions on  $\cX \times \cY.$
The Lipschitz function $f$ in (\ref{wass_divergence}) is often called a critic or a discriminator.

Then, based on (\ref{obj1}),
the population objective function for the proposed Wasserstein generative regression (WGR) is:
\begin{align}
\label{obj2}
L(g)=  \lambda_{w}\sup_{f\in \mathcal{F}^1_{\text{Lip}}} L_{\text{W}}(g, f)+\lambda_{\ell} L_{\text{LS}} (g),
\end{align}
where
\begin{align*}
 L_{\text{W}}(g, f)&= \Ebb_{(X, \eta)} f(X,g(X,\eta))-\Ebb_{(X, Y)}f(X,Y),\\
 L_{\text{LS}}(g)&= \mathbb{E}_{(X, Y)}\Vert Y- \mathbb{E}_\eta g(X,\eta)\Vert^2.
\end{align*}

Suppose we have a random sample  $\{(X_i,Y_i), i=1,2,\ldots,n\}$  from $P_{X,Y}$, where $n \ge 1$
is the sample size.
Let $\{\eta_{i}, i=1,\ldots,n\}$ and $\{\eta_{ij}, i=1,2,\ldots,n, j=1, \ldots, J\}$ with $J\ge 1$ be  random variables generated independently from $P_{\eta}$. We parameterize the generator function $g$ and the discriminator $f$ by neural network functions $g_{\bm{\theta}}$ and $f_{\bm{\phi}}$ with parameters (weights and biases) $\bm{\theta}$ and $\bm{\phi}$, respectively. That is, we use neural network functions to approximate the generator and critic functions and optimize the objective function given below over the neural networks to obtain an estimator of $g$.
In addition, since $\Ebb_{\eta} g_{\bm{\theta}}(X_i, \eta)$ generally does not have a close form expression, we approximate it by the sample average
 $ J^{-1}\sum_{j=1}^J g_{\bm{\theta}}(X_i,\eta_{ij}).$
Then, the empirical objective function for estimating $(\bm{\theta}, \bm{\phi})$ is
\begin{align}\label{eqn: empirical_loss}
 \widehat{L}(g_{\bm{\theta}},f_{\bm{\phi}})=
  \lambda_{w} \widehat{L}_{\text{W}}(g_{\bm{\theta}},f_{\bm{\phi}})+
 \lambda_{\ell}\widehat{L}_{\text{LS}}(g_{\bm{\theta}}),
\end{align}
where
\begin{align*}
 \widehat{L}_{\text{W}}(g_{\bm{\theta}},f_{\bm{\phi}})&=
 \frac{1}{n}\sum_{i=1}^n\left\{f_{\bm{\phi}}(X_i,g_{\bm{\theta}}(X_i,\eta_i))-
 f_{\bm{\phi}}(X_i,Y_i)\right\}, \\
 \widehat{L}_{\text{LS}}(g_{\bm{\theta}})&=   \frac{1}{n}\sum_{i=1}^{n}\big\| Y_i-
 \frac{1}{J}\sum_{j=1}^J g_{\bm{\theta}}(X_i,\eta_{ij})\big\|^2.
\end{align*}
Let $(\hat{\bm{\theta}}, \hat{\bm{\phi}})$ be a solution to the minimax problem
\begin{align}\label{eqn:generatior_estimator}
(\hat{\bm{\theta}},\hat{\bm{\phi}})=\arg\min_{\bm{\theta}}\max_{\bm{\phi}} \widehat{L}(g_{\bm{\theta}}, f_{\bm{\phi}}).
\end{align}
Then, the estimated conditional generator is $\hat{g}(x, \eta)=g_{\hat{\bm{\theta}}}(x,\eta)$ and the estimated regression function is obtained by taking the expectation of $\hat{g}(x, \eta)$ with respect to
$\eta$, that is,
$\hat{g}(x) = \Ebb_{\eta}\hat{g}(x, \eta).$  Since there is no analytical expression for
the expectation $ \Ebb_{\eta}\hat{g}(x, \eta),$  we approximate it using an empirical average based on a random sample  $\{\eta_1', \ldots, \eta_K'\}, K \ge 1, $ from $P_{\eta}$,
\[
\hat{g}(x) \approx \frac{1}{K}\sum_{k=1}^K \hat{g}(x, \eta_k'),
\]
which gives the estimated regression function.

We note that for a given $x \in \cX$,  $\{\hat{g}(x, \eta_k'), k=1, \ldots, K\}$ are approximately
distributed as $P_{Y|X=x}.$ We can use $\{\hat{g}(x, \eta_k'), k=1, \ldots, K\}$ to explore any aspects of $P_{Y|X=x}$ that we are interested in such as its higher moments and quantiles.

\section{Implementation}\label{imp}
In this section, we present the details for implementing WGR. We first describe the neural networks used in the approximation of $g$ and $f$. We then present the computational algorithm we implemented in detail.

\subsection{ReLU Feedforward Neural Networks }\label{FNN}
We first give a brief description of feedforward neural networks (FNN) with rectified linear unit (ReLU) activation function.
The ReLU function is denoted by $\sigma(x)\coloneqq \max(x,0)$, and it is defined for each component of $x$ if $x$ is a vector.
A neural network  can be expressed as a composite function
$
\zeta(x)=\mathcal{L}_{H} \circ \sigma \circ \mathcal{L}_{H-1} \circ \sigma \circ \cdots \circ \sigma \circ \mathcal{L}_{1} \circ \sigma \circ \mathcal{L}_{0}(x), x \in \mathbb{R}^{p_{0}},
$
where $\mathcal{L}_i(x)=W_ix+b_i$ with a weight matrix $W_i\in \mathbb{R}^{p_{i+1}\times p_{i}}$ and bias vector $b_i\in\mathbb{R}^{p_{i+1}}$ in the $i$-th linear transformation, and $p_i$ is the width of the $i$-th layer,  $i=0,1,\ldots,H$.
The width and depth of the network are described by $W=\max\{p_1,\ldots,p_{H}\}$ and $H$, respectively.
To ease the presentation, we use $\mathcal{NN}(p_0,p_{H+1}, W,H)$ to denote the neural networks with input dimension $p_0$, output dimension $p_{H+1}$, width at most $W$ and depth at most {\color{black}H}.

We now specify the function classes below:
\begin{itemize}
\item For the generator network class $\mathcal{G}$: Let $\mathcal{G}\equiv\mathcal{NN}(d+m,q,W_{\mathcal{G}},H_{\mathcal{G}})$ be a class of ReLU-activated FNNs,  
     $g_{\bm{\theta}}:\mathbb{R}^{d+ m}\to \mathbb{R}^{q},$ with parameter $\bm{\theta}$, width $W_{\mathcal{G}}$, and depth $H_{\mathcal{G}}$.
\item For the discriminator network class  $\mathcal{D}$: Let $\mathcal{D}\equiv\mathcal{NN}(d+q,1,W_{\mathcal{D}},H_{\mathcal{D}})\cap \text{Lip}(\Omega; K_{\mathcal{D}})$ be a class of ReLU-activated FNNs,
    $f_{\bm{\phi}}:\Omega\to \mathbb{R},$ with parameter $\bm{\phi}$, width $W_{\mathcal{D}}$, and depth $H_{\mathcal{D}}$, where  for some $K_{\mathcal{D}} > 0$, $\text{Lip}(\Omega; K_{\mathcal{D}})$ is a class of Lipschitz functions defined below.

For any function $f:\Omega\to\mathbb{R}$, the Lipschitz constant of $f$ is denoted by
\begin{equation*}
    \text{Lip}(f) = \sup_{x,y\in \Omega, x\neq y}\frac{|f(x)-f(y)|}{\|x-y\|}.
\end{equation*}
For a given $0 < K < \infty$, denote $\text{Lip} (\Omega;K)$ as the set of all functions $f: \Omega\mapsto \mathbb{R}$ with $\text{Lip}(f) \leq K$. And let
$\text{Lip} (\Omega;K,C):=\{f \in \text{Lip} (\Omega;K):\|f\|_{\infty} \leq C\}$, where $\|f\|_{\infty}=\sup_{x\in\Omega}\|f(x)\|_{\infty}$ and $C$ is a positive constant.
 Hence, $\mathcal{F}^1_{\text{Lip}}$ defined in (\ref{wass_divergence}) is $\text{Lip} (\mathbb{R}^{d+q};1)$.
\end{itemize}

\begin{algorithm}
	\caption{WGR Algorithm}
    \label{alg: WGR}
	\begin{algorithmic}[1]
	\STATE {\bfseries Require:}
	 (a) Labeled data $\{(X_i,Y_i)\}_{i=1}^n$; (b) Minibatch size $v \leq \min(n,N)$; (c) $J$, the size of noise vector $\eta$ to compute the conditional expectation; $\lambda$, the gradient penalty parameter.
	\FOR {number of training iterations}
		\STATE Sample i.i.d. $\{\eta_{ij},i=1,2,\ldots,n,j=1,\ldots,J\}$ from the standard multivariate normal distribution.
		 \STATE With fixed $\bm{\theta}^{(v)}$, update the discriminator $f_{\bm{\phi}}$ by ascending its stochastic gradient:
		 \begin{small}
	\begin{align}\label{Alg-D}
	  \nabla_{\phi} \frac{\lambda_{w}}{n} \sum_{i=1}^{n} \left\{f_{\bm{\phi}}\left(X_{i}, g_{\bm{\theta}}\left( X_{i},\eta_{i0}\right)\right)
	  -f_{\bm{\phi}}\left(X_{i}, Y_{i}\right)-\lambda\left(\left\|\nabla_{(x, y)} f_{\bm{\phi}}\left(X_{i}, Y_{i}\right)\right\|_{2}-1\right)^{2}\right\}
	\end{align}
	\end{small}
	where the resulting parameter is denoted by $\bm{\phi}^{(v+1)}$.
         \STATE With fixed $\bm{\phi}^{(v+1)}$, update the generator $g_{\bm{\theta}}$ by descending its stochastic gradient:
        \begin{small}
	\begin{align*}
	 \nabla_{\theta} \left[\frac{\lambda_{l}}{n}\sum_{i=1}^{\tilde{n}} \left\{Y_i-\frac{1}{J}\sum_{j=1}^{J}g_{\bm{\theta}}(X_i,\eta_{ij})\right\}^2+ \frac{\lambda_{w}}{n} \sum_{i=1}^{n}f_{\bm{\phi}}\left(X_{i}, g_{\boldsymbol{\theta}}\left( X_{i},\eta_{i0}\right)\right)\right],
	\end{align*}
	\end{small}
	where the resulting parameter is denoted by $\bm{\theta}^{(v+1)}$.
		\ENDFOR
	\end{algorithmic}
	\footnotesize
\end{algorithm}

\subsection{Computation}\label{comp}
We now describe the implementation of WGR. For training the conditional distribution generator $g_{\bm{\theta}}$ and the discriminator $f_{\bm{\phi}}$, we use the leaky rectified linear unit (leaky ReLU) as the activation function in $g_{\bm{\theta}}$ and $f_{\bm{\phi}}$. The training algorithm is 
presented in Algorithm \ref{alg: WGR}. We have implemented it in Pytorch.

To constrain the discriminator $f_{\bm{\phi}}$ to the class of 1-Lipschitz functions, a gradient penalty is used in   (\ref{Alg-D}), which is a slightly modified version of
 the algorithm proposed by \citep{NIPS2017_Gulrajani}. The difference is that we evaluate the gradients at the sample points
in the penalty, instead of using generated intermediate points.
Another approach that we have tried to enforce the Lipschitz constraint is the clipping method \citep{arjovsky17}, which also produces acceptable results, but seems to be less stable than the penalty method described in Algorithm \ref{alg: WGR}.
We use traversal to select the tuning parameters $\lambda_{l}$ and $\lambda_{w}$ that control the trade-off between label and word embeddings. The constraints are that $\lambda_{l}$ and $\lambda_{w}$ must add up to 1 and have one decimal place each.
In Section \ref{EXPs}, we demonstrate the effectiveness of this algorithm in various numerical experiments. However, we do not have a theoretical analysis of its convergence behavior and we leave this as an open problem for future research.

\section{Error analysis and convergence} 
\label{sec: thm}

In this section, we first develop an error decomposition, which decomposes the estimation errors into approximation errors and stochastic errors of the generator and discriminator.
We then derive non-asymptotic error bounds for WGR based on this error decomposition.

\subsection{Error decomposition} \label{sec: error_decomp}

We present a high-level description of the error decomposition for WGR.
For the estimator $\hat{g}$ define in (\ref{eqn:generatior_estimator}), the estimation error consists of two parts:
the $L_2$-based excess risk $\mathbb{E}\Vert \mathbb{E}_\eta\hat{g}(X,\eta)-\mathbb{E}_\eta g^*(X,\eta)\Vert^2$, and the integral probability metric \citep{muller} $d_{\mathcal{F}^1_B}(P_{X,\hat{g}}, P_{X,Y})$ defined as
\begin{equation*}
    d_{\mathcal{F}^1_B}(P_{X,\hat{g}}, P_{X,Y}) = \sup_{f \in \mathcal{F}^1_B} \{\mathbb{E}_{(X,\eta)}f(X,\hat{g}(X,\eta))-\mathbb{E}_{X,Y}f(X,Y)\},
\end{equation*}
where $\mathcal{F}^1_B =\{f: \mathbb{R}^{d+q} \mapsto \mathbb{R}, |f(z_1)-f(z_2)|\leq \|z_1-z_2\|, z_1, z_2\in \mathbb{R}^{d+q}, \|f\|_{\infty}\leq B\}$ is the bounded 1-Lipschitz function class.
Clearly,  if $P_{X,Y}$ has a bounded support,  $d_{\mathcal{F}^1_B}$ is   the 1-Wasserstein distance.
A function class $\mathcal{F}$ is called symmetric if $f\in\mathcal{F}$ implies $-f\in\mathcal{F}$.

We introduce a new error decomposition method in Lemma \ref{lem: error_decomp}, which  decomposes the estimation error into approximation error and stochastic error of the generator and discriminator.


\begin{lemma} \label{lem: error_decomp}
    Assume that the discriminator network class $\mathcal{D}$ is symmetric and the probability measures of $(X, Y)$ and $(X, g(X,\eta))$ are supported on a compact set $\Omega \subseteq \mathbb{R}^{d+q}$ for any $g \in \mathcal{G}$. Then, for the WGR estimator defined in (\ref{eqn:generatior_estimator}),
        \begin{align}
        &\mathbb{E}_{\mathcal{S}}\!\left\{\!\lambda_{\ell} \mathbb{E}\| \mathbb{E}_{\eta}\hat{g}(X,\eta)\!-\! \mathbb{E}_{\eta}g^*(X,\eta)\|^2\!+\!\lambda_{w}d_{\mathcal{F}^1_B}(P_{X,\hat{g}}, P_{X,Y})\!\right\}\nonumber\\
        & \quad \leq  \lambda_{l}\mathcal{E}_1+ 4\lambda_{l}\mathcal{E}_2+ 2\mathcal{E}_3+ 2\lambda_{w}\mathcal{E}_4+ 3\lambda_{w}\mathcal{E}_5+ 3\lambda_{w}\mathcal{E}_6,\label{lemma:error_inequ}
\end{align}
where $\mathcal{S}=\{(X_i,Y_i,\eta_{i})\}_{1\leq i \leq n} \cup \{\eta_{ij}\}_{1\leq i \leq n, 1\leq j \leq J}$ and
\begin{align*}
\mathcal{E}_1 &:= \mathbb{E}_{\mathcal{S}}\Big\{\mathbb{E}\Vert Y-\mathbb{E}_{\eta}g^*(X,\eta)\Vert^2+\mathbb{E}\Vert Y- \mathbb{E}_\eta \hat{g}(X,\eta)\Vert^2  -\frac{2}{n}\sum_{i=1}^{n}\| Y_i-\mathbb{E}_{\eta}\hat{g}(X_i,\eta)\|^2 \Big\},\\
        \mathcal{E}_2 &:= \mathbb{E}_{\mathcal{S}}\Big\{\sup_{g \in \mathcal{G}}\frac{1}{n}\sum_{i=1}^n | \|Y_i-\frac{1}{J}\sum_{j=1}^{J} g(X_i,\eta_{ij}) \|^2 -\|Y_i-\mathbb{E}_\eta g(X_i,\eta)\|^2 | \Big\},\\
        \mathcal{E}_3 &:= \inf_{g \in \mathcal{G}} \Big[\lambda_{l}\mathbb{E}\Vert \mathbb{E}_\eta g(X,\eta) -\mathbb{E}_{\eta}g^*(X,\eta)\Vert^2 +\lambda_{w}\sup_{f \in \mathcal{D}}\{\mathbb{E}f(X,g(X,\eta))-\mathbb{E}f(X,Y)\}\Big],\\
        \mathcal{E}_4 &:= \sup_{h \in \mathcal{F}^1_B} \inf_{f \in \mathcal{D}}\|h-f\|_{\infty},\\
        \mathcal{E}_5 &:= \mathbb{E}_{\mathcal{S}}\Big [\sup_{f \in \mathcal{D}} \Big\{\mathbb{E}f(X,Y)- \frac{1}{n}\sum_{i=1}^n f(X_i,Y_i)\Big\}\Big], \\
         \mathcal{E}_6 &:= \mathbb{E}_{\mathcal{S}}\Big [\sup_{f \in \mathcal{D}, g \in \mathcal{G}} \Big\{\mathbb{E}f(X,g(X,\eta))-\frac{1}{n}\sum_{i=1}^nf(X_i,g(X_i,\eta_i))\Big\}\Big ].
\end{align*}
\end{lemma}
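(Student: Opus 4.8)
The plan is to establish (\ref{lemma:error_inequ}) by a sequence of elementary ``add-and-subtract'' manipulations, exploiting the optimality of $(\hat{\bm{\theta}}, \hat{\bm{\phi}})$ for the empirical objective $\widehat{L}$. First I would fix notation: write $L(g) = \lambda_\ell L_{\text{LS}}(g) + \lambda_w \sup_{f\in\mathcal{D}} L_{\text{W}}(g,f)$ for the population objective restricted to the discriminator class $\mathcal{D}$, and let $g^\circ$ denote (an approximate) minimizer of the bracketed quantity defining $\mathcal{E}_3$ over $\mathcal{G}$. The left side of (\ref{lemma:error_inequ}) is, up to the term $\mathcal{E}_4$ that accounts for replacing the target class $\mathcal{F}^1_B$ by the network class $\mathcal{D}$, controlled by $L(\hat g) - \inf_g L(g)$ plus a bias term $\lambda_\ell \mathbb{E}\|Y - \mathbb{E}_\eta g^*(X,\eta)\|^2$; indeed, since $\mathbb{E}_\eta g^*(X,\eta)=\mathbb{E}(Y\mid X)$ is the $L_2$ projection, $L_{\text{LS}}(\hat g) = \mathbb{E}\|\mathbb{E}_\eta\hat g(X,\eta)-\mathbb{E}_\eta g^*(X,\eta)\|^2 + \mathbb{E}\|Y-\mathbb{E}_\eta g^*(X,\eta)\|^2$ by orthogonality, and the first summand on the left is exactly this first piece.

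The core of the argument is then the chain
\[
L(\hat g) \;\le\; \big(L(\hat g) - \widehat L(\hat g, \hat f)\big) + \big(\widehat L(\hat g, \hat f) - \widehat L(g^\circ, \hat f)\big) + \big(\widehat L(g^\circ, \hat f) - L(g^\circ)\big) + L(g^\circ),
\]
where the middle difference is $\le 0$ by the minimax optimality of $\hat g$ (it minimizes $\max_\phi \widehat L$, so $\widehat L(\hat g,\hat f)\le \max_\phi \widehat L(g^\circ, f_\phi)$, which I would bound by $\widehat L(g^\circ, \hat f)$ plus a discriminator-suboptimality term that ultimately feeds into $\mathcal{E}_5, \mathcal{E}_6$). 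I would split each of the two outer differences into its $\lambda_\ell$-part and $\lambda_w$-part. For the least-squares part, the gap between population $L_{\text{LS}}$ and its empirical Monte-Carlo version $\widehat L_{\text{LS}}$ is split once more: the replacement of $\mathbb{E}_\eta g(X,\eta)$ by $J^{-1}\sum_j g(X,\eta_{ij})$ produces the $J$-averaging fluctuation term $\mathcal{E}_2$ (appearing with a factor $4$ after a uniform-over-$\mathcal{G}$ bound and symmetrization-type doubling), and the remaining data-average-versus-expectation gap, evaluated at the \emph{random} $\hat g$, is handled by the one-sided trick of bounding it by $\mathbb{E}[\cdot] + \mathbb{E}[\cdot] - \frac{2}{n}\sum\|\cdot\|^2$, which is precisely the definition of $\mathcal{E}_1$. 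For the Wasserstein part, I would use that $\mathcal{D}$ is symmetric so that $\sup_{f\in\mathcal{D}}\{\mathbb{E}f - \frac1n\sum f\}$ controls the deviation in both directions; the term evaluated along $(X_i,Y_i)$ gives $\mathcal{E}_5$, the term along $(X_i,g(X_i,\eta_i))$ uniformly over $g\in\mathcal{G}$ gives $\mathcal{E}_6$, and passing from $d_{\mathcal{F}^1_B}$ to $\sup_{f\in\mathcal{D}}$ costs $2\mathcal{E}_4$ by the $\|h-f\|_\infty$ approximation bound (the factor $2$ from applying it at both $P_{X,\hat g}$ and $P_{X,Y}$). Collecting the numerical constants from these splittings yields the coefficients $\lambda_\ell, 4\lambda_\ell, 2, 2\lambda_w, 3\lambda_w, 3\lambda_w$ in (\ref{lemma:error_inequ}).

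The main obstacle I anticipate is bookkeeping the one-sided inequalities correctly: because $\hat g$ and $\hat f$ are data-dependent, one cannot simply take expectations inside a supremum, so at each step where a random element appears one must either pass to a uniform bound over the whole class (the source of the $\sup_{g\in\mathcal{G}}$ in $\mathcal{E}_2$ and $\mathcal{E}_6$) or use the self-bounding $a+b-\frac2n\sum c$ device (the source of $\mathcal{E}_1$), and getting the direction of every inequality and the resulting multiplicities of each $\mathcal{E}_j$ to match the stated constants is the delicate part. A secondary point is verifying that the compact-support assumption legitimately lets us identify $d_{\mathcal{F}^1_B}$ with the $1$-Wasserstein distance and ensures all the relevant expectations are finite, but this is routine given the hypotheses. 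Everything else is term-by-term triangle-inequality estimation.
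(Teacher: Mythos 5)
Your proposal is correct and follows essentially the same route as the paper's proof: you use the same orthogonality identity $L_{\text{LS}}(\hat g)=\mathbb{E}\|\mathbb{E}_\eta\hat g-\mathbb{E}_\eta g^*\|^2+\mathbb{E}\|Y-\mathbb{E}_\eta g^*\|^2$ to convert the excess risk into a population objective difference, introduce the same oracle comparator (your $g^\circ$ is the paper's $\bar g$), appeal to the empirical minimax optimality of $\hat g$, apply uniform-over-class bounds to obtain $\mathcal{E}_2,\mathcal{E}_5,\mathcal{E}_6$, use the self-bounding device for $\mathcal{E}_1$, exploit symmetry of $\mathcal{D}$, and pay $2\mathcal{E}_4$ to pass from $\mathcal{F}^1_B$ to $\mathcal{D}$. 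The one point to tighten is that the middle term in your telescope is not literally $\le 0$ --- minimax optimality gives $\widehat{L}(\hat g,\hat f)\le\sup_{\phi}\widehat{L}(g^\circ,f_\phi)$, not $\le\widehat{L}(g^\circ,\hat f)$, so the comparator in the chain should be $\sup_{\phi}\widehat{L}(g^\circ,f_\phi)$ --- but you flag this yourself, and the resulting discriminator-fluctuation term is indeed absorbed into $\mathcal{E}_5,\mathcal{E}_6$ exactly as in the paper.
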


According to their definitions, $\mathcal{E}_1, \mathcal{E}_2, \mathcal{E}_5$ and $\mathcal{E}_6$ are stochastic errors;  $\mathcal{E}_3$ and $\mathcal{E}_4$ are approximation errors.
%
Lemma \ref{lem: error_decomp} provides  a general error decomposition method, which covers the error decomposition inequality in \citet{jiao2021deep}  
 for the traditional nonparametric regression as a special case (corresponding to the case that $\lambda_{\ell}=1, \lambda_{w}=0$ and $J=\infty$). It can also be utilized for the error analysis for the conditional WGAN 
(corresponding to the case that $\lambda_{\ell}=0, \lambda_{w}=1$).
Moreover, when $\lambda_{\ell}=0$ and $\lambda_{w}=1$,
our error decomposition result in (\ref{lemma:error_inequ})  is in line with that in Lemma 9 in \citet{huang2022error} 
for the general GANs.
The main difference is that the upper bound of $\mathcal{E}_6$  in (\ref{lemma:error_inequ})  depends on the sample size $n$, while it is determined by the sample size of the generated noise $\eta$ in \citet{huang2022error}. This is the key  difference in the theoretical analysis  between conditional WGAN and general WGAN.


\subsection{Non-asymptotic error bounds}\label{sec: nonasym_error_bound}

More notations are needed.
For $x\in\mathbb{R}^d$, its $\ell_1,\ell_2,\ell_{\infty}$-norm is defined as
$\|x\|_1=\sum_{k=1}^d |x_k|$, $\|x\|_2=(\sum_{k=1}^d |x_k|^2)^{1/2}$ and $\|x\|_{\infty}=\max_{1\leq k \leq d} |x_k|$, respectively.
Let $\mathbb{N}$ be the set of positive integers and $\mathbb{N}_0\coloneqq\mathbb{N}\cup \{0\}$.
The maximum and minimum of $A$ and $B$ are denoted by $A\vee B$ and $A \wedge B $.
Let $\lfloor A \rfloor$ be the largest integer strictly smaller than $A$ and $\lceil A\rceil$ be the smallest integer strictly larger than $A$.
For any $\beta>0$ and a set $\Gamma \subseteq \mathbb{R}^{m+d}$, the H\"older class of functions $\mathcal{H}^{\beta}(\Gamma,B_1)$ with a constant $0<B_1< \infty$ is defined as
\begin{small}
\begin{align*}
    \mathcal{H}^{\beta}(\Gamma,B_1) = \left\{f:\Gamma \rightarrow \mathbb{R}, \max_{\|\alpha\|_1\leq \lfloor \beta\rfloor}\|D^{\alpha}f\|_{\infty}\leq B_1,\max_{\|\alpha\|_1=\lfloor \beta\rfloor}\sup_{x, y \in \Gamma,
    x\neq y}\frac{|\partial^{\alpha}f(x)-\partial^{\alpha}f(y)|}{\|x-y\|^{r}}\leq B_1\right\},
\end{align*}
\end{small}
where $\partial^{\alpha}=\partial^{\alpha_1}\cdots \partial^{\alpha_{m+d}}$ with $\alpha=(\alpha_1,\ldots,\alpha_{m+d})^{\top} \in \mathbb{N}_0^{m+d}$.

The following assumptions are needed.
\begin{condition}\label{C1}
The probability measures of $(X, Y)$ and $(X, g(X,\eta))$ are
supported on a compact set $\Omega_X \times \Omega_Y \subseteq [-B_0,B_0]^{d+q}\subseteq \mathbb{R}^{d+q}$ for any $g \in \mathcal{G}$, where $0<B_0< \infty$ is a constant.
\end{condition}
\begin{condition}\label{C2}
The probability measure of $\eta$ is
supported on $\Omega_{\eta} \subseteq [-B_0,B_0]^m$.
\end{condition}
\begin{condition}\label{C3}
For $g^*=(g^*_1,\ldots,g^*_q)^{\top}$, $g^*_k \in \mathcal{H}^{\beta}(\Omega_{\eta}\times \Omega_X,B_1), k=1,\ldots, q$, where $\beta>0$ and $0<B_1<\infty$.
\end{condition}
\begin{condition}\label{C4}
For any $x\in \Omega_X$, there exists a vector $\eta_x \in \Omega_{\eta}$ such that for any $g,\tilde{g} \in \mathcal{G}$,
\begin{equation*}
    \|\mathbb{E}_{\eta}g(x,\eta)-\mathbb{E}_{\eta}\tilde{g}(x,\eta)\|_1 \leq \|g(x,\eta_x)-\tilde{g}(x,\eta_x)\|_1.
\end{equation*}
\end{condition}

Let $W,\bar{W},\bar{H} \in \mathbb{N}$, which may depend on $n$. 
We also make the following assumptions on the network classes $\mathcal{D}$ and $\mathcal{G}$.

\begin{ND}\label{ND1}
The discriminator ReLU network class $\mathcal{D}=\mathcal{NN}(d+q,1,W_{\mathcal{D}}, H_{\mathcal{D}}) \cap \text{Lip} ([-B_0,$\\$B_0]^{d+q};K_{\mathcal{D}})$ has width $W_{\mathcal{D}}=W^{d+q}\{9(W+1)+5(d+q)-1\}$, depth $H_{\mathcal{D}}=3+14(d+q)(d+q-1)$
and Lipschitz constant $K_{\mathcal{D}}\leq 54B2^{d+q}(d+q)^{1/2}W^2$. 
\end{ND}

\begin{NGS}\label{NG1}
The generator ReLU network class $\mathcal{G}=\mathcal{NN}(m+d,q, W_{\mathcal{G}}, H_{\mathcal{G}})$ has width $W_{\mathcal{G}}=38q(\lfloor\beta\rfloor+1)^23^{(m+d)}(m+d)^{\lfloor\beta\rfloor+1}\bar{W}\lceil\log_2(8\bar{W})\rceil$ and depth $H_{\mathcal{G}}=21(\lfloor\beta\rfloor+1)^2\bar{H}$\\$\lceil\log_2(8\bar{H})\rceil+2(m+d)$. 
\end{NGS}
Conditions \ref{C1}-\ref{C3} require that $P_{X,Y}$, $P_{X,g}$ and $P_{\eta}$ have a bounded support. Condition \ref{C3} is a smoothness condition for $g^*$ in (\ref{noise-out1}). Condition \ref{C4} is a technical condition.
The upper bound of the Lipschitz constant in {\bf ND} \ref{ND1} is needed to achieve small approximation error.
More details can be found in
Lemma \ref{lem: approx_error_D} in the Appendix.
To lighten the notations, we define the following two quantities:
\begin{align*}
    a &:= \frac{\beta}{2\beta+\{3(m+d)\}\vee \{2\beta(d+q+1)\}},\\
    b &:= \frac{3(m+d)}{2[2\beta+\{3(m+d)\}\vee \{2\beta(d+q+1)\}]}.
\end{align*}

\begin{theorem}\label{thm: non-asymptotic_LS}
    Suppose that Conditions \ref{C1} - \ref{C4} hold and the network parameters of $\mathcal{D}$ and $\mathcal{G}$  satisfy ${\bf ND}$ \ref{ND1} and ${\bf NG}$ \ref{NG1} with $W=\lceil n^a\rceil$, $\bar{W}=\lceil n^b/\log^2 n \rceil$ and $\bar{H}=\lceil \log n\rceil$. Then, for $J\gtrsim n$ and  given weights $\lambda_{l}$ and $\lambda_{w}$ satisfying $0<\lambda_{l}, \lambda_{w}< 1$ and $\lambda_{l}+\lambda_{w}=1$, 
   we have
        \begin{align*}
        \mathbb{E}_{\mathcal{S}}\left\{\mathbb{E}_X\| \mathbb{E}_{\eta}\hat{g}(X,\eta)- \mathbb{E}_{\eta}g^*(X,\eta)\|^2\right\} \leq C_1 n^{-\frac{\beta}{2\beta+\{3(m+d)\}\vee \{2\beta(d+q+1)\}}}(\log n)^{\frac{2\beta}{m+d}\vee1},
    \end{align*}
    where $C_1$ is a positive constant independent of $n$ and $J$. 
\end{theorem}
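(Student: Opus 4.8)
The plan is to combine the error decomposition of Lemma~\ref{lem: error_decomp} with neural-network approximation bounds and empirical-process bounds, and then to verify that the prescribed choices of $W$, $\bar W$ and $\bar H$ equalize the two competing sources of error. Since the zero function lies in $\mathcal{F}^1_B$, the integral probability metric $d_{\mathcal{F}^1_B}(P_{X,\hat g}, P_{X,Y})$ is nonnegative, so the left-hand side of the theorem is at most $\lambda_\ell^{-1}$ times the left-hand side of \eqref{lemma:error_inequ}. Because $\lambda_\ell,\lambda_w$ are fixed in $(0,1)$, it then suffices to bound each of $\mathcal{E}_1,\dots,\mathcal{E}_6$, up to a constant depending only on $(\lambda_\ell,\lambda_w,d,q,m,\beta,B_0,B_1,B)$, by $n^{-\beta/(2\beta+\{3(m+d)\}\vee\{2\beta(d+q+1)\})}(\log n)^{\frac{2\beta}{m+d}\vee 1}$.

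For the two approximation terms, I would argue as follows. For $\mathcal{E}_3$, restrict the infimum to a $g\in\mathcal{G}$ whose $q$ coordinate functions approximate the H\"older generator $g^*$ of Condition~\ref{C3} in sup-norm; the appendix's ReLU approximation lemma for $\mathcal{H}^\beta$ functions on the $(m+d)$-dimensional box, instantiated with the width and depth of {\bf NG}~\ref{NG1} (so that $\bar W$ and $\bar H$ are the effective resolution parameters), yields a sup-norm error that is a negative power of $\bar W\bar H$. The noise-outsourcing identity (\ref{noise-out1}), which gives $\mathbb{E}f(X,g^*(X,\eta))=\mathbb{E}f(X,Y)$ for every integrable $f$, together with the $1$-Lipschitz and boundedness properties of $f\in\mathcal{D}$ and Jensen's inequality for $\mathbb{E}_\eta g^*-\mathbb{E}_\eta g$, lets this single sup-norm error control both the $L_2$ piece of $\mathcal{E}_3$ (entering quadratically) and the Wasserstein piece (entering linearly), so $\mathcal{E}_3$ inherits that rate. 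For $\mathcal{E}_4$, I would invoke Lemma~\ref{lem: approx_error_D}: bounded $1$-Lipschitz functions on $[-B_0,B_0]^{d+q}$ are approximated to accuracy $O(W^{-1})$ by ReLU networks of the form in {\bf ND}~\ref{ND1}, and the stated bound $K_{\mathcal{D}}\lesssim 54B\,2^{d+q}(d+q)^{1/2}W^2$ on the Lipschitz constant is exactly what keeps the approximant inside $\mathcal{D}$, so $\mathcal{E}_4\lesssim W^{-1}$.

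For the stochastic terms, I would use empirical-process bounds driven by the pseudo-dimensions of $\mathcal{G}$ and $\mathcal{D}$, which under {\bf NG}~\ref{NG1} and {\bf ND}~\ref{ND1} are polynomial in $\bar W\bar H$ and in $W^{d+q+1}$ respectively, up to logarithmic factors; here Condition~\ref{C4} is what transfers covering-number control from $\mathcal{G}$ to the class $\{x\mapsto\mathbb{E}_\eta g(x,\eta):g\in\mathcal{G}\}$ appearing in $\mathcal{E}_1$ and $\mathcal{E}_2$. The key term is $\mathcal{E}_1$: writing it through the residual $\xi=Y-\mathbb{E}_\eta g^*(X,\eta)$ (which has $\mathbb{E}[\xi\mid X]=0$) and the discrepancy $\Delta_g=\mathbb{E}_\eta g(X,\eta)-\mathbb{E}_\eta g^*(X,\eta)$, the quantity reorganizes into a self-normalized empirical process in which a linear-in-$\xi$ term competes with $\mathbb{E}\|\Delta_g\|^2-\frac{2}{n}\sum_i\|\Delta_g(X_i)\|^2$, whose negative drift provides localization, plus a mean-zero remainder; boundedness from Conditions~\ref{C1}--\ref{C2} and a Bernstein-type peeling over the $L_2$-size of $\Delta_g$ then deliver the \emph{fast} rate $\mathcal{E}_1\lesssim\mathrm{Pdim}(\mathcal{G})(\log n)/n$ rather than the slow $\sqrt{\mathrm{Pdim}(\mathcal{G})/n}$. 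Term $\mathcal{E}_2$ is the Monte-Carlo error of replacing $\mathbb{E}_\eta g$ by its $J$-average; expanding the squared norms, centering over the $\eta_{ij}$ and using boundedness shows $\mathbb{E}_{\mathcal{S}}\mathcal{E}_2$ is of order $J^{-1/2}$ up to complexity factors, hence negligible once $J\gtrsim n$. Terms $\mathcal{E}_5$ and $\mathcal{E}_6$ are symmetrization/Dudley bounds for the discriminator class and for the composed class $\{(x,\eta)\mapsto f(x,g(x,\eta)):f\in\mathcal{D},\,g\in\mathcal{G}\}$, where both $K_{\mathcal{D}}$ and the complexity of $\mathcal{G}$ enter the covering number, yielding $\mathcal{E}_5,\mathcal{E}_6\lesssim\sqrt{\{\mathrm{Pdim}(\mathcal{D})\vee\mathrm{Pdim}(\mathcal{G})\}(\log n)/n}$ up to further logarithmic and $K_{\mathcal{D}}$ factors.

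Finally, I would substitute $W=\lceil n^a\rceil$, $\bar W=\lceil n^b/\log^2 n\rceil$ and $\bar H=\lceil\log n\rceil$ and check two balances: on the generator side, $\mathcal{E}_1$ (and $\mathcal{E}_6$ through $\mathrm{Pdim}(\mathcal{G})$) against $\mathcal{E}_3$, which equalize at a rate whose exponent features $3(m+d)$; on the discriminator side, $\mathcal{E}_5$ and $\mathcal{E}_6$ (through $\mathrm{Pdim}(\mathcal{D})$) against $\mathcal{E}_4$, which equalize at a rate whose exponent features $2\beta(d+q+1)$; taking the slower (larger-denominator) of the two gives the stated power of $n$, while tracking how the $\log^2 n$ deflation built into $\bar W$ interacts with the logarithmic factors of the stochastic bounds produces the exponent $\frac{2\beta}{m+d}\vee 1$ on $\log n$. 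I expect the main obstacles to be (i) establishing the fast $\mathrm{Pdim}(\mathcal{G})(\log n)/n$ rate for $\mathcal{E}_1$ by localization, since the loss is vector-valued and the empirical average carries the extra Monte-Carlo randomness in the $\eta_{ij}$, and (ii) the careful bookkeeping of logarithmic factors — from the covering-number estimates, from the $3^{m+d}$ and $\lceil\log_2(\cdot)\rceil$ terms in the architectures, and from the definitions of $\bar W,\bar H$ — so that the power of $\log n$ comes out exactly $\frac{2\beta}{m+d}\vee 1$ and the leading power of $n$ is sharp.
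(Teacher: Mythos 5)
Your plan matches the paper's architecture closely: both routes go through Lemma~\ref{lem: error_decomp}, drop the IPM term from the left-hand side, bound $\mathcal{E}_1$--$\mathcal{E}_6$ through approximation and empirical-process lemmas, and then balance the generator-side and discriminator-side exponents via the choices of $W$, $\bar W$, $\bar H$. Two points where your route diverges from what the paper actually does. First, for $\mathcal{E}_1$ you propose a localized Bernstein/peeling argument to get the \emph{fast} rate $\mathrm{Pdim}(\mathcal{G})(\log n)/n$; but the paper's supporting lemma (Lemma~\ref{lem: stochastic_error_G_LS1}) only proves the \emph{slow} rate $CB_0q\sqrt{H_\mathcal{G}S_\mathcal{G}\log S_\mathcal{G}(\log n+\log B_0)/n}$, and in fact the slow rate already suffices: with $\bar W=\lceil n^b/\log^2 n\rceil$, the exponent of $n$ in the slow $\mathcal{E}_1$ bound is $b-\tfrac12$, and by the definitions of $a,b$ one has $b-\tfrac12\le -a$ in both regimes, so $\mathcal{E}_1$ is already $\lesssim n^{-a}$ up to logarithms without any localization. (The paper's in-line formula in the Theorem~\ref{thm: non-asymptotic_LS} proof does display a fast rate for $\mathcal{E}_1$, but that does not follow from its own cited lemma; fortunately nothing in the final rate depends on it. Also note the paper deliberately separates the Monte-Carlo randomness over the $\eta_{ij}$ into $\mathcal{E}_2$, so your worry about $\eta$-noise contaminating the localization for $\mathcal{E}_1$ does not arise — $\mathcal{E}_1$ is stated with the exact $\mathbb{E}_\eta$.) Second, for $\mathcal{E}_3$ you let the $L_2$ piece enter quadratically in the sup-norm approximation error via Jensen, whereas the paper (Lemma~\ref{lem: bound e3}) bounds $\mathbb{E}\|\mathbb{E}_\eta g-\mathbb{E}_\eta g^*\|^2$ \emph{linearly} in the 1-Wasserstein discrepancy using boundedness, so that both pieces of $\mathcal{E}_3$ are controlled by the single quantity $\inf_g\sup_{f\in\mathcal{F}^1_B}\{\mathbb{E}f(X,g(X,\eta))-\mathbb{E}f(X,Y)\}$ with a combined prefactor $K_{\mathcal{D}}\lambda_w+2B_0q^2\lambda_l$; since the $K_\mathcal{D}$-weighted Wasserstein piece dominates anyway, the two bookkeeping choices give the same final exponent. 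With those caveats, your bounding of $\mathcal{E}_4$ via Lemma~\ref{lem: approx_error_D}, of $\mathcal{E}_2$ at order $\sqrt{\log J/J}$ (negligible under $J\gtrsim n$), of $\mathcal{E}_5,\mathcal{E}_6$ by pseudo-dimension and covering numbers, and the two-way balance giving the $\{3(m+d)\}\vee\{2\beta(d+q+1)\}$ denominator and the $(\log n)^{\frac{2\beta}{m+d}\vee 1}$ factor are all exactly what the paper does.
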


Theorem \ref{thm: non-asymptotic_LS} establishes a non-asymptotic upper bound for the excess risk of WGR using deep neural networks.
The convergence rates in Theorem \ref{thm: non-asymptotic_LS} is slightly slower than $O(n^{-\frac{2\beta}{2\beta+d}}\log^{c}n)$, the rate  in 
 deep least squares nonparametric regression as in \cite{jiao2021deep}. 
This is because
in
 nonparametric regression,
there is no distributional matching constraint, thus the noise vector $\eta$ is not involved and a faster convergence rate can be achieved.
In our proposed  framework, we are not only interested in estimating  the  mean regression function, but also the conditional generator, which involves the noise vector from a reference distribution.
This increases the dimensionality of the problem and results in a slower convergence rate.

We next establish a non-asymptotic error bound for the integral probability metric $d_{\mathcal{F}^1_B}(P_{X,\hat{g}}, P_{X,Y})$.

\begin{theorem}\label{thm: non-asymptotic_WGAN}
     Suppose that those conditions of Theorem \ref{thm: non-asymptotic_LS} hold. Then, for
      $J\gtrsim n$ and { given weights $\lambda_{l}$ and $\lambda_{w}$ satisfying $0\leq\lambda_{l}< 1,0<\lambda_{w}\leq1$ and $\lambda_{l}+\lambda_{w}=1$}, 
  we have
    \begin{align*}
       \mathbb{E}_{\mathcal{S}}\left\{d_{\mathcal{F}^1_B}(P_{X,\hat{g}}, P_{X,Y})\right\} \leq  C_2 n^{-\frac{\beta}{2\beta+\{3(m+d)\}\vee \{2\beta(d+q+1)\}}}(\log n)^{\frac{2\beta}{m+d}\vee1},
    \end{align*}
    where $C_2$ is a positive constant independent of $n$ and $J$. 
\end{theorem}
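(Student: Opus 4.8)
The plan is to read the bound straight off the error decomposition in Lemma~\ref{lem: error_decomp}. Since $\mathcal{F}^1_B$ is symmetric and contains the zero function, $d_{\mathcal{F}^1_B}(P_{X,\hat{g}},P_{X,Y})\ge 0$, and the least-squares term $\lambda_{\ell}\,\mathbb{E}\|\mathbb{E}_{\eta}\hat{g}(X,\eta)-\mathbb{E}_{\eta}g^*(X,\eta)\|^2$ on the left of (\ref{lemma:error_inequ}) is also nonnegative. Dropping the latter and dividing by $\lambda_{w}>0$ gives
\begin{align*}
\mathbb{E}_{\mathcal{S}}\{d_{\mathcal{F}^1_B}(P_{X,\hat{g}},P_{X,Y})\}\le \frac{1}{\lambda_{w}}\big(\lambda_{\ell}\mathcal{E}_1+4\lambda_{\ell}\mathcal{E}_2+2\mathcal{E}_3+2\lambda_{w}\mathcal{E}_4+3\lambda_{w}\mathcal{E}_5+3\lambda_{w}\mathcal{E}_6\big),
\end{align*}
so it suffices to bound $\mathcal{E}_1,\dots,\mathcal{E}_6$. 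This is exactly the right-hand side already controlled in the proof of Theorem~\ref{thm: non-asymptotic_LS}; the only change is which nonnegative quantity we keep on the left, so the individual bounds carry over verbatim. Note in particular that when $\lambda_{\ell}=0$ the terms $\mathcal{E}_1$ and $\mathcal{E}_2$ vanish, which is why the hypothesis here is allowed to include $\lambda_{\ell}=0$.

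Concretely, under Conditions~\ref{C1}--\ref{C4} and the network specifications {\bf ND}~\ref{ND1} and {\bf NG}~\ref{NG1}, I would invoke: (i) the generator approximation bound for $\mathcal{E}_3$, obtained by approximating each Hölder component $g^*_k\in\mathcal{H}^{\beta}$ by a ReLU network in $\mathcal{G}$, where for the $\lambda_{w}$ piece one passes from the $f$-difference to an $L^1$ distance between generators using $1$-Lipschitzness of the critic together with the identity $(X,Y)=(X,g^*(X,\eta))$ a.s.\ from (\ref{noise-out1}), at the cost of a factor $K_{\mathcal{D}}\lesssim W^2$; (ii) the discriminator approximation bound for $\mathcal{E}_4$, the neural-network approximation error for the bounded $1$-Lipschitz target class, furnished by Lemma~\ref{lem: approx_error_D}; (iii) the stochastic-error bounds for $\mathcal{E}_1$, $\mathcal{E}_5$, $\mathcal{E}_6$, which are empirical-process terms over $\mathcal{G}$, $\mathcal{D}$, and $\mathcal{D}\times\mathcal{G}$ respectively, controlled by covering-number/Rademacher arguments using the compactness of $\Omega$ from Condition~\ref{C1} and the explicit width/depth/Lipschitz bounds of $\mathcal{D}$ in {\bf ND}~\ref{ND1}; and (iv) the bound on $\mathcal{E}_2$, a Monte-Carlo error of order $J^{-1/2}$ times a complexity factor for $\mathcal{G}$, which is negligible relative to the main rate once $J\gtrsim n$, so it contributes nothing to the exponent.

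Finally I would substitute $W=\lceil n^{a}\rceil$, $\bar{W}=\lceil n^{b}/\log^2 n\rceil$, $\bar{H}=\lceil\log n\rceil$ and add the pieces. Each approximation error is a negative power of the network size and each stochastic error a positive power (inflated by $K_{\mathcal{D}}\lesssim W^2$); the exponents $a$ and $b$ are precisely the solution of this bias--variance trade-off, and carrying it out yields $C_2\, n^{-\beta/(2\beta+\{3(m+d)\}\vee\{2\beta(d+q+1)\})}(\log n)^{(2\beta/(m+d))\vee 1}$ with $C_2$ absorbing $1/\lambda_{w}$ and the dimension/smoothness constants. I expect the main obstacle to be the same as in Theorem~\ref{thm: non-asymptotic_LS}: balancing the $K_{\mathcal{D}}$-amplified generator approximation error in $\mathcal{E}_3$ against the discriminator approximation error $\mathcal{E}_4$ and the discriminator-indexed stochastic errors $\mathcal{E}_5,\mathcal{E}_6$ (whose scale also grows with $K_{\mathcal{D}}$) so that no single term dominates, together with the careful bookkeeping of the logarithmic factors introduced by the $\lceil\log_2(\cdot)\rceil$ terms in the network widths and depths.
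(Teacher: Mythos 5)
Your proposal matches the paper's proof: both drop the nonnegative least-squares term from the left side of the error decomposition in Lemma~\ref{lem: error_decomp}, divide by $\lambda_w>0$, and reuse the bounds on $\mathcal{E}_1,\ldots,\mathcal{E}_6$ already established in the proof of Theorem~\ref{thm: non-asymptotic_LS}. Your remark about the $\lambda_\ell=0$ boundary case is a minor improvement in care over the paper's wording, but the argument is the same.
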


The non-asymptotic error bounds in Theorem \ref{thm: non-asymptotic_LS} and Theorem \ref{thm: non-asymptotic_WGAN} are established for  fixed positive weights $\lambda_{l}$ and $\lambda_{w}$.
 In this case, we obtain the same convergence rate of the excess risk
  $\mathbb{E}\Vert \mathbb{E}_\eta\hat{g}(X,\eta)-\mathbb{E}_\eta g^*(X,\eta)\Vert^2$ and the integral probability metric $d_{\mathcal{F}^1_B}(P_{X,\hat{g}}, P_{X,Y})$.
Note that the conditional GAN is involved in our proposed procedure, thus the joint stochastic error of the discriminator and generator is affected by the dimension of the noise vector $\eta,$
 leading to a slower convergence rate compared with the one in Theorem 5 in \citet{huang2022error} for GAN estimators.

 Next, we present a non-asymptotic error bound for varying weights $\lambda_{l}$ and $\lambda_{w}$, which can diverge with the sample size $n$.


\begin{theorem}\label{thm: varying_weights}
    Suppose that Conditions \ref{C1} - \ref{C4} hold and the network parameters of $\mathcal{D}$ and $\mathcal{G}$  satisfy ${\bf ND}$ \ref{ND1} and ${\bf NG}$ \ref{NG1} with $W=\lceil n^{a} \rceil$, $\bar{W}=\lceil n^{b}/\log^2 n\rceil$ and $\bar{H}=\lceil \log n\rceil$.
    Then, for $\lambda_{l}>0,\lambda_{w}>0$ satisfying $\lambda_{l}+\lambda_{w}=1$ and $\lambda_{w}=O(n^{-1/(d+q+2)})$,
    when $2\beta(d+q+1)\geq 3(m+d)+\beta$ and $J\gtrsim n^{\{3(m+d)+6\beta\}/\{4\beta(d+q+2)\}}$,  we have
    \begin{align*}
        \mathbb{E}_{\mathcal{S}}\left\{\mathbb{E}\| \mathbb{E}_{\eta}\hat{g}(X,\eta)- \mathbb{E}_{\eta}g^*(X,\eta)\|^2\right\}
        \leq C_3 n^{-\frac{3}{2(d+q+2)}}(\log n)^{\frac{2\beta}{m+d}\vee2},
    \end{align*}
    where $C_3$
    is a positive constant independent of $n$ and $J$. Moreover, as $n\rightarrow \infty$,
    \begin{align*}
        \mathbb{E}_{\mathcal{S}}\left\{d_{\mathcal{F}^1_B}(P_{X,\hat{g}}, P_{X,Y})\right\}\rightarrow 0.
    \end{align*}
\end{theorem}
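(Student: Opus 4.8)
To prove Theorem~\ref{thm: varying_weights}, the plan is to run the same machinery that underlies Theorems~\ref{thm: non-asymptotic_LS} and \ref{thm: non-asymptotic_WGAN}, but to keep the dependence on the weights $\lambda_\ell,\lambda_w$ explicit throughout and then exploit that $\lambda_w\to 0$. The starting point is the master inequality of Lemma~\ref{lem: error_decomp},
\[
\mathbb{E}_{\mathcal{S}}\!\left\{\lambda_\ell\,\mathbb{E}\|\mathbb{E}_\eta\hat g(X,\eta)-\mathbb{E}_\eta g^*(X,\eta)\|^2+\lambda_w\,d_{\mathcal{F}^1_B}(P_{X,\hat g},P_{X,Y})\right\}\le \lambda_\ell\mathcal{E}_1+4\lambda_\ell\mathcal{E}_2+2\mathcal{E}_3+2\lambda_w\mathcal{E}_4+3\lambda_w\mathcal{E}_5+3\lambda_w\mathcal{E}_6 .
\]
Since $\mathcal{E}_3=\inf_{g\in\mathcal{G}}\{\lambda_\ell(\cdot)_{\mathrm{LS}}+\lambda_w(\cdot)_{\mathrm{W}}\}$, bounding $\mathcal{E}_3$ by evaluating at a single well-chosen $g\in\mathcal{G}$ shows that the right-hand side splits as $\lambda_\ell A_n+\lambda_w B_n$, where $A_n$ collects the least-squares stochastic and approximation errors ($\mathcal{E}_1$, $\mathcal{E}_2$, and the LS part of $\mathcal{E}_3$) and $B_n$ collects the Wasserstein stochastic and approximation errors ($\mathcal{E}_4$, $\mathcal{E}_5$, $\mathcal{E}_6$, and the Wasserstein part of $\mathcal{E}_3$). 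All of these are nonnegative.

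Next I would bound $A_n$ and $B_n$ term by term, reusing the estimates already derived for the previous two theorems and tracking the exponents under the present hypotheses. Write $a,b$ as defined before the statement; under $2\beta(d+q+1)\ge 3(m+d)+\beta$ one has $\{3(m+d)\}\vee\{2\beta(d+q+1)\}=2\beta(d+q+1)$, so $a=\tfrac1{2(d+q+2)}$ and the sup-norm error of approximating each Hölder component $g_k^*$ (Condition~\ref{C3}) by the generator network of {\bf NG}~\ref{NG1} with $\bar W=\lceil n^b/\log^2 n\rceil$, $\bar H=\lceil\log n\rceil$ is of order $n^{-3a}$ up to logs; hence the LS part of $\mathcal{E}_3$ (a square) is of lower order, and, using Condition~\ref{C4} where needed, its Wasserstein part contributes at order $n^{-a}$ up to logs to $B_n$. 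The discriminator approximation error $\mathcal{E}_4$ is controlled by Lemma~\ref{lem: approx_error_D} via {\bf ND}~\ref{ND1}, and the discriminator-side stochastic errors $\mathcal{E}_5,\mathcal{E}_6$ are controlled by the empirical Rademacher complexity of $\mathcal{D}$ and of $\{f(\cdot,g(\cdot,\eta)):f\in\mathcal{D},g\in\mathcal{G}\}$, yielding $B_n\lesssim n^{-a}(\log n)^{c}$ exactly as in Theorem~\ref{thm: non-asymptotic_WGAN}. For the LS stochastic terms, $\mathcal{E}_1$ is handled by the localized empirical process for the quadratic loss over $\mathcal{G}$, whose pseudo-dimension-over-$n$ term is $\asymp n^{2b-1}$ up to logs; the hypothesis $2\beta(d+q+1)\ge 3(m+d)+\beta$ is precisely the threshold $1-2b\ge 3a$, so this does not exceed $n^{-3a}$ up to logs. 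Finally, $\mathcal{E}_2$, the Monte Carlo error of replacing $\mathbb{E}_\eta g$ by the $J$-average, is $\lesssim n^{-3a}(\log n)^{c}$ precisely because $J\gtrsim n^{\{3(m+d)+6\beta\}/\{4\beta(d+q+2)\}}=n^{\,b+3a}$. Altogether $A_n\lesssim n^{-3a}(\log n)^{c}$ and $B_n\lesssim n^{-a}(\log n)^{c}$.

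Then I would plug these into the split inequality. Dropping the nonnegative term $\lambda_w d_{\mathcal{F}^1_B}$ and dividing by $\lambda_\ell$, which is bounded below (by $\tfrac12$ for $n$ large, since $\lambda_w=O(n^{-1/(d+q+2)})\to 0$), gives
\[
\mathbb{E}_{\mathcal{S}}\{\mathbb{E}\|\mathbb{E}_\eta\hat g(X,\eta)-\mathbb{E}_\eta g^*(X,\eta)\|^2\}\ \lesssim\ A_n+\lambda_w B_n\ \lesssim\ n^{-3a}(\log n)^{c}+\lambda_w\, n^{-a}(\log n)^{c},
\]
and with $\lambda_w=O(n^{-1/(d+q+2)})=O(n^{-2a})$ the second summand is also $O(n^{-3a}(\log n)^{c})$, which is the claimed rate $n^{-3/(2(d+q+2))}$ (the log exponent $\tfrac{2\beta}{m+d}\vee 2$ being inherited from the approximation and stochastic log factors). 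For the second conclusion, dropping the $\lambda_\ell$ term instead and dividing by $\lambda_w$ gives $\mathbb{E}_{\mathcal{S}}\{d_{\mathcal{F}^1_B}(P_{X,\hat g},P_{X,Y})\}\lesssim \lambda_w^{-1}A_n+B_n\lesssim \lambda_w^{-1}n^{-3a}(\log n)^{c}+n^{-a}(\log n)^{c}$; choosing $\lambda_w$ of the boundary order $n^{-1/(d+q+2)}=n^{-2a}$ makes the first summand $O(n^{-a}(\log n)^{c})\to 0$, hence $d_{\mathcal{F}^1_B}(P_{X,\hat g},P_{X,Y})\to 0$.

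The main obstacle is the second step on the least-squares side: to reach the faster exponent $3a$ (rather than the $a$ of Theorem~\ref{thm: non-asymptotic_LS}, which there is dominated by the weighted discriminator terms) one must verify that, for the generator class $\mathcal{G}$ of the stated large size, the quadratic approximation error is genuinely of lower order while the companion stochastic error $\mathcal{E}_1$ and the Monte Carlo error $\mathcal{E}_2$ are simultaneously only $O(n^{-3a})$ up to logs. This is exactly where the two extra hypotheses bite in a tight way: $2\beta(d+q+1)\ge 3(m+d)+\beta$ is the precise threshold at which the pseudo-dimension-over-$n$ term is no larger than $n^{-3a}$, and $J\gtrsim n^{\{3(m+d)+6\beta\}/\{4\beta(d+q+2)\}}$ is the precise threshold at which the $\eta$-sample-average error becomes negligible at the same order. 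Everything else---the Rademacher and Dudley bounds for $\mathcal{D}$, the Hölder approximation rates for $\mathcal{G}$, and the uses of Conditions~\ref{C1}, \ref{C2}, \ref{C4}---is as in the proofs of Theorems~\ref{thm: non-asymptotic_LS} and \ref{thm: non-asymptotic_WGAN}.
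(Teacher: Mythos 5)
Your plan follows essentially the same route as the paper. Both proceed from the weighted bound underlying the proof of Theorem~\ref{thm: non-asymptotic_LS} (equation~(\ref{eqn: error_weighted_bound})), observe that the coefficient $K_\mathcal{D}\lambda_w + 2B_0q^2\lambda_\ell$ multiplying the approximation error $\mathcal{E}_3$ remains bounded because $K_\mathcal{D}\lesssim W^2=n^{2a}$ and $\lambda_w=O(n^{-2a})$ with $2a=1/(d+q+2)$ under the hypothesis $2\beta(d+q+1)\geq 3(m+d)+\beta$, and then divide by $\lambda_\ell$ (eventually $\geq 1/2$) for the first claim and by $\lambda_w$ for the second. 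Your regrouping of the right-hand side as $\lambda_\ell A_n + \lambda_w B_n$ by evaluating the infimum in $\mathcal{E}_3$ at a single network, and your remark that the least-squares piece of $\mathcal{E}_3$ is a square and hence of lower order, are a clean and slightly sharper presentation of the same estimates (the paper's Lemma~\ref{lem: bound e3} bounds that piece only linearly, via the Wasserstein approximation error).

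One step you assert a bit too confidently deserves re-checking, and the paper's terse proof does not spell it out either. From Lemma~\ref{lem: stochastic_error_G_LS2}, $\mathcal{E}_2\lesssim\sqrt{\mathrm{Pdim}(\mathcal{G})\log J/J}$ with $\mathrm{Pdim}(\mathcal{G})\asymp n^{2b}$ up to logarithmic factors, so forcing $\mathcal{E}_2\lesssim n^{-3a}$ requires $J\gtrsim n^{6a+2b}$, not $J\gtrsim n^{3a+b}$; the theorem's hypothesis on $J$ (which you quote) equals $n^{(6a+2b)/2}$, a factor of two short in the exponent. Hence your claim that the stated $J$-threshold ``precisely'' makes the Monte Carlo error negligible at order $n^{-3a}$ does not follow from the cited lemma as written; either a faster $\mathrm{Pdim}/J$ (rather than $\sqrt{\mathrm{Pdim}/J}$) bound for $\mathcal{E}_2$ is needed, or the $J$ condition should be strengthened to $J\gtrsim n^{\{3(m+d)+6\beta\}/\{2\beta(d+q+2)\}}$. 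Relatedly, the second conclusion as argued (and as stated in the theorem) only goes through when $\lambda_w$ is of the boundary order $n^{-2a}$, not merely $O(n^{-2a})$; your phrase ``choosing $\lambda_w$ of the boundary order'' implicitly acknowledges this.
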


When $\lambda_{w}=O(n^{-1/(d+q+2)})$, Theorem \ref{thm: varying_weights} gives an improved non-asymptotic error bound of the proposed estimator, and it also implies that our estimated distribution $P_{X,\hat{g}}$ converges weakly to $P_{X,Y}$ as $n\to \infty$.

\section{Numerical studies}
\label{EXPs}

In this section, a number of  experiments including simulation studies and real data examples  are conducted to assess the performance of the proposed method.
We implement WGR in Pytorch, and use the stochastic gradient descent algorithm RMSprop in the training process.

For comparison, we also compute the nonparametric least squares regression using neural networks (NLS)  and conditional Wasserstein GAN (cWGAN) \citep{arjovsky17, liu2021wasserstein}.
Aside from the results presented in this section, additional numerical results are provided in
the supplementary materials,
including the experiments with data generated from other models, experiments investigating the effects of the noise dimension $m$ and the size $J$, and neural networks with different architectures.


\subsection{Simulation Studies}\label{Sim}
We conduct simulation studies to evaluate the performance of WGR with univariate or multi-dimensional response $Y$.
We use five different models to generate data for our analysis. Each model has its own parameters and assumptions, which are summarized in Table \ref{sim-infor}. The table also shows the sample size and the architectures of the neural networks used in the analysis. We compare the performance of WGR  with other existing methods under these models.

\begin{model}\label{M1}
A nonlinear regression model with an additive error term: $$Y=X_{1}^{2}+\exp \left(X_{2}+X_{3} / 3\right)+\sin \left(X_{4}+X_{5}\right)+\varepsilon,$$ where $ \varepsilon \sim N(0,1).$
\end{model}
\begin{model}\label{M2}
 A nonlinear regression model with additive 
 heteroscedastic error: $$Y=X_{1}^{2}+\exp (X_{2} +X_{3}/ 3 )+X_{4}-X_{5}+ (0.5+X_{2}^{2} / 2+X_{5}^{2} / 2 ) \varepsilon,$$ where $\varepsilon \sim N(0,1)$.
\end{model}
\begin{model}\label{M-2d-1}
A  Gaussian mixture model: $$Y_1=X+\varepsilon_1,\ Y_2 = X+\varepsilon_2,$$ where
$\varepsilon_i \sim I_{\{U_i <1/3\}}N(-2,0.25^2) + I_{\{1/3 < U_i < 2/3\}}N(0,0.25^2)+ I_{\{U_i>2/3\}}N(2,0.25^2)$, $i=1,2,$
and $U_1\sim \text{Uniform}(0,1)$, $U_2\sim \text{Uniform}(0,1)$.
\end{model}
\begin{model}\label{M-2d-2}
Involute model: $$Y_1=2X+U\sin(2U)+\varepsilon_1, \ Y_2=2X+U\cos(2U)+\varepsilon_2,$$
where $U\sim \text{Uniform}(0,2\pi)$, $\varepsilon_1\sim N(0,0.4^2)$, $\varepsilon_2 \sim N(0,0.4^2)$.
\end{model}
\begin{model}\label{M-2d-3}
Octagon Gaussian mixture: $$Y_1 = X+\varepsilon_1, \ Y_2 = X+\varepsilon_2,$$
where
$\varepsilon_1\!\sim \!I_{\{U_1\in (i-1,i)\}}N(\bm{\mu}_i, \bm{\Sigma}_i),$  $\varepsilon_2\!\sim \!I_{\{U_2\in (i-1,i)\}}N(\bm{\mu}_i, \bm{\Sigma}_i),$
$ \bm{\mu}_i = (3\cos \frac{\pi_i}{4},3 \sin \frac{\pi_i}{4} )$, \\
$ \bm{\Sigma}_i =\left(\!\begin{array}{cc}
\cos ^{2} \frac{\pi i}{4}+0.16^{2} \sin ^{2} \frac{\pi i}{4} \!& \!\left(1-0.16^{2}\right) \sin \frac{\pi i}{4} \cos \frac{\pi i}{4} \\
\left(1-0.16^{2}\right) \sin \frac{\pi i}{4} \cos \frac{\pi i}{4}\! & \!\sin ^{2} \frac{\pi i}{4}+0.16^{2} \cos ^{2} \frac{\pi i}{4}
\end{array}\!\right)$, $U_1\sim \text{Uniform}(0,8)$,\\ $U_2\sim \text{Uniform}(0,8)$, $ i=1,\ldots,8$.
\end{model}

\begin{table}
\centering
    \caption{\label{sim-infor}Description of  simulation settings.}
    \begin{threeparttable}
    \begin{tabular}{c|cc}
    \hline
    \multicolumn{3}{c}{Data structure} \\
    \hline
        Model & M1, M2 & M3, M4, M5 \\
        Response $(Y)$ & $Y\in\mathbb{R}$ & $Y\in\mathbb{Y}^2$\\
        Covariate $(X)$ & $N(0,I_{5})$; $\  N(0,I_{100})$ & $N(0,1)$ \\
        Noise $(\eta)$ & $N(0,I_{3})$ & $N(0,I_{10})$\\
    \hline
   \multicolumn{3}{c}{ Sample size} \\
    \hline
        $J$ & 200 & 50\\
        Training & 5000 & 40000\\
        Validation & 1000 & 2000\\
        Testing& 1000 & 10000\\
    \hline
    \multicolumn{3}{c}{Network architecture} \\
    \hline
    Generator network & (32, 16) & (512, 512, 512)\\
    Discriminator network  $f_{\bm{\phi}}$ & (32, 16) & (512, 512, 512) \\
    \hline
    \end{tabular}
    \footnotesize
    Notes:  In Model 1 and Model 2, $X\sim N(0,I_{5})$ is the intrinsic dimensional case, and $X\sim N(0,I_{100})$ is the high dimensional case under sparsity assumption. $J$ is the size of the random noise sample. The networks used in the simulation are fully-connected feedforward neural networks with widths specified  above.
\end{threeparttable}
\end{table}

In the validation and testing stages, for each realization of $X$, we generate $J=500$  i.i.d. noise samples $\{\eta_j,j=1,\ldots,J\}$ from the standard normal distribution and compute $\{\hat{g}_{\bm{\theta}}(X,\eta_j),j=1,\ldots,J\}$.
To evaluate the predictive power of the WGR estimator $\hat{g}_{\theta}$, we use the $L_1$ and $L_2$ errors  defined as
\begin{align}
L_1=\frac{1}{n}\sum_{i=1}^n \|Y_i-\frac{1}{J}\sum_{j=1}^J\hat{g}_{\bm{\theta}}(X_i,\eta_{ij}) \|, \
L_2=\frac{1}{n}\sum_{i=1}^n \|Y_i-\frac{1}{J}\sum_{j=1}^J\hat{g}_{\bm{\theta}}(X_i,\eta_{ij}) \|^2.\label{L1L2}
\end{align}
In addition,
we can used the estimated conditional generator to obtain the estimated $\tau$-th conditional quantile for a given $X=X_{i}$, denoted by $\hat{F}_{Y| X}^{-1}(\tau| X=X_i)$, via Monte Carlo. Also, we can calculate the estimated conditional mean by $\hat{E}(Y|X=X_i)=(1/J)\sum_{j=1}^J g(X_i,\eta_{ij})$  and the estimated conditional standard deviation by $\hat{SD}(Y|X=X_i)=[(1/J)\sum_{j=1}^J\{g(X_i,\eta_{ij})-\hat{E}(Y|X=X_i)\}^2 ]^{1/2}$.
Hence, we can compute the mean squared error (MSE) of the estimated conditional mean, standard deviation and the estimated $\tau$-th quantile, defined by
\begin{align*}
&\text{MSE(mean)}=\frac{1}{K}\sum_{i=1}^K \{\hat{E}(Y| X=X_i)-E(Y|X=X_i) \}^2,\\
&\text{MSE(sd)}=\frac{1}{K}\sum_{i=1}^K \{\hat{SD}(Y|X=X_i)-SD(Y|X=X_i) \}^2,\\
&\text{MSE}(\tau)=\frac{1}{K} \sum_{i=1}^{K}\{\hat{F}_{Y|X}^{-1}(\tau|X=X_{i})-F_{Y|X}^{-1}(\tau|X=X_{i})\}^{2},
\end{align*}
where $K$ is the size of the validation or testing set. We consider $\tau= 0.05, 0.25, 0.50,$ $0.75, 0.95$.

\begin{figure}[H]
\begin{minipage}[t]{\linewidth}
\hspace{3.5cm} Truth \hspace{2.1cm} cWGAN  \hspace{1.8cm} WGR
\end{minipage}
\\
\begin{minipage}[t]{\linewidth}
\centering
	\rotatebox{90}{ \hspace{0.1cm}  Model \ref{M-2d-1}}
	\includegraphics[width=0.15\textheight]{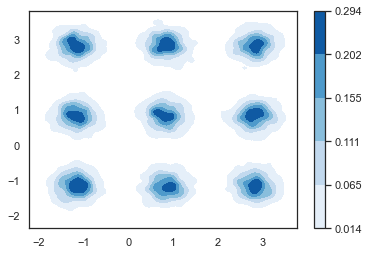}
	\includegraphics[width=0.15\textheight]{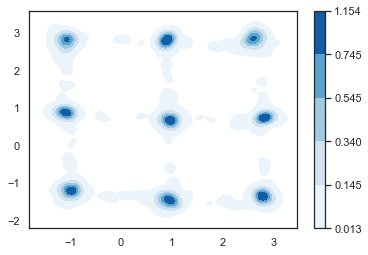}
	\includegraphics[width=0.15\textheight]{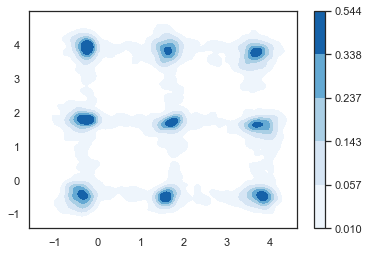}	
\end{minipage}
\\
\begin{minipage}[t]{\linewidth}
\centering
	\rotatebox{90}{ \hspace{0.1cm} Model \ref{M-2d-2}}
	\includegraphics[width=0.15\textheight]{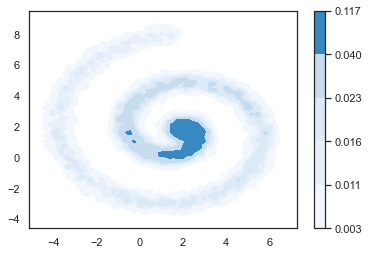}
	\includegraphics[width=0.15\textheight]{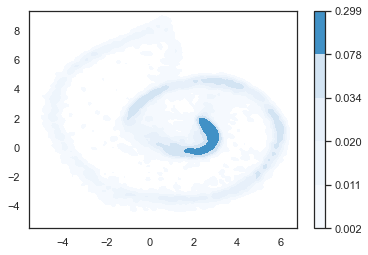}
	\includegraphics[width=0.15\textheight]{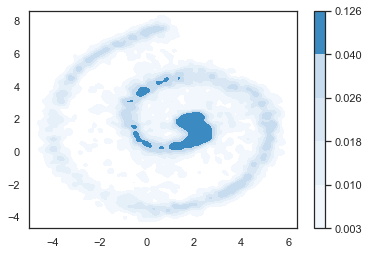}
\end{minipage}
\\
\begin{minipage}[t]{\linewidth}
\centering
	\rotatebox{90}{ \hspace{0.12cm} Model \ref{M-2d-3}}
	\includegraphics[width=0.15\textheight]{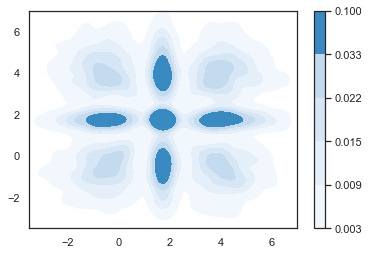}
	\includegraphics[width=0.15\textheight]{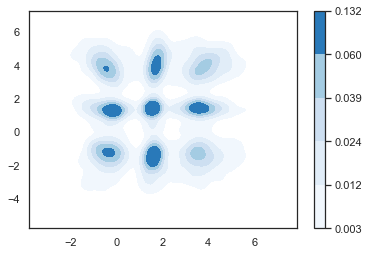}
	\includegraphics[width=0.15\textheight]{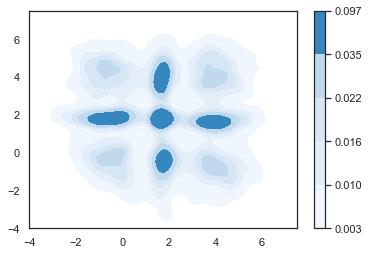}
\end{minipage}
\footnotesize
\caption{\label{Fig-2D}Comparison of conditional density estimation.  The abbreviations represent the same as in Table \ref{tab-sim-r}.  The conditional density functions are estimated using 5000 samples, which are generated by the conditional samplers from the methods given the randomly selected value of $X$.
}
\end{figure}

\begin{table}
\centering
\caption{\label{tab-sim-r} Comparison of  WGR with NLS and cWGAN for Models 1 and 2}
\begin{threeparttable}
\begin{tabular}{c| cc|cc cc}
\hline
Model & $d $ & Method & $L_1$ & $L_2$ & Mean & Sd\\
\hline
\multirow{6}{*}{ M\ref{M1}} &\multirow{3}{*}{$5$} & NLS & 0.83(0.02) &  1.08(0.05) &  - & -  \\
&& cWGAN &  1.00(0.02) &  1.97(0.25) & 0.98(0.25) & 0.09(0.03) \\
&& WGR & {\bf 0.82}(0.02) & {\bf 1.07}(0.05) & {\bf 0.06}(0.01) & {\bf 0.04}(0.01)\\
\cline{2-7}
&\multirow{3}{*}{$100$} & NLS &   1.17(0.03)  &  2.52(0.15) &  - & - \\
&& cWGAN & 1.17(0.04) & 2.65(0.41) & 1.67(0.39) & 0.81(0.02) \\
&& WGR &  {\bf 1.15}(0.04) &  {\bf 2.40}(0.24) & {\bf 1.64}(0.25) & {\bf 0.16}(0.04)\\
\hline
\multirow{6}{*}{ M\ref{M2}}&\multirow{3}{*}{$5$} & NLS & 1.24(0.04) & {\bf 2.45}(0.38) &  - & -  \\
&& cWGAN & 1.32(0.05) & 4.11(0.44) & 0.85(0.24) & 0.37(0.10)  \\
&& WGR & {\bf 1.23}(0.04)  &3.41(0.38) &  {\bf 0.19}(0.10) & {\bf 0.22}(0.04) \\
\cline{2-7}
&\multirow{3}{*}{$100$} & NLS &  1.63(0.05) & 5.37(0.47) &  - & -\\
&& cWGAN & 1.64(0.06) & 5.34(0.46) & 2.20(0.22) & 1.11(0.08)\\
&& WGR & {\bf 1.61}(0.06) & {\bf 5.27}(0.45) & {\bf 2.06}(0.24) & {\bf 0.30}(0.03)\\
\hline
\end{tabular}
\footnotesize
Notes:  $d$  is the dimension of covariate $X$. The corresponding standard errors are given in parentheses. The smallest $L_1$ and $L_2$ error, and the smallest MSEs are in boldface. NLS represents the nonparametric least squares regression, cWGAN represents the conditional WGAN, WGR is our proposed Wasserstein generative regression method.
\end{threeparttable}
\end{table}

\begin{table}
\centering
\caption{\label{Tab-Q}MSE of  the estimated conditional quantile at different quantile levels} 
\begin{threeparttable}
\begin{tabular}{cc| cc |cc}
\hline
& & \multicolumn{2}{c|}{$X\sim N(0,I_{5})$} & \multicolumn{2}{c}{$X\sim N(0,I_{100})$}\\
Model & $\tau$ & cWGAN & WGR & cWGAN & WGR \\
\hline
\multirow{5}{*}{M\ref{M1}} &  0.05 &  1.22(0.23) & {\bf 0.29}(0.07) &3.18(0.71) &  {\bf 1.84}(0.23)   \\
& 0.25 & 1.04(0.25) & {\bf 0.10}(0.01) & 1.83(0.22) & {\bf 1.69}(0.20)\\
& 0.50 & 0.99(0.26) &  {\bf 0.09}(0.02) & 1.75(0.16) & {\bf 1.66}(0.16)  \\
& 0.75 & 1.03(0.24) &  {\bf 0.10}(0.03) & 1.89(0.21) & {\bf 1.88}(0.13)  \\
& 0.95 & 1.34(0.21) & {\bf 0.23}(0.06) & 3.61(0.39) & {\bf 2.41}(0.14) \\
\hline
\multirow{5}{*}{M\ref{M2}} &  0.05 &  1.86(0.21) &{\bf 0.77}(0.09) & 4.99(0.51) &{\bf 3.42}(1.07) \\
& 0.25 & 0.94(0.26) & {\bf 0.31}(0.06)  & 2.63(0.24) & {\bf 2.26}(0.28)\\
& 0.50 & 0.85(0.26)  & {\bf 0.19}(0.04)  & 2.21(0.22)  & {\bf 2.19}(0.22) \\
& 0.75 & 1.00(0.21) & {\bf 0.27}(0.05) & 2.79(0.36)& {\bf 2.57}(0.27) \\
& 0.95 & 2.59(0.52) & {\bf 0.81}(0.15) & 5.41(0.65) & {\bf 3.49}(0.47)  \\
\hline
\end{tabular}
\footnotesize
Notes: The notations are  the same as in Table \ref{tab-sim-r}. The corresponding  standard errors are given in parentheses.
The smallest MSEs are in boldface.
\end{threeparttable}
\end{table}

We repeat the simulations 10 times. For each evaluation criterion, the simulation standard errors are computed and provided in parentheses. Table \ref{tab-sim-r} summaries the average $L_1$ error, $L_2$ error, MSE(mean) and MSE(sd). Table \ref{Tab-Q} reports the average MSE($\tau$) for different $\tau$. In Figure \ref{Fig-2D}, we visualize the quality of the conditional samples and the conditional density estimation given a random  realization of $X$.

It can be seen that, for Models 1 and 2,  the three methods are comparable in terms of  $L_1$ and $L_2$ errors. But for the conditional mean, conditional standard deviation, and  conditional quantile estimation, WGR has smaller MSEs values compared with cWGAN, indicating that WGR works better in distributional matching.

For Models 3-5, Figure \ref{Fig-2D} shows the kernel-smoothing conditional density estimates for a randomly selected value of $X$ based on 5,000 samples generated using the estimated conditional generator. It can be seen that WGR can better estimate the underlying conditional distributions for these models.

\subsection{Real data examples}
We demonstrate the effectiveness of WGR on four datasets: CT slides \citep{Graf2011PositionPI},  UJIndoorLoc \citep{UJIIndoorLoc},  MNIST \citep{mnist2010}, and STL10 \citep{STL10}. The results from the STL10 dataset are given in the Appendix.
Table \ref{DataSet} gives a summary of the dimensions and training sizes of these datasets and
the noise vectors used in the analysis.

\begin{table}
\centering
    \caption{ \label{DataSet}Summary of datasets and noise dimension and size}
    \begin{threeparttable}
    \begin{tabular}{l|rrrr}
    \hline
    & CT slides & UJIndoorLoc & MNIST & STL10\\
    \hline
    Dimension of $X$ & 383 & 520 & 588 & 36477\\
    Dimension of $Y$ & 1 & 6 & 144 & 12675\\
    \hline
    Training size & 40000 & 14948 & 20000 & 10000\\
    Validation size& 3500 & 1100 & 1000 & 1000\\
    Testing size & 10000 & 5000 &10000 & 2000\\
    \hline
    Dimension of  $\eta$ &50&  50 & 100 & 12675\\
    Size of $\eta$ $(J)$ & 200 & 200 & 1 & 1\\
    \hline
    \end{tabular}
    \footnotesize
    Notes: The noise vector $\eta$ is sampled from the multivariate standard normal distribution. $J$ is the size of the noise vector generated in each iteration.
\end{threeparttable}
\end{table}

\subsubsection{
The CT slices dataset}

We evaluate the methods on the CT slices dataset \citep{Graf2011PositionPI} and compare their prediction accuracy. This dataset can be found at the UCI machine learning repository (\url{https://archive.ics.uci.edu/ml/datasets/Relative+location+of+CT+slices+on+axial+axis}).
The dataset contains 53,500 CT images from 74 patients (43 male, 31 female) with different anatomical landmarks annotated on the axial axis of the human body.
Each CT image is represented by two histograms in polar space: one for the bone structures and another for the air inclusions inside the body. The covariate vector consists of 383 variables: 239 for the bone histogram and 145 for the air histogram. The response variable is the relative location of the image on the axial axis, which ranges from $0$ to $180$, where $0$ indicates the top of the head and $180$ indicates the soles of the feet.

The sample size of this dataset is 53,500. We use 40,000 observations for training, 3,500 observations for validation, and 10,000 observations for testing. Both the generator network and the critic network have two hidden layers with widths 128 and 64, respectively. The LeakyReLU activation function is used in both networks. The noise vector $\eta$ is generated from $N(\bm{0}, \bm{I}_{50}).$
The number of the noise vectors $J$ is set to be $200.$

For evaluation, besides the $L_1$ and $L_2$ errors in (\ref{L1L2}), we also compute the average length of the estimated 95$\%$ prediction interval (PI) and the corresponding coverage probability (CP), defined as
\begin{align*}
&\text{PI} = \! \frac{1}{K}\sum_{i=1}^K \{ \hat{F}_{Y|X}^{-1}(0.975|X_i)-  \hat{F}_{Y|X}^{-1}(0.025|X_i) \},\\
&\text{CP}=\! \frac{1}{K}\sum_{i=1}^K  I \{Y_i \in \![ \hat{F}_{Y|X}^{-1}(0.025|X_i),\hat{F}_{Y|X}^{-1}(0.975|X_i) ]\},
\end{align*}
where $\hat{F}_{Y|X}^{-1}(\tau|X_i)$ is the estimated $\tau$-th conditional quantile for a given $X=X_i$, and $K$ is the sample size of the validation or testing set.
The numerical results are summarized in Table \ref{Tab-CT}.

Figure \ref{Fig-CT} shows the prediction intervals for 200 test samples, which are sorted in ascending order according to the value of $Y$. We randomlyselect 200 samples from the test dataset and estimate the conditional prediction interval based on 10,000 observations. The prediction intervals are sorted  in acsending order according to the value of $Y$ and are shown in Figure \ref{Fig-CT}.
In addition, we display the estimated conditional density functions  for 10 test samples in Figure \ref{Fig-CT}. The conditional density function is estimated using kernel smoothing based on 10,000
values calculated from the conditional generator.

\begin{table}
\caption{\label{Tab-CT} Summary statistics for CT test data}
\centering
\begin{tabular}{c|cccc}
\hline
Method & L1 & L2 & PI & CP\\
\hline
NLS & 0.40 & 0.51 & - & - \\
cWGAN & 0.95 & 2.30 & 1.54 & 0.48\\
WGR & 0.36 & 0.48 & 2.80 & 0.96\\ \hline
\end{tabular}
\end{table}

\begin{figure}[H]
\begin{minipage}[t]{0.88\linewidth}
\hspace{3.9cm} Prediction interval \hspace{0.2cm} Estimated conditional density
\end{minipage}
\begin{minipage}[t]{\linewidth}
	\centering
	\rotatebox{90}{ \hspace{0.8cm} {\footnotesize cWGAN}}
	\includegraphics[width=0.20\textheight]{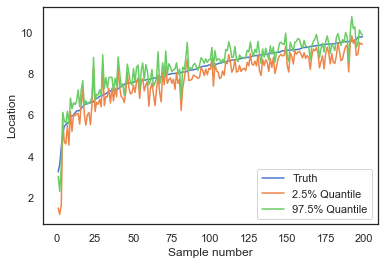} \includegraphics[width=0.20\textheight]{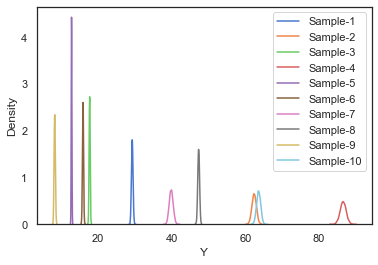}
\end{minipage}
\begin{minipage}[t]{\linewidth}
	\centering
	\rotatebox{90}{ \hspace{1.0cm} {\footnotesize WGR}}
	\includegraphics[width=0.20\textheight]{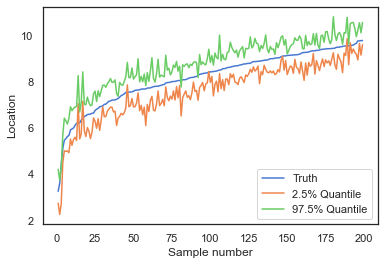} \includegraphics[width=0.20\textheight]{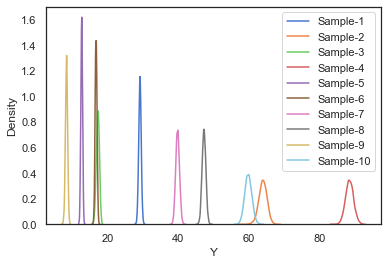}
\end{minipage}
\footnotesize
\caption{\label{Fig-CT}The prediction intervals for the testing dataset and the estimated conditional density functions of 10 randomly selected samples. 
In the left two panels, the blue line represents the truth, the orange and the green lines represent the 2.5$\%$ and the 97.5$\%$ quantiles, respectively. In the right two panels, each color represents an observation from the test dataset.
}
\end{figure}

As shown in Table \ref{Tab-CT}, WGR and NLS have similar performance in terms of $L_1$ and $L_2$ errors, and both methods are superior to cWGAN. Furthermore, the CP of WGR is close to the nominal level of 95$\%$ and much higher than cWGAN. In addition, Figure \ref{Fig-CT} illustrates that the conditional distributions estimated by cWGAN are more peaked than those of our proposed method, and this accounts for why the prediction interval obtained by cWGAN covers fewer points than WGR.

\subsubsection{
The UJIndoorLoc dataset}


We present an analysis of the UJIndoor dataset \citep{UJIIndoorLoc}, a multi-building multi-floor indoor localization database that relies on WLAN/WiFi fingerprinting. The dataset can be downloaded from the UCI machine learning repository (\url{https://archive.ics.uci.edu/ml/datasets/UJIIndoorLoc}).
This  dataset contains 21,048 observations, which are divided into three parts: 14948 for training, 1100 for validation, and 5000 for testing.
Each observation has 529 attributes. The attributes include the WiFi fingerprint, which is composed of 520 intensity values of different detected Wireless Access Points (WAPs), ranging from -104dBm (very weak signal) to 0dBm (strong signal), and 100 for non-detected WAPs. The attributes also include the location information, which consists of six variables: {\it longitude, latitude, floor, building ID, space ID, and relative position}. The first two variables are continuous, while the others are categorical with at least two levels. We apply standardization to the data before training.
Our goal is to predict the location information from the WiFi fingerprint using different machine learning methods and compare their performance.

The neural networks used are two-layer fully connected feedforward networks with 256 and 128 nodes, respectively. The LeakyReLU activation function is used in both the conditional generator and critic networks. The noise vector $\eta$  is generated from $N(\bm{0},\bm{I}_{50})$. And we use $J=200$.

Table \ref{Tab-UJI} presents the analysis  results and Figure \ref{Fig-UJI-pred} shows the prediction intervals for \textit{building ID, space ID}, and \textit{relative position} in  the response vector, based on 200 samples randomly selected from the test dataset.
Compared with cWGAN, the prediction intervals of WGR have a higher coverage probability with a comparable length.

\begin{table}
\center
\caption{\label{Tab-UJI} Analysis results of the UJIndoorLoc testing dataset}
\begin{threeparttable}
\begin{tabular}{c c| ccc ccc }
\hline
Method & & LNG &  LAT & Floor & B-ID & S-ID & RP \\
\hline
\multirow{4}{*}{NLS} & $L_1$ & 0.07& 0.09 & 0.12 & 0.05 & 0.16 & 0.34 \\
& $L_2$ & 0.06 & 0.10 & 0.05 & 0.12 & 0.21 & 0.58 \\
& PI & - & - & - & - & - & -\\
& CP & - & -& - & - & - & -\\
\hline
\multirow{4}{*}{cWGAN} & $L_1$ & 0.14 & 0.17 & 0.23 & 0.12 & 0.29 & 0.23 \\
& $L_2$ & 0.04 & 0.07 & 0.11 & 0.04 & 0.36 & 0.11\\
& PI & 0.24 & 0.26 & 0.31 & 0.22 & 0.61 & 1.06\\
& CP & 0.55 & 0.50 & 0.44 & 0.53 & 0.68 & 0.49\\
\hline
\multirow{4}{*}{WGR} & $L_1$ & 0.08 & 0.10 & 0.18 & 0.05 & 0.20 & 0.41 \\
& $L_2$ & 0.01 & 0.03 & 0.06 & 0.01 & 0.16 & 0.51 \\
& PI & 0.24 & 0.27 & 0.37 & 0.24 & 0.74 & 2.29\\
& CP & 0.75 & 0.71 & 0.60 & 0.89 & 0.84 & 0.58\\
\hline
\end{tabular}
\footnotesize
Notes: WGR is the proposed method. LNG is Longitude, LAT is latitude, B-ID is building ID, S-ID is space ID, RP is relative position.
\end{threeparttable}
\end{table}

\begin{figure}[H]
\centering
\begin{minipage}[t]{0.90\linewidth}
\hspace{1.8cm} \textit{building ID} \hspace{2.2cm} \textit{space ID} \hspace{2.0cm}
\textit{relative position}
\end{minipage}
\\
\begin{minipage}[t]{\linewidth}
	\centering
	\rotatebox{90}{ \hspace{0.8cm}{\footnotesize cWGAN} }
	\includegraphics[width=0.20\textheight]{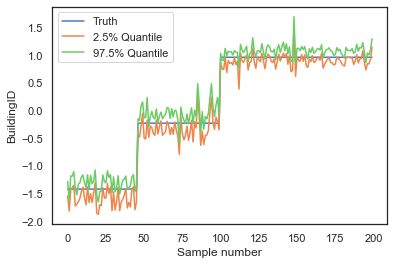}
	\includegraphics[width=0.20\textheight]{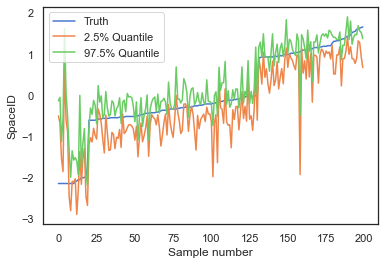}
	\includegraphics[width=0.20\textheight]{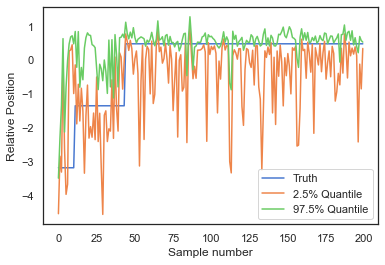}
\end{minipage}
\\
\begin{minipage}[t]{0.90\linewidth}
	\centering
	\rotatebox{90}{ \hspace{1.0cm}{\footnotesize WGR}}
	\includegraphics[width=0.2\textheight]{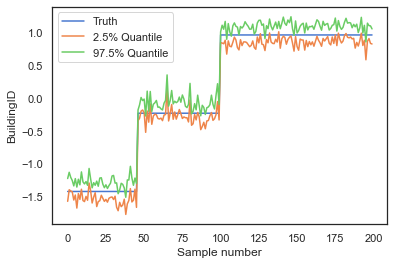}
	\includegraphics[width=0.2\textheight]{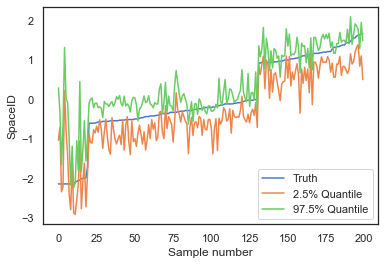}
	\includegraphics[width=0.2\textheight]{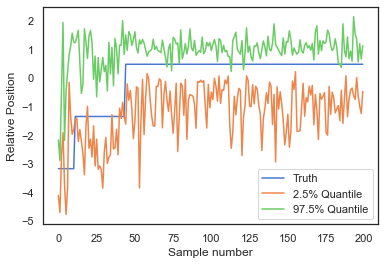}
\end{minipage}
\footnotesize
\caption{\label{Fig-UJI-pred}Prediction intervals for \textit{building ID, space ID}, and
\textit{relative position} in the UJIndoorLoc test dataset.
The blue line represents the truth, the orange line represents the 2.5$\%$ quantile, and the green line represents the 97.5$\%$ quantile.}
\end{figure}

We remark that the prediction intervals in these two data examples do not account for the uncertainty in estimating the conditional generator. To achieve theoretically valid coverage probability, one could use the conformal prediction framework \citep{papadopoulos2002, vovk2005} to adjust the prediction intervals accordingly. However, this is problem is beyond the scope of the current paper  and we leave it for future work.

\subsubsection{
MNIST handwritten digits dataset}\label{MNIST}
We now demonstrate the performance of WGR on a high-dimensional problem, where both $X$ and  $Y$ are high-dimensional. We use the MNIST dataset \citep{mnist2010} of handwritten digits that can be downloaded from \url{http://yann.lecun.com/exdb/mnist/}. The MNIST dataset consists of $28\times 28$ matrices of gray-scale images with values ranging from 0 to 1. Each image has a corresponding label in $\{0,1,\ldots,9\}$.
We apply WGR to the task of reconstructing the central part of an image that is masked. We assume that the masked part is the response $Y\in\mathbb{R}^{14\times 14}$ and the remaining part is the covariate $X$, which has a dimension of $28\times28-14\times14=588$.

To evaluate the quality of the reconstructed images, we randomly sampled two images per digit from the test set and compared the results of three different methods in Figure \ref{Fig-MNIST}.
The figure shows that WGR produces sharper and more faithful images than the other methods, as it preserves more details and reduces artifacts.


\begin{figure}[H]
\begin{minipage}[t]{\linewidth}
\hspace{4.0cm}$X$ \hspace{1.0cm} Truth  \hspace{0.8cm} WGR \hspace{0.6cm} NLS \hspace{0.5cm} cWGAN
\end{minipage}
\\
\centering
\begin{minipage}[t]{0.60\linewidth}
	\includegraphics[width=0.08\textheight]{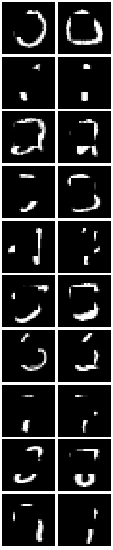}
	\includegraphics[width=0.08\textheight]{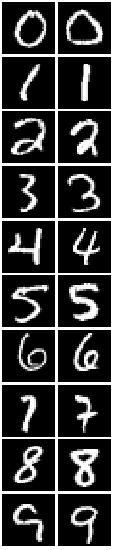}
	\includegraphics[width=0.08\textheight]{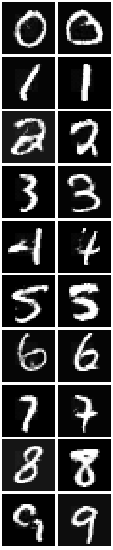}
	\includegraphics[width=0.08\textheight]{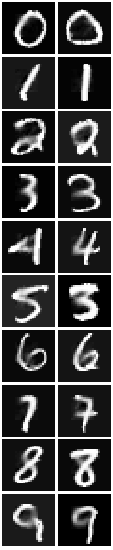}
	\includegraphics[width=0.08\textheight]{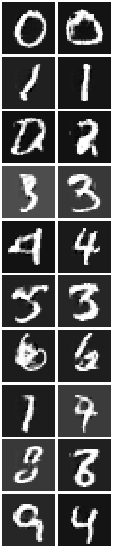}
\end{minipage}
\caption{\label{Fig-MNIST}Reconstructed images in MNIST test data.}
\end{figure}

\subsubsection{MNIST dataset: effects of sample sizes and network architectures.}

We conduct experiments to investigate how the training sample sizes and the network architectures affect the quality of the generated images using the MNIST dataset. We apply WGR with training sample sizes $n=2,000$ and $n=20,000$. The validation sample size is 1,000 and the test sample size is 10,000. For the network architectures, we use a network with 2 CNN layers and a network with 3 fully-connected layers, respectively. Figure \ref{Fig-App-MNIST} displays the reconstructed images with the two different network architectures. WGR with CNN layers is more stable than NLS and cWGAN when the training sample size varies. Moreover, it can be observed that WGR tends to generate images with higher quality.

\begin{figure}[H]
\begin{minipage}[t]{\linewidth}
\hspace{6.0cm} WGR  \hspace{2.8cm} NLS \hspace{2.2cm} cWGAN
\end{minipage}
\begin{minipage}[t]{\linewidth}
\hspace{1.6cm} $X$ \hspace{0.7cm} Truth \hspace{0.9cm} $2000$ \hspace{0.5cm} $20000$\hspace{1.0cm} $2000$ \hspace{0.6cm} $20000$\hspace{1.0cm}
$2000$ \hspace{0.5cm} $20000$
\end{minipage}
\\
\begin{minipage}[t]{\linewidth}
	\centering
	\rotatebox{90}{ \hspace{3.5cm} CNN}
	\includegraphics[width=0.07\textheight]{X-410.png}
	\includegraphics[width=0.07\textheight]{True-410.png}
		\hspace{0.2cm}
	\includegraphics[width=0.07\textheight]{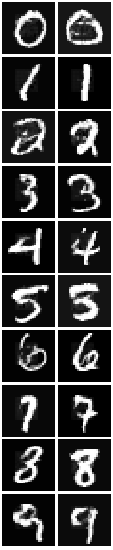}	
	\includegraphics[width=0.07\textheight]{LSWGAN-20000L-DC-410.png}		
		\hspace{0.2cm}
	\includegraphics[width=0.07\textheight]{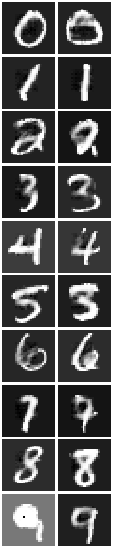}
	\includegraphics[width=0.07\textheight]{LS-20000L-DC-410.png}
		\hspace{0.2cm}
	\includegraphics[width=0.07\textheight]{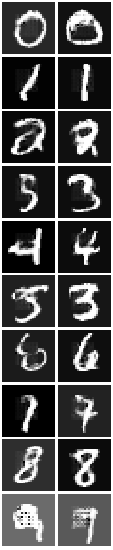}
	\includegraphics[width=0.07\textheight]{WGAN-20000L-DC-410.png}
\end{minipage}

\medskip
\begin{minipage}[t]{\linewidth}
	\centering
	\rotatebox{90}{ \hspace{2.3cm} Fully-Connected}
	\includegraphics[width=0.07\textheight]{X-410.png}
	\includegraphics[width=0.07\textheight]{True-410.png}
		\hspace{0.2cm}
	\includegraphics[width=0.07\textheight]{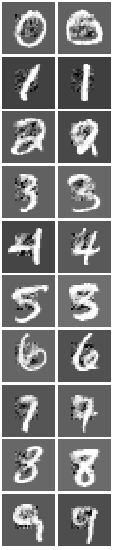}	
	\includegraphics[width=0.07\textheight]{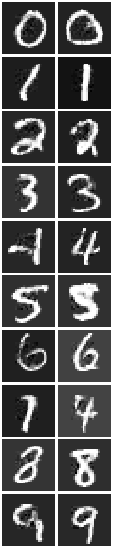}
		\hspace{0.2cm}
	\includegraphics[width=0.07\textheight]{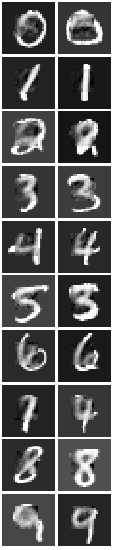}
	\includegraphics[width=0.07\textheight]{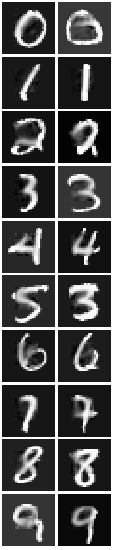}
		\hspace{0.2cm}
	\includegraphics[width=0.07\textheight]{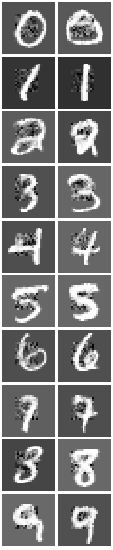}
	\includegraphics[width=0.07\textheight]{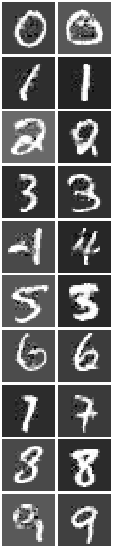}
\end{minipage}
\caption{\label{Fig-App-MNIST}Reconstructed images in MNIST test data.}
\end{figure}

\section{Conclusions}

In this paper, we have proposed a generative regression approach,  Wasserstein generative regression (WGR), for simultaneously estimating a regression function and a conditional generator.
We have provided theoretical support for WGR by establishing its non-asymptotic error bounds
and convergence properties.
Our numerical experiments demonstrate that it works well in various situations from the standard generalized nonparametric regression problems to more complex image reconstruction tasks.

WGR can be viewed as a way of estimating a conditional generator with a data-dependent regularization on the first conditional moment. However, our framework is not limited to this problem and can be adapted to other estimation tasks by choosing different loss functions. For instance, we can estimate the conditional median function or the conditional quantile function by using other losses that are more suitable for these objectives. We can also impose regularization on higher conditional moments or other properties of the conditional distribution, depending on the research question.

Although we have established non-asymptotic error bounds and convergence properties of WGR, our analysis is only a first attempt to deal with a challenging technical problem that involves empirical processes on complex functional spaces and approximation properties of deep neural networks. Further work is needed to better understand the properties of generative regression methods, including the proposed WGR. For instance, it would be interesting to know if the error bounds we derived are optimal or if they can be improved.
WGR is a nonparametric method. For statistical inference and model interpretation, it is desirable to incorporate a semiparametric structure \citep{bkrw1998}
or a  variable selection and dimension reduction component in WGR \citep{cgw2023, hbm2012}.

Generative regression
leverages the power of deep neural networks to model complex and high-dimensional conditional distributions. Unlike traditional regression methods that only output point estimates, generative regression can capture the uncertainty and variability of the data by generating samples from the learned distribution.
This allows for more interpretable results in various statistical applications.
Therefore, we expect  generative learning to be a useful addition to the existing methods for prediction and inference in statistics.

\bibliographystyle{apalike}
\bibliography{Reference}

\newpage
\setcounter{equation}{0}  
\renewcommand{\theequation}{A.\arabic{equation}}
\setcounter{table}{0}
\renewcommand{\thetable}{A.\arabic{table}}
\setcounter{figure}{0}
\renewcommand{\thefigure}{A.\arabic{figure}}
\setcounter{equation}{0}  

\begin{appendices}
\noindent
\textbf{\LARGE Appendices}

In this appendix, we provide detailed proofs of the main theorems and additional numerical experiments, including simulation studies and real data examples.

\section{Proof of Lemma \ref{lem: error_decomp}}\label{appdx:S1}
We first recall the error decomposition.
For the proposed estimator $\hat{g}$, the estimation errors can be decomposed as:
\begin{align*}
& \mathbb{E}_{\mathcal{S}}\left\{\lambda_{\ell} \mathbb{E}\left\|\mathbb{E}_\eta \hat{g}(X, \eta)-\mathbb{E}_\eta g^*(X, \eta)\right\|^2+\lambda_w d_{\mathcal{F}_B^1}\left(P_{X, \hat{g}}, P_{X, Y}\right)\right\} \\
& \leq \lambda_l \mathcal{E}_1+4 \lambda_l \mathcal{E}_2+2 \mathcal{E}_3+2 \lambda_w \mathcal{E}_4+3 \lambda_w \mathcal{E}_5+3 \lambda_w \mathcal{E}_6,
\end{align*}
where $\mathcal{S}=\left\{\left(X_i, Y_i, \eta_i\right)\right\}_{1 \leq i \leq n} \cup\left\{\eta_{i j}\right\}_{1 \leq i \leq n, 1 \leq j \leq J}$ and
\begin{small}
\begin{align}
& \mathcal{E}_1:=\mathbb{E}_{\mathcal{S}}\left\{\mathbb{E}\left\|Y-\mathbb{E}_\eta g^*(X, \eta)\right\|^2+\mathbb{E}\left\|Y-\mathbb{E}_\eta \hat{g}(X, \eta)\right\|^2-\frac{2}{n} \sum_{i=1}^n\left\|Y_i-\mathbb{E}_\eta \hat{g}\left(X_i, \eta\right)\right\|^2\right\},\label{E1} \\
& \mathcal{E}_2:=\mathbb{E}_{\mathcal{S}}\left\{\sup _{g \in \mathcal{G}} \frac{1}{n} \sum_{i=1}^n\left|\left\|Y_i-\frac{1}{J} \sum_{j=1}^J g\left(X_i, \eta_{i j}\right)\right\|^2-\left\|Y_i-\mathbb{E}_\eta g\left(X_i, \eta\right)\right\|^2\right|\right\},\label{E2} \\
& \mathcal{E}_3:=\inf _{g \in \mathcal{G}}\left[\lambda_l \mathbb{E}\left\|\mathbb{E}_\eta g(X, \eta)-\mathbb{E}_\eta g^*(X, \eta)\right\|^2+\lambda_w \sup _{f \in \mathcal{D}}\{\mathbb{E} f(X, g(X, \eta))-\mathbb{E} f(X, Y)\}\right], \label{E3}
\end{align}
\begin{align}
& \mathcal{E}_4:=\sup _{h \in \mathcal{F}_B^1} \inf _{f \in \mathcal{D}}\|h-f\|_{\infty}, \label{E4}\\
& \mathcal{E}_5:=\mathbb{E}_{\mathcal{S}}\left[\sup _{f \in \mathcal{D}}\left\{\mathbb{E} f(X, Y)-\frac{1}{n} \sum_{i=1}^n f\left(X_i, Y_i\right)\right\}\right],\label{E5} \\
& \mathcal{E}_6:=\mathbb{E}_{\mathcal{S}}\left[\sup _{f \in \mathcal{D}, g \in \mathcal{G}}\left\{\mathbb{E} f(X, g(X, \eta))-\frac{1}{n} \sum_{i=1}^n f\left(X_i, g\left(X_i, \eta_i\right)\right)\right\}\right] .\label{E6}
\end{align}
\end{small}

We will prove  Lemma \ref{lem: error_decomp} in several steps.

{\it Step 1.} To rewrite the estimation error.
For the generalized nonparametric regression model
$Y=g^*(X;\eta), $
since the proposed objective function is
\begin{align*}
L(g, f)=\lambda_{\ell} \mathbb{E}\left\|Y-\mathbb{E}_\eta g(X, \eta)\right\|^2+\lambda_w\{\mathbb{E} f(X, g(X, \eta))-\mathbb{E} f(X, Y)\},
\end{align*}
we have $\sup_{f \in \mathcal{F}^1_B} L(g^*,f)=\lambda_{\ell}\mathbb{E}\|Y- \mathbb{E}_{\eta}g^*(X,\eta)\|^2.$
Then, we write  the estimation error
\begin{align}
        &\mathbb{E}_{\mathcal{S}}\left\{\lambda_{\ell} \mathbb{E}\| \mathbb{E}_{\eta}\hat{g}(X,\eta)- \mathbb{E}_{\eta}g^*(X,\eta)\|^2+\lambda_{w}d_{\mathcal{F}^1_B}(P_{X,\hat{g}}, P_{X,Y})\right\}\nonumber\\
        & \quad = \mathbb{E}_{\mathcal{S}}\left\{\sup_{f \in \mathcal{F}^1_B} L(\hat{g},f)- \sup_{f \in \mathcal{F}^1_B} L(g^*,f)\right\}\nonumber\\
        & \quad = \mathbb{E}_{\mathcal{S}}\left\{\sup_{f \in \mathcal{F}^1_B} L(\hat{g},f) - \sup_{f \in \mathcal{D}} L(\hat{g},f)\right\}
                     + \mathbb{E}_{\mathcal{S}}\left\{\sup_{f \in \mathcal{D}} L(\hat{g},f)- \sup_{f \in \mathcal{F}^1_B} L(g^*,f)\right\}.\label{eqn: exc_risk_initial}
\end{align}

{\it Step 2.} To bound the first term in  (\ref{eqn: exc_risk_initial}).
By Lemma 24 in \citet{huang2022error}, 
\begin{align*}
&\mathbb{E}_{\mathcal{S}}\left\{\sup_{f \in \mathcal{F}^1_B} L(\hat{g},f) - \sup_{f \in \mathcal{D}} L(\hat{g},f)\right\}\\
= &\mathbb{E}_{\mathcal{S}}\left\{d_{\mathcal{F}^1_B}(P_{X,\hat{g}}, P_{X,Y})- d_{\mathcal{D}}(P_{X,\hat{g}}, P_{X,Y})\right\} \\
\leq &2\sup_{h\in\mathcal{F}^1_{B}}\inf_{f\in\mathcal{D}}\|h-f\|_{\infty}= 2\mathcal{E}_4,
\end{align*}
where $\mathcal{E}_4$ is defined in (\ref{E4}).

{\it Step 3.} To bound the second term in (\ref{eqn: exc_risk_initial}).
By the definition of $L(g,f)$ and $g^*$,
\begin{align}\label{eq:term-2}
    \nonumber   &\mathbb{E}_{\mathcal{S}}\left\{\sup_{f \in \mathcal{D}} L(\hat{g},f)- \sup_{f \in \mathcal{F}^1_B} L(g^*,f)\right\} \\
       =& \lambda_{\ell}\mathbb{E}_{\mathcal{S}}\left\{\mathbb{E}\Vert Y- \mathbb{E}_\eta \hat{g}(X,\eta)\Vert^2-\mathbb{E}\Vert Y-\mathbb{E}_{\eta}g^*(X,\eta)\Vert^2\right\}
       + \lambda_w \mathbb{E}_{\mathcal{S}}\left[d_{\mathcal{D}}(P_{X,\hat{g}}, P_{X,Y})\right].
\end{align}
We decompose the second term in (\ref{eq:term-2}) as
\begin{align}\label{eq:IMP(Pxg,Pxy)}
 \nonumber   \mathbb{E}_{\mathcal{S}}[ d_{\mathcal{D}}(P_{X,\hat{g}}, P_{X,Y})] &\leq \mathbb{E}_{\mathcal{S}}\left[\sup_{f \in \mathcal{D}} \left\{\mathbb{E}f(X,Y)- \frac{1}{n}\sum_{i=1}^n f(X_i,Y_i)\right\}\right]\\
 \nonumber       &\quad +\mathbb{E}_{\mathcal{S}}\left[\sup_{f \in \mathcal{D}, g \in \mathcal{G}} \left\{\mathbb{E}f(X,g(X,\eta))-\frac{1}{n}\sum_{i=1}^nf(X_i,g(X_i,\eta_i))\right\}\right]\\
  \nonumber   &\quad +\mathbb{E}_{\mathcal{S}}\left[\sup_{f \in \mathcal{D}}\left\{\frac{1}{n}\sum_{i=1}^nf(X_i,\hat{g}(X_i,\eta_i))-\frac{1}{n}\sum_{i=1}^n f(X_i,Y_i)\right\}\right]\\
 &\coloneqq \mathcal{E}_5 + \mathcal{E}_6 + \Delta,
\end{align}
where $\mathcal{E}_5$ and $\mathcal{E}_6$ are defined in (\ref{E5}) and (\ref{E6}).
Note that $\Delta$ in (\ref{eq:IMP(Pxg,Pxy)}) is non-negative as $\mathcal{D}$ is symmetric.
To deal with $\Delta$, we introduce a new estimator $\bar{g}$ defined as
{\small
\begin{equation}\label{eqn: def_barg}
    \bar{g} = \arg\inf_{g \in \mathcal{G}} \left[\lambda_{l}\mathbb{E}\Vert \mathbb{E}_\eta g(X,\eta) -\mathbb{E}_{\eta}g^*(X,\eta)\Vert^2+\lambda_{w}\sup_{f \in \mathcal{D}} \left\{\mathbb{E}f(X,g(X,\eta))-\mathbb{E}f(X,Y)\right\}\right].
\end{equation}}
It then follows from the defintion of $\hat{g}$ that
\begin{align}
\Delta\leq \mathbb{E}_{\mathcal{S}}\left\{\sup_{f \in \mathcal{D}}\hat{L}(\hat{g},f)\right\} \leq \mathbb{E}_{\mathcal{S}}\left\{\sup_{f \in \mathcal{D}}\hat{L}(\bar{g},f)\right\}.
\end{align}
Then, similar to (\ref{eq:IMP(Pxg,Pxy)}), we have
\begin{align}\label{eq:L-bar-g}
 \nonumber    \mathbb{E}_{\mathcal{S}}\left\{\sup_{f \in \mathcal{D}}\hat{L}(\bar{g},f)\right\}\leq &\lambda_{w}\mathcal{E}_5+\lambda_{w}\mathcal{E}_6+\lambda_{w}\sup_{f \in \mathcal{D}} \left\{\mathbb{E}f(X,\bar{g}(X,\eta))-\mathbb{E}f(X,Y)\right\}\\
  &\quad + \lambda_{l}\mathbb{E}_{\mathcal{S}}\left\{\frac{1}{n}\sum_{i=1}^{n}\left\| Y_i-\frac{1}{J}\sum_{j=1}^{J} \bar{g}(X_i,\eta_{ij})\right\|^2\right\}.
\end{align}
To bound the last term in (\ref{eq:L-bar-g}), we subtract $\mathbb{E}\Vert Y-\mathbb{E}_\eta g^*(X,\eta)\Vert^2$ on both sides of (\ref{eq:L-bar-g}). Then,  
\begin{align*}
    &\mathbb{E}_{\mathcal{S}}\left\{\frac{1}{n}\sum_{i=1}^{n}\left\| Y_i-\frac{1}{J}\sum_{j=1}^{J} \bar{g}(X_i,\eta_{ij})\right\|^2\right\}-\mathbb{E}\Vert Y-\mathbb{E}_\eta g^*(X,\eta)\Vert^2\\
     \leq &\mathbb{E}_{\mathcal{S}}\left\{\frac{1}{n}\sum_{i=1}^{n}\left\| Y_i-\frac{1}{J}\sum_{j=1}^{J} \bar{g}(X_i,\eta_{ij})\right\|^2\right\}-\mathbb{E}_{\mathcal{S}}\left\{\frac{1}{n}\sum_{i=1}^{n}\Vert Y_i-\mathbb{E}_\eta \bar{g}(X_i,\eta)\Vert^2\right\}\\
    &+  \mathbb{E}\Vert\mathbb{E}_\eta \bar{g}(X,\eta)-\mathbb{E}_\eta g^*(X,\eta)\Vert^2 \\
     \leq &\mathcal{E}_2+ \mathbb{E}\Vert\mathbb{E}_\eta \bar{g}(X,\eta)-\mathbb{E}_\eta g^*(X,\eta)\Vert^2,
\end{align*}
where $\mathcal{E}_2$ is defined in (\ref{E2}). Then, by the definition of $\bar{g}$,
\begin{align*}
&\mathbb{E}\Vert\mathbb{E}_\eta \bar{g}(X,\eta)-\mathbb{E}_\eta g^*(X,\eta)\Vert^2 \\
\leq &\inf _{g \in \mathcal{G}}\left[\lambda_l \mathbb{E}\left\|\mathbb{E}_\eta g(X, \eta)-\mathbb{E}_\eta g^*(X, \eta)\right\|^2+\lambda_w \sup _{f \in \mathcal{D}}\{\mathbb{E} f(X, g(X, \eta))-\mathbb{E} f(X, Y)\}\right]\coloneqq \mathcal{E}_3,
\end{align*}
where $\mathcal{E}_3$ is  defined in (\ref{E3}). Thus,
\begin{align}\label{eqn: L_n_J_hatg_inequal}
    \mathbb{E}_{\mathcal{S}}\left\{\sup_{f \in \mathcal{D}}\hat{L}(\hat{g},f)\right\}- \lambda_{l}\mathbb{E}\Vert Y-\mathbb{E}_\eta g^*(X,\eta)\Vert^2
    \leq \lambda_{w}\mathcal{E}_5+\lambda_{w}\mathcal{E}_6+ \lambda_{\ell}\mathcal{E}_2+ \mathcal{E}_3.
\end{align}
Further, by the defintion of $\hat{L}(g,f)$,
\begin{align}\label{eq:-L_n}
	\nonumber&-\mathbb{E}_{\mathcal{S}}\left\{\sup_{f \in \mathcal{D}}\hat{L}(\hat{g},f)\right\}+\lambda_{l}\mathbb{E}\Vert Y-\mathbb{E}_\eta g^*(X,\eta)\Vert^2\\
	\nonumber=&-\lambda_l\mathbb{E}_{\mathcal{S}}\left[\sup_{f\in\mathcal{D}}\left\{\frac{1}{n}\sum_{i=1}^n\|Y_i-\frac{1}{J}\sum_{j=1}^J \hat{g}(X_i,\eta_{ij})\|^2-\frac{1}{n} \sum_{i=1}^n\left\|Y_i-\mathbb{E}_\eta \hat{g}\left(X_i, \eta\right)\right\|^2\right\}\right.\\
	\nonumber&-\left.\lambda_w\{\frac{1}{n} \sum_{i=1}^n f\left(X_i, \hat{g}\left(X_i, \eta_i\right)\right)-\frac{1}{n} \sum_{i=1}^n f\left(X_i, Y_i\right)\right\}\\
	\nonumber&+\left.\{\lambda_{l}\mathbb{E}\Vert Y-\mathbb{E}_\eta g^*(X,\eta)\Vert^2-\frac{1}{n} \sum_{i=1}^n\left\|Y_i-\mathbb{E}_\eta \hat{g}\left(X_i, \eta\right)\right\|^2\}\right]\\
	\leq&\lambda_l\mathcal{E}_2 - \lambda_w\Delta + \lambda_l\mathbb{E}_{\mathcal{S}}\{\mathbb{E}\Vert Y-\mathbb{E}_\eta g^*(X,\eta)\Vert^2-\frac{1}{n} \sum_{i=1}^n\left\|Y_i-\mathbb{E}_\eta \hat{g}\left(X_i, \eta\right)\right\|^2\}.
\end{align}
Combining (\ref{eqn: L_n_J_hatg_inequal}) and (\ref{eq:-L_n}), we have
\begin{align*}\
\lambda_w\Delta\leq& \lambda_{w}\mathcal{E}_5+\lambda_{w}\mathcal{E}_6+ 2\lambda_{\ell}\mathcal{E}_2+ \mathcal{E}_3+ \lambda_l\mathbb{E}_{\mathcal{S}}\{\mathbb{E}\Vert Y-\mathbb{E}_\eta g^*(X,\eta)\Vert^2-\frac{1}{n} \sum_{i=1}^n\left\|Y_i-\mathbb{E}_\eta \hat{g}\left(X_i, \eta\right)\right\|^2\}.
\end{align*}
As a result,
\begin{align*}
&\mathbb{E}_{\mathcal{S}}\left\{\sup _{f \in \mathcal{D}} L(\hat{g}, f)-\sup _{f \in \mathcal{F}_B^1} L\left(g^*, f\right)\right\}\\
\leq& \lambda_{\ell} \mathbb{E}_S\left\{\mathbb{E}\left\|Y-\mathbb{E}_\eta \hat{g}(X, \eta)\right\|^2-\mathbb{E}\left\|Y-\mathbb{E}_\eta g^*(X, \eta)\right\|^2\right\}+ 2\lambda_w\mathcal{E}_5+2\lambda_w\mathcal{E}_6+2\lambda_l\mathcal{E}_2+\mathcal{E}_3\\
&+\lambda_l\mathbb{E}_{\mathcal{S}}\{\mathbb{E}\Vert Y-\mathbb{E}_\eta g^*(X,\eta)\Vert^2-\frac{1}{n} \sum_{i=1}^n\left\|Y_i-\mathbb{E}_\eta \hat{g}\left(X_i, \eta\right)\right\|^2\}\\
\leq&2\lambda_w\mathcal{E}_5+2\lambda_w\mathcal{E}_6+2\lambda_l\mathcal{E}_2+\mathcal{E}_3+\mathbb{E}_S\left\{\mathbb{E}\left\|Y-\mathbb{E}_\eta \hat{g}(X, \eta)\right\|^2-\frac{1}{n} \sum_{i=1}^n\left\|Y_i-\mathbb{E}_\eta \hat{g}\left(X_i, \eta\right)\right\|^2\right\}\\
\coloneqq&2\lambda_w\mathcal{E}_5+2\lambda_w\mathcal{E}_6+2\lambda_l\mathcal{E}_2+\mathcal{E}_3+\lambda_{l}\mathcal{E}_1,
\end{align*}
where $\mathcal{E}_1$ is given in (\ref{E1}).

Consequently,
\begin{align*}
& \mathbb{E}_{\mathcal{S}}\left\{\lambda_{\ell} \mathbb{E}\left\|\mathbb{E}_\eta \hat{g}(X, \eta)-\mathbb{E}_\eta g^*(X, \eta)\right\|^2+\lambda_w d_{\mathcal{F}_B^1}\left(P_{X, \hat{g}}, P_{X, Y}\right)\right\} \\
& \leq \lambda_l \mathcal{E}_1+4 \lambda_l \mathcal{E}_2+2 \mathcal{E}_3+2 \lambda_w \mathcal{E}_4+3 \lambda_w \mathcal{E}_5+3 \lambda_w \mathcal{E}_6.
\end{align*}
The proof of Lemma \ref{lem: error_decomp} is complete.

\section{Proofs of the main theorems}\label{appdx:S2}

\subsection*{Proof of Theorem \ref{thm: non-asymptotic_LS}.}
{\color{black}
For any fixed $\lambda_l$ and $\lambda_{w}$ satisfying $0<\lambda_{l},\lambda_{w}<1$, it follows from Lemma \ref{lem: error_decomp} that
\begin{align}
    \nonumber &\mathbb{E}_{\mathcal{S}}\left\{\mathbb{E}\Vert \mathbb{E}_\eta \hat{g}(X,\eta)-\mathbb{E}_\eta g^*(X,\eta)\Vert^2\right\} \nonumber\\
    \nonumber \leq & \frac{1}{\lambda_{l}}\mathbb{E}_{\mathcal{S}}\{\lambda_{l}\mathbb{E}\Vert \mathbb{E}_\eta \hat{g}(X,\eta)-\mathbb{E}_\eta g^*(X,\eta)\Vert^2 +\lambda_{w}d_{\mathcal{F}^1_B}(P_{X,\hat{g}}, P_{X,Y})\} \label{eqn: excess_risk_LS}\\
    \leq & \frac{1}{\lambda_l}(\lambda_{l}\mathcal{E}_1+ 4\lambda_{l}\mathcal{E}_2+ 2\mathcal{E}_3+ 2\lambda_{w}\mathcal{E}_4+ 3\lambda_{w}\mathcal{E}_5+ 3\lambda_{w}\mathcal{E}_6).
\end{align}
The upper bounds of the error terms in (\ref{eqn: excess_risk_LS}) are given in Lemmas \ref{lem: stochastic_error_G_LS2} to \ref{lem: stochastic_D_G} in Section \ref{appdx:S3}.
Thus, under Conditions \ref{C1}-\ref{C4}, we have
\begin{align}
    &\mathbb{E}_{\mathcal{S}}\{\lambda_{l}\mathbb{E}\Vert \mathbb{E}_\eta \hat{g}(X,\eta)-\mathbb{E}_\eta g^*(X,\eta)\Vert^2 +\lambda_{w}d_{\mathcal{F}^1_B}(P_{X,\hat{g}}, P_{X,Y})\}\nonumber\\
    \leq &\lambda_{l}C_1(B_0\vee1)^2q^2 \frac{H_{\mathcal{G}}S_{\mathcal{G}}\log S_{\mathcal{G}}\log n}{n}\nonumber\\
    & + 16\lambda_{l}C_2qB_0^2\sqrt{\frac{H_{\mathcal{G}}S_{\mathcal{G}}\log S_{\mathcal{G}} \log J}{J}}\nonumber\\
    & + 2({ K_{\mathcal{D}}\lambda_{w}}+2B_0q^2\lambda_{l}) 19B_1(\lfloor\beta\rfloor+1)^2(m+d)^{\lfloor\beta\rfloor+1/2+(\beta \vee 1)/2}(\bar{W}\bar{H})^{-2\beta/(m+d)}\nonumber\\
    & + 2\lambda_{w}2^{d+q}(d+q)BW^{-1} \nonumber\\
    & + 3\lambda_{w}C_3 \{B+2^{d+q}(d+q)B\} \sqrt{\frac{H_{\mathcal{D}}S_{\mathcal{D}}\log(S_{\mathcal{D}})\log n}{n}}\nonumber\\
    &  + 3\lambda_{w}C_4B \{B+2^{d+q}(d+q)B\}\sqrt{\frac{(H_{\mathcal{G}}+H_{\mathcal{D}}+1)S_{\mathcal{G},\mathcal{D}}\log(S_{\mathcal{G},\mathcal{D}})\log n}{n}}, \label{eqn: error_weighted_bound}
\end{align}
where $C_1,C_2,C_3,C_4$ are positive constants independent of $n,J,\beta,B_0,B_1,B$,
$S_{\mathcal{G},\mathcal{D}}$ is the size of the network in $\mathcal{NN}(m+d,1,2W_{\mathcal{G}}+2W_{\mathcal{D}}+2d,H_{\mathcal{G}}+H_{\mathcal{D}}+1)$, and
 the Lipschitz constant for the discriminator network class  $K_{\mathcal{D}}\leq 54B2^{d+q}(d+q)^{1/2}W^2${\color{red}}.

To simplify (\ref{eqn: error_weighted_bound}), we let the network parameters $W$, $\bar{W}$ and $\hat{H}$ in the discriminator network class {\bf ND} \ref{ND1} and generator network class {\bf NG} \ref{NG1} be
\begin{align*}
    W=\lceil n^a\rceil, \bar{W}=\lceil n^b/\log^2 n\rceil, \bar{H}=\lceil\log n\rceil,
\end{align*}
where
\begin{align*}
    a &= \frac{\beta}{2\beta+\{3(m+d)\}\vee \{2\beta(d+q+1)\}},\\
    b &= \frac{3(m+d)}{2[2\beta+\{3(m+d)\}\vee \{2\beta(d+q+1)\}]}.
\end{align*}
Finally,
\begin{align*}
    &\mathbb{E}_{\mathcal{S}}\left\{\lambda_{l}\mathbb{E}\Vert \mathbb{E}_\eta \hat{g}(X,\eta)-\mathbb{E}_\eta g^*(X,\eta)\Vert^2 +\lambda_{w}d_{\mathcal{F}^1_B}(P_{X,\hat{g}}, P_{X,Y})\right\}\\
    &\quad \leq C n^{-\frac{\beta}{2\beta+\{3(m+d)\}\vee \{2\beta(d+q+1)\}}}(\log n)^{\frac{2\beta}{m+d}\vee1},
\end{align*}
where $C$ is independent of $(n,J)$. The proof of Theorem \ref{thm: non-asymptotic_LS} is complete. }

\subsection*{Proof of Theorem \ref{thm: non-asymptotic_WGAN}.}
{\color{black}
For any fixed $\lambda_{l}$ and $\lambda_{w}$ satisfying $0<\lambda_{l},\lambda_{w}<1$,
\begin{align*}
    &\mathbb{E}_{\mathcal{S}}\left\{d_{\mathcal{F}^1_B}(P_{X,\hat{g}}, P_{X,Y})\right\}\leq \frac{1}{\lambda_{w}}\mathbb{E}_{\mathcal{S}}\{\lambda_{l}\mathbb{E}\Vert \mathbb{E}_\eta \hat{g}(X,\eta)-\mathbb{E}_\eta g^*(X,\eta)\Vert^2 +\lambda_{w}d_{\mathcal{F}^1_B}(P_{X,\hat{g}}, P_{X,Y})\}.
\end{align*}
Thus, Theorem \ref{thm: non-asymptotic_WGAN} can be proved using the results in the proof of Theorem \ref{thm: non-asymptotic_LS}. We omit the details here.
}

\subsection*{Proof of Theorem \ref{thm: varying_weights}:}
{\color{black}
Following similar arguments as in the proof of Theorem \ref{thm: non-asymptotic_LS},
we know that
the Lipschitz constant $K_{\mathcal{D}}$ in (\ref{eqn: error_weighted_bound}) satisfies $ K_{\mathcal{D}} \leq   54B2^{d+q}(d+q)^{1/2}\lceil n^{2a}\rceil.$
Thus, for any $\lambda_{l},\lambda_{w}>0$ satisfying $\lambda_{l}+\lambda_{w}=1$ and $\lambda_{w}=O(n^{-\frac{1}{d+q+2}}),$
\begin{equation*}
    K_{\mathcal{D}}\lambda_{w}   \leq   C_154B2^{d+q}(d+q)^{1/2},
\end{equation*}
where $C_1$ is a positive constant independent of $(n,J)$. Therefore, for $d,q,m$ satisfying $2\beta(d+q+1)\geq 3(m+d)+\beta$, $J\gtrsim n^{\{3(m+d)+6\beta\}/\{4\beta(d+q+2)\}}$, by (\ref{eqn: error_weighted_bound}), 
\begin{align*}
    &\mathbb{E}_{\mathcal{S}}\left\{\mathbb{E}\Vert \mathbb{E}_\eta \hat{g}(X,\eta)-\mathbb{E}_\eta g^*(X,\eta)\Vert^2\right\}\\
     \leq &\frac{1}{\lambda_{l}}\mathbb{E}_{\mathcal{S}}\{\lambda_{l}\mathbb{E}\Vert \mathbb{E}_\eta \hat{g}(X,\eta)-\mathbb{E}_\eta g^*(X,\eta)\Vert^2 +\lambda_{w}d_{\mathcal{F}^1_B}(P_{X,\hat{g}}, P_{X,Y})\}\\
     \leq& C_3 n^{-\frac{3}{2(d+q+2)}}(\log n)^{\frac{2\beta}{m+d}\vee2},
\end{align*}
and
\begin{align*}
    \mathbb{E}_{\mathcal{S}}\left\{d_{\mathcal{F}^1_B}(P_{X,\hat{g}}, P_{X,Y})\right\}
    &\leq \frac{1}{\lambda_{w}}\mathbb{E}_{\mathcal{S}}\{\lambda_{l}\mathbb{E}\Vert \mathbb{E}_\eta \hat{g}(X,\eta)-\mathbb{E}_\eta g^*(X,\eta)\Vert^2 +\lambda_{w}d_{\mathcal{F}^1_B}(P_{X,\hat{g}}, P_{X,Y})\}\\
    &\leq C_3n^{-\frac{1}{2(d+q+2)}}(\log n)^{\frac{2\beta}{m+d}\vee2} \rightarrow 0,
\end{align*}
as  $n \rightarrow \infty$,
where $C_3$ is a positive constant independent of $(n,J)$.
}

\section{Supporting lemmas}\label{appdx:S3}
In this section, we give some supporting lemmas that are used to  establish the upper bound for the terms in the error decomposition and  are needed in the proof of  Theorems \ref{thm: non-asymptotic_LS} - \ref{thm: varying_weights}.


{\color{black}
We bound $\mathcal{E}_1$ in Lemma \ref{lem: stochastic_error_G_LS1}.
\begin{lemma}\label{lem: stochastic_error_G_LS1}
    Suppose Conditions \ref{C1} and \ref{C4} hold. Then,
         \begin{align*}
            &\mathbb{E}_{\mathcal{S}}\left\{\mathbb{E}\Vert Y- \mathbb{E}_\eta \hat{g}(X,\eta)\Vert^2-\frac{1}{n}\sum_{i=1}^{n}\left\| Y_i-\mathbb{E}_{\eta}\hat{g}(X_i,\eta)\right\|^2\right\}\\
            \leq & CB_0 q \sqrt{\frac{H_{\mathcal{G}}S_{\mathcal{G}}\log S_{\mathcal{G}}\{\log B_0+\log n\}}{n}},
        \end{align*}
    where $C$ is a positive constant,  and  $H_{\mathcal{G}}, S_{\mathcal{G}}$ are the width and depth of the network class $\mathcal{G}$ respectively.
\end{lemma}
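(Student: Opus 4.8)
The plan is to bound the difference between the population $L_2$-risk and its empirical counterpart, evaluated at the \emph{data-dependent} estimator $\hat g$, by a uniform (over the generator class $\mathcal G$) deviation bound, and then control this uniform deviation via standard empirical-process tools (symmetrization, Rademacher complexity, chaining) together with a covering-number estimate for ReLU networks. Concretely, writing $\ell_g(x,y)=\|y-\mathbb E_\eta g(x,\eta)\|^2$, the left-hand side is at most $\mathbb E_{\mathcal S}\sup_{g\in\mathcal G}\big(\mathbb E\,\ell_g(X,Y)-\tfrac1n\sum_{i=1}^n\ell_g(X_i,Y_i)\big)$, so it suffices to prove the uniform bound. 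Note the inner expectation $\mathbb E_\eta g(x,\eta)$ makes $\ell_g$ a fixed (non-random in $\eta$) function of $(x,y)$, so the randomness in $\mathcal S$ that matters here is only in $\{(X_i,Y_i)\}$; the $\eta_{ij}$'s play no role in this particular term (they enter $\mathcal E_2$ instead).

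The key steps, in order, would be: (i) Under Condition \ref{C1}, $Y$ and $g(X,\eta)$ live in $[-B_0,B_0]^q$, so $\ell_g$ is uniformly bounded by $O(qB_0^2)$ and, more importantly, the loss class $\{\ell_g:g\in\mathcal G\}$ is Lipschitz in $g$ with constant $O(qB_0)$; this lets me pass from the loss class to the generator class. (ii) Apply the standard symmetrization inequality to replace the empirical-process supremum by $2\,\mathbb E[\mathrm{Rad}_n(\{\ell_g\})]$, the Rademacher complexity. (iii) Use a vector-contraction / Lipschitz-composition argument (the loss is a squared Euclidean norm of a $q$-dimensional function composed with $\mathbb E_\eta g$) to reduce to the Rademacher complexity of $\mathcal G$ itself, picking up a factor $O(qB_0)$; here Condition \ref{C4} is what lets me replace $\mathbb E_\eta g(x,\eta)$ by $g(x,\eta_x)$ for a single representative $\eta_x$ so that the relevant complexity is genuinely that of the network class $\mathcal G$ and not of an averaged class. (iv) Bound the Rademacher complexity of $\mathcal G$ by a Dudley chaining integral against the uniform covering number of $\mathcal G$, and invoke the known covering-number bound for ReLU networks of the form $\log\mathcal N(\varepsilon,\mathcal G,\|\cdot\|_\infty)\lesssim S_{\mathcal G}H_{\mathcal G}\log(S_{\mathcal G}/\varepsilon)$ (where $S_{\mathcal G}$ is the number of parameters/size), which yields a bound of order $\sqrt{H_{\mathcal G}S_{\mathcal G}\log S_{\mathcal G}\log n}/\sqrt n$ after optimizing the chaining integral and absorbing the $B_0$-dependence into the log term as $\log B_0+\log n$.

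The main obstacle I anticipate is step (iii): the loss involves $\mathbb E_\eta g(X,\eta)$, which is an \emph{average} over the noise distribution, so the naive contraction lemma does not directly apply to $\mathcal G$ but to the class $\{x\mapsto\mathbb E_\eta g(x,\eta):g\in\mathcal G\}$. Condition \ref{C4} is precisely the technical device that resolves this — it guarantees that the $\ell_1$-distance between two conditional means is dominated by the distance of the generators evaluated at a fixed $\eta_x$, so that a covering of $\mathcal G$ in sup-norm induces a covering of the averaged class of no larger cardinality. I would make this reduction explicit before invoking the contraction inequality. A secondary, more routine nuisance is tracking the dependence on the dimension $q$ and the radius $B_0$ through the contraction step and verifying that the final constant $C$ is indeed independent of $n$; this is bookkeeping rather than a genuine difficulty. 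The overall structure mirrors the stochastic-error analysis in \citet{jiao2021deep} for deep least-squares regression, the difference being the extra $\mathbb E_\eta$ averaging handled via Condition \ref{C4}.
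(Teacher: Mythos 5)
Your proposal matches the paper's proof essentially step for step: the paper symmetrizes to a Rademacher complexity, uses boundedness (Condition \ref{C1}) to extract a Lipschitz factor $O(qB_0)$ from the squared loss, invokes Condition \ref{C4} exactly as you describe to reduce the covering number of the averaged class $\{x\mapsto\mathbb{E}_\eta g(x,\eta)\}$ to that of $\mathcal{G}$ evaluated at the representative points $\eta_x$, and finishes with a Dudley-type chaining bound (Lemma 12 of Huang et al.\ 2022) combined with the pseudo-dimension-based covering estimate for ReLU networks. Your ordering of the Lipschitz step relative to symmetrization and your phrasing via ``vector contraction'' are cosmetic variants of the same argument, so no substantive gap.
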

}

\begin{proof}
The proof will be done in two steps.

{\it Step 1.} Symmetrization.
Let $\{X'_i,Y'_j\}_{i=1}^n$ be $n$ i.i.d. copies of $(X,Y)$, independent of $\{X_i,Y_i\}_{i=1}^n$. And let $h_{g}(X,Y)\coloneqq \|Y-\mathbb{E}_{\eta}g(X,\eta)\|^2$ for any $g\in\mathcal{G}$. Then, by standard symmetrization technique,
\begin{align*}
    &\mathbb{E}_{\mathcal{S}}\left\{\mathbb{E}\Vert Y- \mathbb{E}_\eta \hat{g}(X,\eta)\Vert^2-\frac{1}{n}\sum_{i=1}^{n}\left\| Y_i-\mathbb{E}_{\eta}\hat{g}(X_i,\eta)\right\|^2\right\}\\
    \leq &\mathbb{E}_{\mathcal{S}}\left[\sup_{g\in\mathcal{G}}\left\{\mathbb{E}\Vert Y- \mathbb{E}_\eta g(X,\eta)\Vert^2-\frac{1}{n}\sum_{i=1}^{n}\left\| Y_i-\mathbb{E}_{\eta}g(X_i,\eta)\right\|^2\right\}\right]\\
    =&\mathbb{E}_{\mathcal{S}}\left[\sup_{g\in\mathcal{G}}\left\{\mathbb{E}h_g(X,Y)-\frac{1}{n}\sum_{i=1}^n h_g(X_i,Y_i)\right\}\right]\\
    \leq& \mathbb{E}_{\{X_i,Y_i,X'_i,Y'_i\}_{i=1}^n}\left[\sup_{g\in\mathcal{G}}\frac{1}{n}\sum_{i=1}^n \{h_g(X'_i,Y'_i)-h_g(X_i,Y_i)\} \right]\\
    = & \mathbb{E}_{\{X_i,Y_i,X'_i,Y'_i,\epsilon_i\}_{i=1}^n}\left[\sup_{g\in\mathcal{G}}\frac{1}{n}\sum_{i=1}^n \epsilon_i\{h_g(X'_i,Y'_i)-h_g(X_i,Y_i)\} \right]\\
    \leq & \mathbb{E}_{\{X_i,Y_i,X'_i,Y'_i,\epsilon_i\}_{i=1}^n}\left[\sup_{g\in\mathcal{G}}\left\{\frac{1}{n}\sum_{i=1}^n \epsilon_i\{h_g(X'_i,Y'_i)-h_g(X_i,Y_i)\}\right\} \right]\\
    \leq &  \mathbb{E}_{\{X'_i,Y'_i,\epsilon_i\}_{i=1}^n}\left\{\sup_{g\in\mathcal{G}} \frac{1}{n}\sum_{i=1}^n \epsilon_i h_g(X'_i,Y'_i) \right\}+\mathbb{E}_{\{X_i,Y_i,\epsilon_i\}_{i=1}^n}\left\{\sup_{g\in\mathcal{G}} \frac{1}{n}\sum_{i=1}^n \epsilon_i h_g(X_i,Y_i)  \right\}\\
    =&2\mathbb{E}_{\{X_i,Y_i,\epsilon_i\}_{i=1}^n}\left\{\sup_{g\in\mathcal{G}} \frac{1}{n}\sum_{i=1}^n \epsilon_i h_g(X_i,Y_i) \right\},
\end{align*}
where $\{\epsilon_j\}_{j=1}^J$ 
    are independent uniform $\{\pm1\}$-valued Rademacher random variables. We use $\mathcal{R}(\mathcal{H})$ to denote the Rademacher complexity for $\mathcal{H}$, that is 
    \begin{equation*}
        \mathcal{R}(\mathcal{H}) \coloneqq \mathbb{E}_{\{X_i,Y_i,\epsilon_i\}_{i=1}^n}\left[\sup_{g\in\mathcal{G}}\frac{1}{n}\sum_{i=1}^n \epsilon_i h_g(X_i,Y_i) \right].
    \end{equation*}

{\it Step 2.} To bound the Rademacher complexity $\mathcal{R}(\mathcal{H})$. 
We define a function class $\mathcal{G}_{|\{X_i\}_{i=1}^n}=\{\mathbb{E}_{\eta}g(X_1,\eta),\ldots,\mathbb{E}_{\eta}g(X_n,\eta):g\in\mathcal{G}\}$. Then,
under Condition \ref{C1},
\begin{align*}
    \mathcal{R}(\mathcal{H})\leq &
    4B_0 \mathbb{E}_{\{X_i,Y_i,\epsilon_i\}_{i=1}^n}\left[\sup_{g\in\mathcal{G}}\frac{1}{n}\sum_{i=1}^n \epsilon_i \{Y_i-\mathbb{E}_{\eta}g(X_i,\eta)\}\right]\\
    \leq & 4B_0 \mathbb{E}_{\{Y_i,\epsilon_i\}_{i=1}^n} \left[\frac{1}{n}\sum_{i=1}^n \epsilon_i Y_i\right]+ 4B_0 \mathbb{E}_{\{X_i,\epsilon_i\}_{i=1}^n} \left[\sup_{g\in\mathcal{G}}\frac{1}{n}\sum_{i=1}^n \epsilon_i\mathbb{E}_{\eta} g(X_i,\eta)\right]\\
    \leq &  32\mathbb{E}_{\{(X_i,Y_i)\}_{i=1}^n} \left[\inf_{0< \delta < B_0/2}\left\{\delta+ \frac{3}{\sqrt{n}}\int_{\delta}^{B_0/2}
        \sqrt{\mathcal{N}(\epsilon, \mathcal{G}_{|\{X_i\}_{i=1}^n}, \|\cdot\|_{\infty})}d\epsilon\right\}\right],
\end{align*}
    where the last inequality is by Lemma 12 in Huang {\it et al.} (2022). By Condition \ref{C4}, there exists a vector $\eta_x\in\Omega_{\eta}$ such that for any $g$, $\tilde{g}\in\mathcal{G}$,
    \begin{align*}
        \|\mathbb{E}_{\eta}g(x,\eta)-\mathbb{E}_{\eta}\tilde{g}(x,\eta)\|_1\leq \|g(x,\eta_x)-\tilde{g}(x,\eta)\|_1.
    \end{align*}
    Thus,
    \begin{align*}
        \mathcal{N}(\epsilon,\mathcal{G}_{|\{X_i\}_{i=1}^n},\|\cdot\|_{\infty})\leq \mathcal{N}(\epsilon,\mathcal{G}_{|\{X_{i},\eta_{xi}\}_{i=1}^n},\|\cdot\|_{\infty}).
    \end{align*}

   Next, we consider to use $\text{Pdim}(\mathcal{G})$, the pseudo-dimension of $\mathcal{G}$, to bound $\mathcal{N}(\epsilon, \mathcal{G}_{|\{(X_i,\eta_i)\}_{i=1}^n}, \|\cdot\|_{\infty})$.
    When $n \geq \text{Pdim}(\mathcal{G})$, its upper bound can be directly obtained by Theorem 12.2 in \cite{anthony1999neural}. 
    When $n < \text{Pdim}(\mathcal{G})$, $\mathcal{G}_{|\{(X_i,\eta_i)\}_{i=1}^n}\subseteq \{x\in \mathbb{R}^n:\|x\|_{\infty}\leq B_2\}$ can be covered by at most $\lceil2B_0/\epsilon\rceil^n$ balls with radius $\epsilon$ in the distance $\|\cdot\|_{\infty}$. Thus, for any $n$ and $\epsilon >0$,
    \begin{equation*}
        \log \mathcal{N}(\epsilon, \mathcal{G}_{|\{(X_i,\eta_i)\}_{i=1}^n}, \|\cdot\|_{\infty}) \leq \text{Pdim}(\mathcal{G})\log\left[\frac{2eB_0n}{\epsilon}\right].
    \end{equation*}
    Since $\mathcal{G}$ is a ReLU network class, according to \cite{bartlett2019nearly}, 
    its pseudo-dimension can be bounded by
    \begin{equation}\label{pdim_G}
        H_{\mathcal{G}}S_{\mathcal{G}}\log(S_{\mathcal{G}}/H_{\mathcal{G}}) \lesssim \text{Pdim}(\mathcal{G}) \lesssim H_{\mathcal{G}}S_{\mathcal{G}}\log S_{\mathcal{G}},
    \end{equation}
    where $H_{\mathcal{G}}, S_{\mathcal{G}}$ are the width and depth of the network class $\mathcal{G}$, respectively.
    As a result, $\mathcal{E}_1$ can be bounded by
    \begin{align*}
        &\mathbb{E}_{\mathcal{S}}\left\{\mathbb{E}\Vert Y- \mathbb{E}_\eta \hat{g}(X,\eta)\Vert^2-\frac{1}{n}\sum_{i=1}^{n}\left\| Y_i-\mathbb{E}_{\eta}\hat{g}(X_i,\eta)\right\|^2\right\}\\
        \leq & 32 \inf_{0< \delta < B_2/2} \left(\delta+ \frac{3\sqrt{H_{\mathcal{G}}S_{\mathcal{G}}\log S_{\mathcal{G}}(\mathcal{G})}}{\sqrt{n}}\int_{\delta}^{B_0/2}
         \sqrt{\log[2eB_2n/\epsilon]}d\epsilon\right)\nonumber\\
        \leq & 32 \inf_{0< \delta < B_0/2}\left(\delta+ \frac{3B_2}{2}\sqrt{\frac{H_{\mathcal{G}}S_{\mathcal{G}}\log S_{\mathcal{G}}(\mathcal{G})\log[2eB_0n/\delta]}{n}}\right)\nonumber\\
        \leq & C B_0\sqrt{\frac{H_{\mathcal{G}}S_{\mathcal{G}}\log S_{\mathcal{G}}(\mathcal{G})[\log n + \log B_0]}{n}},
    \end{align*}
\end{proof}

{\color{black} Next, we intend to bound $\mathcal{E}_2$,  the  stochastic error for the generator. Under Condition \ref{C1}, for any $g\in\mathcal{G}$ and $i\in\{1,\ldots,n\}$,
\begin{align}
        \nonumber &\mathbb{E}_{\mathcal{S}}\left\{\sup_{g\in\mathcal{G}}\frac{1}{n}\sum_{i=1}^n\left|\left\| Y_i-\frac{1}{J}\sum_{j=1}^{J} g(X_i,\eta_{ij})\right\|^2- \Vert Y_i-\mathbb{E}_\eta g(X_i,\eta)\Vert^2\right|\right\}\\
        \leq & 4B_0\mathbb{E}_{\mathcal{S}}\left[\sup_{g \in \mathcal{G}}\left\{\frac{1}{n}\sum_{i=1}^n \left\|\mathbb{E}_\eta g(X_i,\eta)-\frac{1}{J}\sum_{j=1}^{J} g(X_i,\eta_{ij})\right\|_1\right\}\right].\label{eqn: Err_2_7_inequ}
    \end{align}
Then, we can establish the upper bound of (\ref{eqn: Err_2_7_inequ}) in the following lemma.

\begin{lemma}\label{lem: stochastic_error_G_LS2}
    Assume Condition \ref{C1} hold. Then,
    \begin{align*}
            &\mathbb{E}_{\{X_i\}_{i=1}^n \cup \{\eta_{ij}\}_{j=1}^J}\left[\sup_{g \in \mathcal{G}}\left\{\frac{1}{n}\sum_{i=1}^n \left\|\mathbb{E}_\eta g(X_i,\eta)-\frac{1}{J}\sum_{j=1}^{J} g(X_i,\eta_{ij})\right\|_1\right\}\right]\\
            & \quad \leq CqB_0\sqrt{\frac{H_{\mathcal{G}}S_{\mathcal{G}}\log S_{\mathcal{G}}\{\log J+ \log(B_0)\}}{J}},
        \end{align*}
    where $C>0$ is a universal constant.
\end{lemma}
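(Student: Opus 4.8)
The plan is to follow the same two-step template as the proof of Lemma \ref{lem: stochastic_error_G_LS1} --- symmetrization followed by a Dudley entropy-integral bound --- with one extra structural observation: the supremum over $g$ can be pushed \emph{inside} the average over $i$ without harming the rate, because the quantity being controlled is a Monte Carlo error governed by the $J$ noise draws attached to each data point, not by $n$. Write $\Delta_i(g):=\mathbb{E}_\eta g(X_i,\eta)-\frac1J\sum_{j=1}^J g(X_i,\eta_{ij})\in\mathbb{R}^q$ and let $\Delta_{i,k}(g)$ be its $k$-th coordinate. Since $\|\Delta_i(g)\|_1=\sum_{k=1}^q|\Delta_{i,k}(g)|$, it is enough to bound $\mathbb{E}_{\mathcal S}\big[\sup_{g\in\mathcal G}\frac1n\sum_{i=1}^n|\Delta_{i,k}(g)|\big]$ for a fixed coordinate $k$ and then sum the $q$ resulting bounds, which will produce the factor $q$ in the statement.

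First I would reduce the problem to a single noise batch. Using $\sup_g\frac1n\sum_i|\Delta_{i,k}(g)|\le\frac1n\sum_i\sup_g|\Delta_{i,k}(g)|$ and the fact that the tuples $(X_i,\eta_{i1},\dots,\eta_{iJ})$, $i=1,\dots,n$, are i.i.d.\ (with the noise independent of $X$), the random variables $\sup_g|\Delta_{i,k}(g)|$ are identically distributed, so $\mathbb{E}_{\mathcal S}\big[\frac1n\sum_i\sup_g|\Delta_{i,k}(g)|\big]=\mathbb{E}\big[\sup_g|\Delta_{1,k}(g)|\big]$. Conditioning on $X_1=x$, this is precisely the expected uniform deviation of $\frac1J\sum_{j=1}^J g_k(x,\eta_{1j})$ from its mean $\mathbb{E}_\eta g_k(x,\eta)$ over the scalar-valued ReLU network class $\mathcal G_x^{(k)}:=\{\eta\mapsto g_k(x,\eta):g\in\mathcal G\}$, based on the $J$ i.i.d.\ samples $\eta_{11},\dots,\eta_{1J}$.

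Next I would apply symmetrization and the chaining bound exactly as in Step~2 of the proof of Lemma \ref{lem: stochastic_error_G_LS1} (Lemma 12 in Huang \emph{et al.} (2022)) to obtain, for every $x$,
\[
\mathbb{E}_{\{\eta_{1j}\}}\Big[\sup_{g}\big|\mathbb{E}_\eta g_k(x,\eta)-\tfrac1J\textstyle\sum_{j=1}^J g_k(x,\eta_{1j})\big|\Big]\;\le\;C\inf_{0<\delta<B_0/2}\Big\{\delta+\tfrac1{\sqrt J}\int_\delta^{B_0/2}\sqrt{\log\mathcal N\big(\epsilon,\mathcal G_x^{(k)}|_{\{\eta_{1j}\}},\|\cdot\|_\infty\big)}\,d\epsilon\Big\},
\]
where Condition \ref{C1} supplies the uniform envelope $|g_k|\le B_0$ (hence $|\Delta_{1,k}(g)|\le 2B_0$). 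The covering number of the restriction $\mathcal G_x^{(k)}|_{\{\eta_{1j}\}}\subseteq\mathbb{R}^J$ is controlled through its pseudo-dimension: since $g_k(x,\cdot)$ is computed by a subnetwork of $g$ of width and depth no larger than those of $\mathcal G$, we have $\text{Pdim}(\mathcal G_x^{(k)})\lesssim H_{\mathcal G}S_{\mathcal G}\log S_{\mathcal G}$ by (\ref{pdim_G}), uniformly in $x$, so $\log\mathcal N(\epsilon,\cdot)\le\text{Pdim}(\mathcal G_x^{(k)})\log(2eB_0 J/\epsilon)$. Substituting this into the entropy integral and optimising over $\delta$ (of order $J^{-1/2}$) yields the bound $CB_0\sqrt{H_{\mathcal G}S_{\mathcal G}\log S_{\mathcal G}(\log J+\log B_0)/J}$, which does not depend on $x$; taking $\mathbb{E}_{X_1}$ leaves it unchanged, and summing over $k=1,\dots,q$ finishes the proof.

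The argument is largely parallel to Lemma \ref{lem: stochastic_error_G_LS1}, so the delicate points --- and where I expect the main work to lie --- are: (i) justifying that moving $\sup_g$ inside the $i$-average loses nothing in rate, i.e.\ that the per-point Monte Carlo error is $O(J^{-1/2})$ regardless of $n$ (this is exactly why the final bound carries $J$, not $nJ$, in the denominator); and (ii) the book-keeping to pass from $\text{Pdim}(\mathcal G)$ to $\text{Pdim}(\mathcal G_x^{(k)})$ uniformly over $x$ and $k$ --- i.e.\ verifying that fixing the first $d$ coordinates of the input and selecting a single output coordinate of a ReLU network does not enlarge the pseudo-dimension beyond the stated width--depth product. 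The remaining pieces (the symmetrization constants and the entropy-integral optimisation) are routine and mirror the computation already carried out for $\mathcal E_1$.
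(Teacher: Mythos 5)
Your proposal follows essentially the same route as the paper: push the supremum inside the $i$-average, reduce to $i=1$ by exchangeability, split the $\ell_1$ norm coordinatewise (giving the factor $q$), symmetrize, and bound the conditional Rademacher complexity via covering numbers controlled by the pseudo-dimension of the coordinate class $\mathcal{G}_k$. One clarification on your worry (ii): the paper never passes to $\text{Pdim}(\mathcal{G}_x^{(k)})$ --- it covers the restriction of $\mathcal{G}_k$ (functions of $(x,\eta)\in\mathbb{R}^{d+m}$) to the $J$ particular points $\{(X,\eta_j)\}_{j=1}^J$, so $\text{Pdim}(\mathcal{G}_k)\lesssim H_{\mathcal{G}}S_{\mathcal{G}}\log S_{\mathcal{G}}$ applies directly with no uniformity-over-$x$ ``slice'' argument needed; the paper also uses a single-scale cover plus Massart's finite-class lemma rather than a Dudley entropy integral, but both yield the same rate once $\delta$ is optimized.
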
}

\begin{proof}{\color{black}
    Recall that $\eta\sim P_{\eta}$ and $\{\eta_{ij},i=1,\ldots,n,j=1,\ldots,J\}$ is $n\times J$ i.i.d. copies of $\eta$, independent of $\{X_i\}_{i=1}^n$. The  proof will be done in three steps.

    {\it Step 1:} To decompose the network class $\mathcal{G}$.  The network class $\mathcal{G}$ can be decomposed into the product $\mathcal{G}_1\otimes \mathcal{G}_2\otimes\ldots\otimes\mathcal{G}_q$, where $\mathcal{G}_k$ satisfies
    that for $g=(g_1,\ldots, g_q)^{\top} \in \mathcal{G}$, $g_k \in \mathcal{G}_k \subseteq \mathbb{R}$ for $k=1,\ldots,q$.
    Note that
    \begin{align}\label{eqn: stochastic_error_Gk}
        &\mathbb{E}_{\{X_i\}_{i=1}^n \cup \{\eta_{ij}\}_{j=1}^J}\left[\sup_{g \in \mathcal{G}}\left\{\frac{1}{n}\sum_{i=1}^n \left\|\mathbb{E}_\eta g(X_i,\eta)-\frac{1}{J}\sum_{j=1}^{J} g(X_i,\eta_{ij})\right\|_1\right\}\right]\nonumber\\
        \leq &\mathbb{E}_{\{X_i\}_{i=1}^n \cup \{\eta_{ij}\}_{j=1}^J}\left[\frac{1}{n}\sum_{i=1}^n\left\{\sup_{g \in \mathcal{G}}\left\|\mathbb{E}_\eta g(X_i,\eta)-\frac{1}{J}\sum_{j=1}^{J} g(X_i,\eta_{ij})\right\|_1\right\}\right]\nonumber\\
         = & \mathbb{E}_{\{X\} \cup \{\eta_{j}\}_{j=1}^J}\left\{\sup_{g \in \mathcal{G}}\left\|\mathbb{E}_\eta g(X,\eta)-\frac{1}{J}\sum_{j=1}^{J} g(X,\eta_j)\right\|_1\right\}\nonumber\\
        \leq & \sum_{k=1}^q \mathbb{E}_{\{X\} \cup \{\eta_j\}_{j=1}^J}\left\{\sup_{g_k \in \mathcal{G}_k}\left|\mathbb{E}_\eta g_k(X,\eta)-\frac{1}{J}\sum_{j=1}^{J} g_k(X,\eta_j)\right|\right\}\nonumber\\
         = &\sum_{k=1}^q \mathbb{E}_{X}\left[\mathbb{E}_{\{\eta_j\}_{j=1}^J}\left\{\sup_{g_k \in \mathcal{G}_k}\left|\mathbb{E}_\eta g_k(X,\eta)-\frac{1}{J}\sum_{j=1}^{J} g_k(X,\eta_j)\right|\right\}\right].
    \end{align}

    {\it Step 2:} To bound $\mathbb{E}_{\{\eta_j\}_{j=1}^J}\{\sup_{g_k \in \mathcal{G}_k}|\mathbb{E}_\eta g_k(X,\eta)-\frac{1}{J}\sum_{j=1}^{J} g_k(X,\eta_j)|\}$.
    Let $\{\eta'_{j}\}_{j=1}^J$ be $J$ i.i.d. copies of $\eta$, independent of $\{\eta_j\}_{j=1}^J$ and $X$.
    It then follows from the standard symmetrization technique that given $X$, for $k=1,\ldots,q$,
    \begin{align*}
        &\mathbb{E}_{\{\eta_j\}_{j=1}^J}\left\{\sup_{g_k \in \mathcal{G}_k}\left|\mathbb{E}_\eta g_k(X,\eta)-\frac{1}{J}\sum_{j=1}^{J} g_k(X,\eta_j)\right|\right\}\\
        \leq & \mathbb{E}_{\{\eta_j\}_{j=1}^J}\left[\sup_{g_k\in \mathcal{G}_k}\mathbb{E}_{\{\eta'_j\}_{j=1}^J}\left|\frac{1}{J}\sum_{j=1}^J\{g_k(X,\eta_j)-g_k(X,\eta'_j)\}\right|\right]\\
         =&\mathbb{E}_{\{\eta_j,\eta'_j,\epsilon_j\}_{j=1}^J}\left[\sup_{g_k\in \mathcal{G}_k}\left|\frac{1}{J}\sum_{j=1}^J\epsilon_j\{g_k(X,\eta_j)-g_k(X,\eta'_j)\}\right|\right]\\
         \leq &2\mathbb{E}_{\{\eta_j,\epsilon_j\}_{j=1}^J}\left\{\sup_{g_k \in \mathcal{G}_k}\left|\frac{1}{J}\sum_{j=1}^J\epsilon_jg_k(X,\eta_j)\right|\right\}\\
         =&2\mathcal{R}(\mathcal{G}_k|X),
    \end{align*}
     where $\mathcal{R}(\mathcal{G}_k|X)$ is the conditional Rademacher complexity of $\mathcal{G}_k$ given $X$. 

    {\it Step 3:} To bound the conditional Rademacher complexity $\mathcal{R}(\mathcal{G}_k|X)$.
    For any $g_k, \tilde{g}_k \in \mathcal{G}_k$,
    \begin{align*}
        \left|\frac{1}{J}\sum_{j=1}^J\epsilon_jg_k(X,\eta_j)\right|-\left|\frac{1}{J}\sum_{j=1}^J\epsilon_j\tilde{g}_k(X,\eta_j)\right|
        &\leq \left|\frac{1}{J}\sum_{j=1}^J\epsilon_j\{g_k(X,\eta_j)-\tilde{g}_k(X,\eta_j)\}\right|\\
        &\leq \max_{1\leq j \leq J} |g_k(X,\eta_j)-\tilde{g}_k(X,\eta_j)|.
    \end{align*}
    We use $e_{X,J}(g_k,\tilde{g}_k)$ to denote the distance between $g_k$ and $\tilde{g}_{k}$ with respect to $\{\eta_j\}_{j=1}^J$ and $X$  defined as
    \begin{align*}
        e_{X,J}(g_k,\tilde{g}_k) = \|g_k-\tilde{g}_k\|_{L^{\infty}(\{(X,\eta_j)\}_{j=1}^J)}.
    \end{align*}
     For any $\delta>0$, we define $\mathcal{G}_{k,\delta}$ to be a covering set of $\mathcal{G}_k$ with radius $\delta$ with respect to $e_{X,J}(\cdot)$ and $\mathcal{N}(\delta, \mathcal{G}_{k}, e_{X,J})$ to be the $\delta$-covering number of $\mathcal{G}_{k}$ with respect to the distance $e_{X,J}(\cdot)$.
    The, by  the triangle inequality and Lemma B.4 in \cite{zjlh2022}, we have
    \begin{align*}
        \mathcal{R}(\mathcal{G}_k|X)&\leq \delta + \mathbb{E}_{\{\eta_j,\epsilon_j\}_{j=1}^J}\left\{\sup_{g_k \in \mathcal{G}_{k,\delta}}\left|\frac{1}{J}\sum_{j=1}^J\epsilon_jg_k(X,\eta_j)\right|\right\}\\
        &\leq   \delta  +  \frac{C_1}{J} \mathbb{E}_{\{\eta_j\}_{j=1}^J}\left[\left\{\log \mathcal{N}(\delta, \mathcal{G}_{k}, e_{X,J})\right\}^{1/2} \left\{\max_{g_k \in \mathcal{G}_{k,\delta}}\sum_{j=1}^J g_k^2(X,\eta_j)\right\}^{1/2}\right]\\
         &\leq   \delta  + \frac{C_1B_0}{\sqrt{J}}\mathbb{E}_{\{\eta_j\}_{j=1}^J}\left[\left\{\log \mathcal{N}(\delta, \mathcal{G}_{k}, e_{X,J})\right\}^{1/2}\right],
    \end{align*}
    where $C_1>0$ is a constant and the third inequality holds since the probability measure of $(X, g(X,\eta))$ is supported on $\Omega \subseteq [-B_0,B_0]^{d+q}\subseteq \mathbb{R}^{d+q}$ for any $g \in \mathcal{G}$.
     It follows from Theorem 12.2 in \cite{anthony1999neural} 
     that for $J\geq\text{Pdim}(\mathcal{G}_k)$,
    \begin{equation*}
       \log \mathcal{N}(\delta, \mathcal{G}_{k}, e_{X,J})=\log \mathcal{N}(\delta, \mathcal{G}_{k|\{(X,\eta_j)\}_{j=1}^J},\|\cdot\|_{\infty})
       \leq \text{Pdim}(\mathcal{G}_k)\log\left\{\frac{2eJB_0}{\delta\text{Pdim}(\mathcal{G}_k)}\right\},
    \end{equation*}
    where $\mathcal{G}_{k|\{(X,\eta_j)\}_{j=1}^J}=\{(g_k(\eta_1,X),\ldots,g_k(X,\eta_j)):g_k \in \mathcal{G}_k\}$ and $\mathcal{N}(\delta, \mathcal{G}_{k|\{(X,\eta_j)\}_{j=1}^J}, \|\cdot\|_{\infty})$ is the $\delta$-covering number of $\mathcal{G}_{k|\{(X,\eta_j)\}_{j=1}^J} \subseteq \mathbb{R}^J$ with respect to the distance $\|\cdot\|_{\infty}$.
    When $J < \text{Pdim}(\mathcal{G}_k)$, $\mathcal{G}_{k|\{(X,\eta_j)\}_{j=1}^J}\subseteq \{x\in \mathbb{R}^J:\|x\|_{\infty}\leq B_0\}$ can be covered by at most $\lceil2B_0/\delta\rceil^J$ balls with radius $\delta$ with respect to the distance $\|\cdot\|_{\infty}$. This means that for any $J$, \begin{equation*}
        \log \mathcal{N}(\delta, \mathcal{G}_{k}, e_{X,J})  \leq \text{Pdim}(\mathcal{G}_k)\log\left(\frac{2eJB_0}{\delta}\right).
    \end{equation*}
    Therefore,
    \begin{equation*}
       \mathcal{R}(\mathcal{G}_k|X) \leq \delta  + \frac{C_1B_0}{\sqrt{J}}\log\left(\frac{2eJB_0}{\delta}\right)\text{Pdim}(\mathcal{G}_k).
    \end{equation*}
    Moreover, \cite{bartlett2019nearly} gives the following result for the pseudo-dimension of a
    ReLU network class:
    \begin{equation*}
        \text{Pdim}(\mathcal{G}_k) \lesssim H_{\mathcal{G}}S_{\mathcal{G}}\log S_{\mathcal{G}},
    \end{equation*}
    where $H_{\mathcal{G}}, S_{\mathcal{G}}$ are the width and depth of the network class $\mathcal{G}$, respectively. By letting $\delta=1/J$, we have
    \begin{align*}
        \mathbb{E}_{\{\eta_j\}_{j=1}^J}\left\{\sup_{g_k \in \mathcal{G}_k}\left|\mathbb{E}_\eta g_k(X,\eta)-\frac{1}{J}\sum_{j=1}^{J} g_k(X,\eta_j)\right|\right\}
        \lesssim B_0\sqrt{\frac{H_{\mathcal{G}}S_{\mathcal{G}}\log S_{\mathcal{G}}\{\log J+ \log(B_0)\}}{J}}.
    \end{align*}
   Thus, we have proved Lemma \ref{lem: stochastic_error_G_LS2}  in view of (\ref{eqn: stochastic_error_Gk}).
}

\end{proof}

{\color{black}
{\color{black}We next can bound  $\mathcal{E}_3$ by combining} Lemmas \ref{lem: bound e3} and \ref{lem: approx_error_G}.
\begin{lemma}\label{lem: bound e3}
    Suppose Condition \ref{C1} holds.
    Then, 
    \begin{equation}\label{eqn: Err_3_8_inequ}
    \mathcal{E}_3 \leq (K_{\mathcal{D}}\lambda_{w}+2B_0q^2\lambda_{l}) \inf_{g \in \mathcal{G}} \sup_{f \in \mathcal{F}^1_{B }}\left\{\mathbb{E}f(X,g(X,\eta))-\mathbb{E}f(X,Y)\right\},
\end{equation}
where $K_{\mathcal{D}}$ is the Lipschitz constant defined in {\bf ND} \ref{ND1}.
\end{lemma}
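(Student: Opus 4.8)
The plan is to prove, for every fixed $g\in\mathcal{G}$, the pointwise inequality
\begin{align*}
&\lambda_{l}\,\mathbb{E}\|\mathbb{E}_\eta g(X,\eta)-\mathbb{E}_\eta g^*(X,\eta)\|^2
+\lambda_{w}\sup_{f\in\mathcal{D}}\{\mathbb{E}f(X,g(X,\eta))-\mathbb{E}f(X,Y)\}\\
&\qquad\le (K_{\mathcal{D}}\lambda_{w}+2B_0q^2\lambda_{l})\sup_{f\in\mathcal{F}^1_{B}}\{\mathbb{E}f(X,g(X,\eta))-\mathbb{E}f(X,Y)\},
\end{align*}
and then to take $\inf_{g\in\mathcal{G}}$ on both sides; this is legitimate since a pointwise domination $A(g)+B(g)\le C(g)$ forces $\inf_g\{A(g)+B(g)\}\le\inf_g C(g)$. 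I treat the two summands on the left separately.

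\emph{Discriminator term.} By the definition of $\mathcal{D}$ in {\bf ND} \ref{ND1}, every $f\in\mathcal{D}$ is $K_{\mathcal{D}}$-Lipschitz on the common support $\Omega\subseteq[-B_0,B_0]^{d+q}$. Since $\mathbb{E}f(X,g(X,\eta))-\mathbb{E}f(X,Y)$ is unchanged when $f$ is replaced by $f-f(z_0)$ for any fixed $z_0\in\Omega$ (both arguments are probability measures), I may assume $f(z_0)=0$, so $\|f\|_{\infty,\Omega}\le K_{\mathcal{D}}\,\mathrm{diam}(\Omega)\le 2B_0\sqrt{d+q}\,K_{\mathcal{D}}$. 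Hence $f/K_{\mathcal{D}}$ is $1$-Lipschitz on $\Omega$ and, after a McShane extension followed by truncation at level $B$ (using that $B$ is taken no smaller than $\mathrm{diam}(\Omega)$, which is also what makes $d_{\mathcal{F}^1_B}$ coincide with the $1$-Wasserstein distance), $f/K_{\mathcal{D}}\in\mathcal{F}^1_{B}$. Therefore $\mathbb{E}f(X,g(X,\eta))-\mathbb{E}f(X,Y)\le K_{\mathcal{D}}\,d_{\mathcal{F}^1_B}(P_{X,g},P_{X,Y})$, and taking the supremum over $f\in\mathcal{D}$ bounds this summand by $K_{\mathcal{D}}\lambda_{w}\,d_{\mathcal{F}^1_B}(P_{X,g},P_{X,Y})$.

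\emph{Least-squares term.} By the noise-outsourcing identity (\ref{noise-out1})--(\ref{noise-out2}), $\mathbb{E}_\eta g^*(X,\eta)=\mathbb{E}(Y\mid X)$ almost surely, so this summand equals $\lambda_{l}\,\mathbb{E}_X\|v_g(X)\|_2^2$ with $v_g(x):=\mathbb{E}_\eta g(x,\eta)-\mathbb{E}(Y\mid X=x)$. Under Condition \ref{C1}, $\|v_g(x)\|_\infty\le 2B_0$, whence $\|v_g(x)\|_2^2\le\|v_g(x)\|_\infty\|v_g(x)\|_1\le 2B_0\sum_{k=1}^q|v_{g,k}(x)|$. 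For each coordinate $k$ I relate $\mathbb{E}_X|v_{g,k}(X)|$ to the integral probability metric by testing against functions $f(x,y)=c(x)\,y_k$, where $c$ is a (truncated, $[-1,1]$-valued) $1$-Lipschitz function of $x$: such an $f$ is $O(1)$-Lipschitz on $\Omega$ since its $y$-gradient is $c(x)e_k$ and its $x$-gradient $y_k\nabla c(x)$, both controlled by $B_0$, and it satisfies $\mathbb{E}_{(X,\eta)}f(X,g(X,\eta))-\mathbb{E}_{(X,Y)}f(X,Y)=\mathbb{E}_X[c(X)v_{g,k}(X)]$. Optimizing $c$ over this class (so it tracks the sign of $v_{g,k}$ as closely as the Lipschitz constraint allows), summing over the $q$ coordinates, and applying the usual rescaling of $f$ into $\mathcal{F}^1_B$ together with the norm-conversion constant above bounds $\lambda_{l}\,\mathbb{E}_X\|v_g(X)\|_2^2$ by $2B_0q^2\lambda_{l}\,d_{\mathcal{F}^1_B}(P_{X,g},P_{X,Y})$. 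Adding the two bounds and taking $\inf_{g\in\mathcal{G}}$ yields the lemma.

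The main obstacle is precisely this last step: passing from the \emph{squared} $L_2$ discrepancy of the conditional means to the $1$-Wasserstein-type metric $d_{\mathcal{F}^1_B}$. A fixed-direction linear test function only recovers $\|\mathbb{E}_X v_g(X)\|$, which by Jensen is far weaker than $\mathbb{E}_X\|v_g(X)\|^2$, so the test function must genuinely depend on $x$; keeping an $x$-dependent, $y$-affine function admissible for the $1$-Lipschitz class is what forces the detour through $\|v_g\|_2^2\le 2B_0\|v_g\|_1$ and the coordinatewise argument, and it is where the factors $B_0$ and $q^2$ arise. The discriminator term, in contrast, is only bookkeeping around the rescaling $f\mapsto f/K_{\mathcal{D}}$, and the passage to the infimum is immediate.
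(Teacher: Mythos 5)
Your handling of the discriminator term (recentering $f$, rescaling by $K_{\mathcal{D}}$, checking boundedness so that $f/K_{\mathcal{D}}\in\mathcal{F}^1_B$) and the reduction $\|v_g(x)\|_2^2\le 2B_0\|v_g(x)\|_1$ both match the paper, and passing to $\inf_{g}$ at the end is legitimate. The genuine gap is exactly at the step you flag as the main obstacle: bounding $\mathbb{E}_X|v_{g,k}(X)|$ by the integral probability metric using witnesses $f(x,y)=c(x)\,y_k$ with $c$ a bounded $1$-Lipschitz function of $x$. The optimal witness is $c=\mathrm{sign}(v_{g,k})$, which is not Lipschitz, and ``tracking the sign as closely as the Lipschitz constraint allows'' does not recover $\mathbb{E}_X|v_{g,k}(X)|$: over the bounded $1$-Lipschitz ball you only obtain the weaker dual (flat-norm-type) quantity $\sup_c\mathbb{E}_X[c(X)v_{g,k}(X)]$, which can be arbitrarily smaller than $\mathbb{E}_X|v_{g,k}(X)|$ when $v_{g,k}$ changes sign on a fine scale (think $v_{g,k}(x)\approx\epsilon\sin(x/\delta)$ with $\delta$ small). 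Nothing in the hypotheses excludes this: $\mathcal{G}$ carries no Lipschitz constraint, so $v_{g,k}(\cdot)=\mathbb{E}_\eta g^{(k)}(\cdot,\eta)-\mathbb{E}(Y^{(k)}|X=\cdot)$ is not controlled in $x$, and no bookkeeping with $B_0$ and $q$ closes the gap. So the least-squares half of your pointwise inequality is not established.

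For comparison, the paper routes this step differently: it applies the elementary bound $|\mathbb{E}U-\mathbb{E}V|\le\mathcal{W}_1(P_U,P_V)$ coordinatewise and conditionally on $X=x$ (the identity being $1$-Lipschitz), giving $\mathbb{E}_X|v_{g,k}(X)|\le\mathbb{E}_X\,\mathcal{W}_1(P_{g^{(k)}(X,\eta)|X},P_{Y^{(k)}|X})$, and then bounds this averaged conditional distance by the joint distance $\mathcal{W}_1(P_{X,g},P_{X,Y})=\sup_{f\in\mathcal{F}^1_B}\{\mathbb{E}f(X,g(X,\eta))-\mathbb{E}f(X,Y)\}$, thereby avoiding any $x$-dependent witness. (Whether the paper's own passage from the averaged conditional Wasserstein distance to the joint one is fully justified is a separate question, but it is a different step from yours.) To repair your argument you would need either a Lipschitz bound on $v_{g,k}$ in $x$ — which is unavailable — or to replace the witness construction by the conditional-Wasserstein route.
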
}

\begin{proof}{\color{black}

     Let $Y=(Y^{(1)},\ldots,Y^{(q)})^{\top} \in \mathbb{R}^q$.
   Under Condition \ref{C1}, for any $g =(g^{(1)}, \ldots, g^{(q)})^{\top} \in \mathcal{G}$,
\begin{align*}
    \mathbb{E}\Vert \mathbb{E}_\eta g(X,\eta) -\mathbb{E}_\eta g^*(X,\eta)\Vert^2
    &\leq \mathbb{E}\Vert \mathbb{E}_\eta g(X,\eta) -\mathbb{E}(Y|X)\Vert_1^2\\
    &\leq 2B_0q \mathbb{E}\Vert \mathbb{E}_\eta g(X,\eta) -\mathbb{E}(Y|X)\Vert_1\\
    &\leq 2B_0q^2 \max_{k=1,\ldots, q}\mathbb{E}\left|\mathbb{E}_\eta g^{(k)}(X,\eta) -\mathbb{E}(Y^{(k)}|X)\right|.
\end{align*}
Let $F_{g^{(k)}(X,\eta)|X}(\cdot)$ and $F_{Y^{(k)}|X}(\cdot)$ denote the conditional cumulative distribution of $g^{(k)}(X,\eta)$ and $Y^{(k)}$, respectively.
For any $k\in \{1,\ldots,q\}$ and $g =(g^{(1)}, \ldots, g^{(q)})^{\top} \in \mathcal{G}$, by the definition of Wasserstein distance,
\begin{align*}
        &\mathbb{E}\left|\mathbb{E}_\eta g^{(k)}(X,\eta) -\mathbb{E}(Y^{(k)}|X)\right|\\
        \leq &\mathbb{E}\mathcal{W}_1(P_{g^{(k)}(X,\eta)|X},P_{Y^{(k)}|X})
        \leq \mathcal{W}_1(P_{X,g(X,\eta)},P_{X,Y})
        = \sup_{f \in \mathcal{F}^1_{B}}\{\mathbb{E}f(X,g(X,\eta))-\mathbb{E}f(X,Y)\},
\end{align*}
where the last equality holds for any $g\in\mathcal{G}$, and $\mathcal{F}^1_{B} =\{f: \mathbb{R}^{d+q} \mapsto \mathbb{R}, |f(z_1)-f(z_2)|\leq \|z_1-z_2\|, z_1, z_2\in \mathbb{R}^{d+q}, \|f\|_{\infty}\leq B\}$.
Hence,
for any $g \in \mathcal{G}$,
\begin{align}\label{eqn:Pi_4_L2}
    \mathbb{E}\Vert \mathbb{E}_\eta g(X,\eta) -\mathbb{E}_\eta g^*(X,\eta)\Vert^2
    \leq 2B_0q^2 \sup_{f \in \mathcal{F}^1_{B}}\{\mathbb{E}f(X,g(X,\eta))-\mathbb{E}f(X,Y)\}.
\end{align}
Under Conditions \ref{C1}, {\bf ND} \ref{ND1} and {\bf NG} \ref{NG1},
\begin{align*}
        \sup_{f \in \mathcal{D}} \left\{\mathbb{E}f(X,g(X,\eta))-\mathbb{E}f(X,Y)\right\} \leq K_{\mathcal{D}} \sup_{f \in \mathcal{F}^1_B}\left\{\mathbb{E}f(X,g(X,\eta))-\mathbb{E}f(X,Y)\right\}.
\end{align*}
Consequently, together with (\ref{eqn:Pi_4_L2}), we have
\begin{align*}
     \mathcal{E}_3=&\inf _{g \in \mathcal{G}}\left[\lambda_l \mathbb{E}\left\|\mathbb{E}_\eta g(X, \eta)-\mathbb{E}_\eta g^*(X, \eta)\right\|^2+\lambda_w \sup _{f \in \mathcal{D}}\{\mathbb{E} f(X, g(X, \eta))-\mathbb{E} f(X, Y)\}\right]\\
     \leq &(K_{\mathcal{D}}\lambda_{w}+2B_0q^2\lambda_{l}) \inf_{g \in \mathcal{G}} \sup_{f \in \mathcal{F}^1_{B }}\left\{\mathbb{E}f(X,g(X,\eta))-\mathbb{E}f(X,Y)\right\}.
\end{align*}
}
\end{proof}

{\color{black}
\begin{lemma}\label{lem: approx_error_G}
    For $g^*=(g^*_1,\ldots,g^*_q)^{\top}$, assume that  $g^*_k \in \mathcal{H}^{\beta}([-B_0,B_0]^{m+d},B_1), k=1,\ldots, q$, and the network class $\mathcal{G}$ satisfies {\bf NG} \ref{NG1}. Then, 
    \begin{align*}
        &\inf_{g \in \mathcal{G}} \sup_{f \in \mathcal{F}^1_{B}} \left\{\mathbb{E}f(X,g(X,\eta))-\mathbb{E}f(X,Y)\right\}\\
        \leq &19B_1(\lfloor\beta\rfloor+1)^2(m+d)^{\lfloor\beta\rfloor+1/2+(\beta \vee 1)/2}(\bar{W}\bar{H})^{-2\beta/(m+d)}.
    \end{align*}
\end{lemma}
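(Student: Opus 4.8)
The plan is to pass from the integral probability metric to the $1$-Wasserstein distance, then to a uniform approximation problem for the H\"older map $g^*$, and finally to invoke known deep ReLU approximation bounds while checking that the architecture prescribed by {\bf NG} \ref{NG1} is exactly the one those bounds require. First, since every $f \in \mathcal{F}^1_{B}$ is $1$-Lipschitz on $\mathbb{R}^{d+q}$ with respect to $\|\cdot\|_2$, the Kantorovich--Rubinstein duality gives, for any $g \in \mathcal{G}$,
\begin{equation*}
\sup_{f \in \mathcal{F}^1_{B}}\{\mathbb{E}f(X,g(X,\eta))-\mathbb{E}f(X,Y)\} \le \mathcal{W}_1(P_{X,g(X,\eta)},P_{X,Y}).
\end{equation*}
By the noise-outsourcing representation (\ref{noise-out1}), $(X,Y)=(X,g^*(X,\eta))$ almost surely, so $(X,g(X,\eta))$ and $(X,Y)$ admit the synchronous coupling through the common pair $(X,\eta)$; since the first $d$ coordinates agree under this coupling,
\begin{equation*}
\mathcal{W}_1(P_{X,g(X,\eta)},P_{X,Y}) \le \mathbb{E}_{(X,\eta)}\|g(X,\eta)-g^*(X,\eta)\|_2 .
\end{equation*}
Under Conditions \ref{C1} and \ref{C2} the pair $(X,\eta)$ is supported on $[-B_0,B_0]^{m+d}$, hence the right-hand side is at most $\sup_{(x,\eta)\in[-B_0,B_0]^{m+d}}\|g(x,\eta)-g^*(x,\eta)\|_2 \le \sum_{k=1}^q \|g_k-g^*_k\|_{L^\infty([-B_0,B_0]^{m+d})}$, using $\|v\|_2 \le \sum_{k=1}^q|v_k|$. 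Thus it suffices to approximate each scalar H\"older function $g^*_k$ on the cube by a ReLU sub-network and then stack the $q$ sub-networks in parallel.

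Second, I would apply the deep ReLU approximation theorem for H\"older functions (e.g.\ the bounds used in \citet{jiao2021deep}): after the affine change of variables rescaling $[-B_0,B_0]^{m+d}$ to $[0,1]^{m+d}$ and normalizing by $B_1$, each $g^*_k$ lies in a unit H\"older ball on $[0,1]^{m+d}$ of smoothness $\beta$, so there is a ReLU network of width $\lesssim (\lfloor\beta\rfloor+1)^2 3^{m+d}(m+d)^{\lfloor\beta\rfloor+1}\bar{W}\lceil\log_2(8\bar{W})\rceil$ and depth $\lesssim (\lfloor\beta\rfloor+1)^2 \bar{H}\lceil\log_2(8\bar{H})\rceil + 2(m+d)$ that approximates it uniformly on $[0,1]^{m+d}$ with error $\lesssim (\lfloor\beta\rfloor+1)^2(m+d)^{\lfloor\beta\rfloor+1/2+(\beta\vee1)/2}(\bar{W}\bar{H})^{-2\beta/(m+d)}$; undoing the rescaling reinstates the factor $B_1$, with the domain-rescaling constants absorbed. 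Placing these $q$ networks side by side produces a $g \in \mathcal{G}$ whose width is $q$ times that of a single sub-network and whose depth is unchanged up to the additive $2(m+d)$ term --- precisely the class specified in {\bf NG} \ref{NG1}. Summing the $q$ componentwise errors and collecting constants yields the stated bound, with the numerical constant $19$ absorbing the leftover factors; this argument parallels the one used for the discriminator in Lemma \ref{lem: approx_error_D}.

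The main obstacle is the architectural bookkeeping: one must verify that the width and depth of $\mathcal{G}$ from {\bf NG} \ref{NG1} are exactly what the H\"older approximation theorem demands after the affine rescaling of the domain and the parallel composition of $q$ sub-networks (this is where the factors $q$, $3^{m+d}$, $(m+d)^{\lfloor\beta\rfloor+1}$ and the $\lceil\log_2(8\bar{W})\rceil$, $\lceil\log_2(8\bar{H})\rceil$ terms must line up), and that rescaling the H\"older seminorm from $[-B_0,B_0]^{m+d}$ to $[0,1]^{m+d}$ introduces only constants that can be absorbed. The probabilistic steps --- duality and the synchronous coupling --- are short; the quantitative approximation statement is the engine, invoked as a black box.
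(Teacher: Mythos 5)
Your argument follows essentially the same route as the paper's proof: both pass from the integral probability metric to a coupled expectation of $\|g(X,\eta)-g^*(X,\eta)\|$ via the noise-outsourcing identity $(X,Y)=(X,g^*(X,\eta))$ and the $1$-Lipschitz property of $f$, then reduce to a uniform approximation problem on $[-B_0,B_0]^{m+d}$, rescale to $[0,1]^{m+d}$, invoke the deep ReLU H\"older approximation bound componentwise (Corollary~3.1 of \citet{jiao2021deep}), and stack the $q$ sub-networks to verify the architecture in {\bf NG}~\ref{NG1}. So the structure is the same.

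One bookkeeping point to fix: you bound the output norm via $\|v\|_2\le\sum_{k=1}^q|v_k|$, which introduces a factor of $q$, and you then claim this factor can be absorbed into the numerical constant $19$. It cannot, since $19$ is dimension-free and the stated lemma carries only the factor $(m+d)^{1/2}$ (the $+1/2$ in the exponent on $m+d$), not a $q$. The paper instead takes the $L^\infty$ (max over components) as the natural norm on the vector-valued network output (consistent with its definition of $\|f\|_\infty$) and uses $\|v\|_2\le\sqrt{q}\|v\|_\infty$, which yields a $\sqrt{q}$ factor rather than $q$; the paper then writes $\sqrt{m+d}$ for this factor, which appears to be a typo for $\sqrt{q}$, but at any rate the max-then-$\sqrt{q}$ path is what matches the $1/2$ in the exponent. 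Replace your $\ell_1$ bound by the max/$\ell_\infty$ one, and the constants line up to the same degree as in the paper.
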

}

\begin{proof}{\color{black}
By the definition of $g^*$ in (\ref{noise-out1}) and the Lipschitz continuity of $f \in \mathcal{F}^1_{B}$, we have
    \begin{align*}
        \inf_{g \in \mathcal{G}} \sup_{f \in \mathcal{F}^1_{B}} \left\{\mathbb{E}f(X,g(X,\eta))-\mathbb{E}f(X,Y)\right\}
        &= \inf_{g \in \mathcal{G}} \sup_{f \in \mathcal{F}^1_{B}} \left\{\mathbb{E}f(X,g(X,\eta))-\mathbb{E}f(X,g^*(X,\eta))\right\}\\
        &\leq \inf_{g \in \mathcal{G}}  \left\{\mathbb{E}\|g(X,\eta)-g^*(X,\eta)\|\right\}\\
        &\leq \sqrt{m+d}\inf_{g \in \mathcal{G}}\|g-g^*\|_{L^{\infty}([-B_0,B_0]^{m+d})}.
    \end{align*}
    For any $k \in \{1,\ldots,q\}$ and $g^*_k\in \mathcal{H}^{\beta}([-B_0,B_0]^{m+d},B_1)$, 
    we  define a new function
    \begin{align*}
        \tilde{g}^0_k(x) := g^*_k\left(2B_0x-B_0\right)\in \mathcal{H}^{\beta}([0,1]^{m+d},B_1).
    \end{align*}
     Corollary 3.1 in \cite{jiao2021deep} tells that there exists a ReLU network function $\tilde{\phi}_k$, satisfying {\bf NG} \ref{NG1},
    such that for any $k$,
    \begin{align*}
        \|\tilde{\phi}_k-\tilde{g}^0_k\|_{L^{\infty}([0,1]^{m+d})} \leq 19B_1 (\lfloor\beta\rfloor+1)^2(m+d)^{\lfloor\beta\rfloor+(\beta \vee 1)/2}(WH)^{-2\beta/(m+d)}.
    \end{align*}
    Then, we define
    \begin{align*}
        \phi(x)  := \left(\tilde{\phi}_1\left(\frac{1}{2B_0}x-\frac{1}{2}\right),\ldots,\tilde{\phi}_q\left(\frac{1}{2B_0}x-\frac{1}{2}\right)\right)^{\top}.
    \end{align*}
    Hence, by Remark 14 in \cite{nakada2020adaptive}, $\phi \in \mathcal{G}$ with width $W_{\mathcal{G}}$ and $H_{\mathcal{G}}$ satisfying {\bf NG} \ref{NG1}.
  Then,
    \begin{align*}
        \|\phi-g^*\|_{L^{\infty}([-B_0,B_0]^{m+d})}
        &\leq \max_{1\leq k \leq q} \left\|\tilde{\phi}_k\left(\frac{1}{2B_0}x-\frac{1}{2}\right)-\tilde{g}^0_k\left(\frac{1}{2B_0}x-\frac{1}{2}\right)\right\|_{L^{\infty}([-B_0,B_0]^{m+d})} \\
        & = \max_{1\leq k \leq q} \|\tilde{\phi}_k-\tilde{g}^0_k\|_{L^{\infty}([0,1]^{m+d})} \\
        & \leq 19B_1 (\lfloor\beta\rfloor+1)^2(m+d)^{\lfloor\beta\rfloor+(\beta \vee 1)/2}(WH)^{-2\beta/(m+d)}.
    \end{align*}
    Therefore,
    \begin{align*}
        &\inf_{g \in \mathcal{G}} \sup_{f \in \mathcal{F}^1_{B}} \left\{\mathbb{E}f(X,g(X,\eta))-\mathbb{E}f(X,Y)\right\}
        \leq \sqrt{m+d} \|\phi-g^*\|_{L^{\infty}([-B_0,B_0]^{m+d})} \\
        \leq &19B_1(\lfloor\beta\rfloor+1)^2(m+d)^{\lfloor\beta\rfloor+1/2+(\beta \vee 1)/2}(WH)^{-2\beta/(m+d)}.
    \end{align*}}
\end{proof}
{\color{black}
Next, {\color{black}we can bound $\mathcal{E}_4$ by Lemmas \ref{lem: approx_error_D}.}

\begin{lemma}\label{lem: approx_error_D}
    Suppose Condition \ref{C1} holds.
    Then, there exists a network function $\tilde{f}$ satisfying the width and depth in {\bf ND} \ref{ND1} such that on the domain $[-B_0,B_0]^{d+q}$,
    \begin{equation*}
        \sup_{z \in [-B_0,B_0]^{d+q}}|\phi(z)-f(z)|\leq  2^{d+q}(d+q)BW^{-1}.
    \end{equation*}
\end{lemma}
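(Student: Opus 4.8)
The plan is to prove the statement by an explicit ReLU construction: for an arbitrary $h\in\mathcal{F}^1_B$ I will exhibit a network $\tilde f$ that (i) has exactly the width and depth prescribed in \textbf{ND}~\ref{ND1}, (ii) has Lipschitz constant at most $K_{\mathcal D}=54B2^{d+q}(d+q)^{1/2}W^2$ on $[-B_0,B_0]^{d+q}$, and (iii) satisfies $\|h-\tilde f\|_{L^\infty([-B_0,B_0]^{d+q})}\le 2^{d+q}(d+q)BW^{-1}$. Since such an $\tilde f$ then lies in $\mathcal{D}=\mathcal{NN}(d+q,1,W_{\mathcal D},H_{\mathcal D})\cap\mathrm{Lip}([-B_0,B_0]^{d+q};K_{\mathcal D})$, this immediately gives $\mathcal{E}_4=\sup_{h\in\mathcal{F}^1_B}\inf_{f\in\mathcal{D}}\|h-f\|_\infty\le 2^{d+q}(d+q)BW^{-1}$. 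First I would reduce to the unit cube: setting $\bar h(x)=h(2B_0x-B_0\mathbf 1)$ for $x\in[0,1]^{d+q}$, the function $\bar h$ satisfies $\|\bar h\|_\infty\le B$ and is $(2B_0)$-Lipschitz, i.e.\ has modulus of continuity $\omega_{\bar h}(t)\le 2B_0 t$; a network approximating $\bar h$ on $[0,1]^{d+q}$ is converted into one for $h$ by pre-composing with the affine map $z\mapsto z/(2B_0)+\mathbf 1/2$, which adds one $(d+q)$-wide layer and multiplies all Lipschitz constants by $1/(2B_0)$, absorbed into the constants.

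For the approximation on $[0,1]^{d+q}$ I would use the standard grid construction for continuous functions (in the form used for discriminator approximation in \citet{huang2022error}). Partition $[0,1]^{d+q}$ into $W^{d+q}$ congruent subcubes of side $1/W$, and for each grid vertex $v$ build a tensor-product ``trapezoidal'' partition-of-unity function $\Psi_v$, each being a product of $d+q$ one-dimensional piecewise-linear hat functions (each exactly realizable by a tiny ReLU subnetwork), chosen so that $0\le\Psi_v\le1$, $\sum_v\Psi_v\equiv 1$, and $\mathrm{supp}\,\Psi_v$ is contained in the union of the at most $2^{d+q}$ cells meeting $v$. The candidate is $\tilde f=\sum_v\bar h(v)\,\Psi_v$. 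The products of $d+q$ bounded quantities needed to realize the $\Psi_v$ are implemented by the ReLU multiplication gadget $uv=\tfrac14\big[(u+v)^2-(u-v)^2\big]$ (iterated over the $d+q$ factors), which is the source both of the depth $\asymp(d+q)^2$ and of the small per-product error $\lesssim W^{-1}$; laying the $W^{d+q}$ blocks in parallel and summing yields a network whose width and depth can be arranged to equal exactly $W_{\mathcal D}=W^{d+q}\{9(W+1)+5(d+q)-1\}$ and $H_{\mathcal D}=3+14(d+q)(d+q-1)$.

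The error bound is then short. At any $z\in[0,1]^{d+q}$ at most $2^{d+q}$ of the $\Psi_v$ are nonzero, and for those $\|v-z\|\le\sqrt{d+q}/W$, so from $\sum_v\Psi_v(z)=1$ and the $(2B_0)$-Lipschitz property of $\bar h$ (together with $\|\bar h\|_\infty\le B$ to absorb the $\lesssim W^{-1}$ error of each of the $\le 2^{d+q}$ active approximate products) one gets $|\bar h(z)-\tilde f(z)|\lesssim 2^{d+q}(d+q)BW^{-1}$ on $[0,1]^{d+q}$, and rescaling to $[-B_0,B_0]^{d+q}$ pushes the $B_0$-dependence into the constant. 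For the Lipschitz estimate one uses $\mathrm{Lip}(\tilde f)\le\sum_v|\bar h(v)|\,\mathrm{Lip}(\Psi_v)$, the fact that each hat function has slope $W$ and the multiplication gadget is Lipschitz on the relevant bounded range, giving $\mathrm{Lip}(\Psi_v)\lesssim(d+q)^{1/2}W^2$; since at most $2^{d+q}$ of the $\Psi_v$ overlap at any point, $\mathrm{Lip}(\tilde f)\le 54B2^{d+q}(d+q)^{1/2}W^2=K_{\mathcal D}$ after rescaling, so $\tilde f\in\mathcal D$.

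The main obstacle is purely bookkeeping: matching the exact architecture constants in \textbf{ND}~\ref{ND1} and the explicit $54$ in $K_{\mathcal D}$ requires writing out the grid-and-product network layer by layer and tracking widths, depths, output ranges, and Lipschitz constants through every block; the conceptual content (Lipschitz regularity plus a mesh-$1/W$ partition of unity yields $O(W^{-1})$ accuracy) is elementary. A cleaner alternative, if one is content to cite it, is to invoke an off-the-shelf continuous-function ReLU-approximation theorem applied with modulus $\omega_h(t)=t$, and then verify only the Lipschitz-constant bound and the affine rescaling.
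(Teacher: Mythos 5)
Your proposal is correct and follows essentially the same route as the paper: a mesh-$1/W$ tensor-product trapezoidal partition of unity on a grid of $W^{d+q}$ cells, weighted by the target function's values at the grid vertices, with the $(d+q)$-fold products realized by ReLU product subnetworks (the paper invokes the simultaneous product-approximation network of Hon and Yang where you use iterated squaring gadgets, but these are the same device), yielding the $2^{d+q}\sqrt{d+q}\,W^{-1}$ main error plus a negligible product-approximation error. The only organizational difference is that the paper defers the Lipschitz-constant verification of $\tilde f$ to a separate lemma (Lemma \ref{lem: lip-D}), whereas you fold it into the same argument.
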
}
\begin{proof}
The proof is carried out in two steps. We begin with approximating $f$ by a sum-product
combination of univariate piecewise-linear functions.  And then  we use a neural network to approximate it.

    \emph{Step 1:} To approximate $f$ by a piecewise-linear function. Let $N$ be a positive integer.
    First, we find a grid of $(N+1)^{d+q}$ functions  $\phi_{\mathbf{m}}(\mathbf{x})$ such that
    \begin{align*}
        \sum_{\mathbf{m}\in \{0,1,\cdots,N\}^{d+q}} \phi_{\mathbf{m}}(\mathbf{x}) \equiv 1, \quad \mathbf{x} \in [-B_0,B_0]^{d+q}.
    \end{align*}
    To address this issue, for any $\mathbf{x}=(x_1,\cdots,x_{d+q})^{\top}$, we define 
    \begin{align}\label{eq:phi_m}
        \phi_{\mathbf{m}}(\mathbf{x}) = \prod_{k=1}^{d+q} \psi\left(3N\left(x_k-\frac{m_k}{N}\right)\right),
    \end{align}
    where $\mathbf{m}=(m_1,\cdots,m_{d+q}) \in \{0,1,\cdots,N\}^{d+q}$ and
    \begin{align}\label{eq:psi_def}
        \psi(x) =
        \begin{cases}
         0,       &   |x|>2,\\
         2-|x|,   & 1\geq|x|\leq 2,\\
         1,       &  |x|<1.
        \end{cases}
    \end{align}
    Note that for arbitrary $\mathbf{m}$, $\text{supp}\,\phi_{\mathbf{m}} \subset\{\mathbf{x}:|x_k-m_k/N|\leq1/N\}$.
    Then, we construct the function $f_1$ to approximate $f$  defined as
    \begin{align*}
        f_1(\mathbf{x}) = \sum_{\mathbf{m}\in \{0,1,\cdots,N\}^{d+q}} a_{\mathbf{m}} \phi_{\mathbf{m}}(\mathbf{x}),
    \end{align*}
    where $a_{\mathbf{m}}= f(\mathbf{m}/N)$.
    Then, by the properties of $\phi_{\mathbf{m}}$ and the Lipschitz property of $f$, the approximation error can be bounded by
    \begin{align*}
        \|f-f_1\|_{L^{\infty}([-B_0,B_0]^{d+q})} &= \|\sum_{\mathbf{m}\in \{0,1,\cdots,N\}^{d+q}} \phi_{\mathbf{m}}(\mathbf{x})(f(\mathbf{x})-a_{\mathbf{m}})\|_{L^{\infty}([-B_0,B_0]^{d+q})}\\
        & \leq \sum_{\mathbf{m}:|x_k-m_k/N|\leq1/N, \,\forall k} |f(\mathbf{x})-a_{\mathbf{m}}|\\
        & \leq 2^{d+q} \max_{\mathbf{m}:|x_k-m_k/N|\leq1/N, \,\forall k} |f(\mathbf{x})-a_{\mathbf{m}}|\\
        & \leq 2^{d+q} \max_{\mathbf{m}:|x_k-m_k/N|\leq1/N, \,\forall k} \|\mathbf{x}-\mathbf{m}/N\|_2\\
        & \leq \frac{2^{d+q} \sqrt{d+q}}{N}.
    \end{align*}

    \emph{Step 2:} To approximate $f_1$ by a neural network.
    First, we construct a ReLU activated $\psi$  satisfying (\ref{eq:psi_def}):
    \begin{align}\label{eq: psi_network}
        \psi(x) = \sigma(\sigma(x+2)-\sigma(x+1)+\sigma(2-x)-\sigma(1-x)-1).
    \end{align}
    According to \cite{hon2022simultaneous}, for an almost everywhere differentiable function $g:[0,1]^{d+q}\rightarrow \mathbb{R}$, we can define its sobolev norm as
 $$\|g\|_{\mathcal{W}^{1,\infty}([0,1]^{d+q})}=\max_{\mathbf{n}:|\mathbf{n}|_1= 1}\sup_{\mathbf{x} \in [0,1]^{d+q}}|D^{\mathbf{n}}g(\mathbf{x})|,$$
 where $\mathbf{n}=(n_1,\cdots,n_{d+q})\in \{0,1\}^{d+q}$ and $D^{\mathbf{n}}g$ is the respective weak derivative.
    Then, we define
    \begin{align}\label{eq:tilde_f_def}
       \nonumber \tilde{f}_{\mathbf{m}}(\mathbf{x})=& \varphi\left(\psi(3Nx_1-3m_1),\cdots,\psi(3Nx_{d+q}-3m_{d+q})\right),\\
       \tilde{f}(\mathbf{x})=&\sum_{\mathbf{m}\in \{0,1,\cdots,N\}^{d+q}} a_{\mathbf{m}} \tilde{f}_{\mathbf{m}}(\mathbf{x}),
    \end{align}
    where $\varphi$ is a ReLU neural network with width $9(N_1+1)+d+q-1$ and depth $14(d+q)(d+q-1)L$, and $\|\varphi\|_{\mathcal{W}^{1,\infty}([0,1]^{d+q})} \leq 18$.
    By Lemma \ref{lem: lip-D}, we can implement $\tilde{f}$ by a neural network satisfying {\bf ND} \ref{ND1}.
    Then, the approximation error of $f_1$ can be bounded by
   \begin{align*}
        \mathop{\sup}_{\mathbf{x} \in [-B_0,B_0]^{d+q}}|\tilde{f}(\mathbf{x})-f_1(\mathbf{x})|
        & = \mathop{\sup}_{\mathbf{x} \in [-B_0,B_0]^{d+q}}|\sum_{\mathbf{m}}a_{\mathbf{m}}\{ \tilde{f}_{\mathbf{m}}(\mathbf{x})- \phi_{\mathbf{m}}(\mathbf{x})\}|\\
        &\leq B(N+1)^{d+q}\mathop{\sup}_{\mathbf{x} \in [-B_0,B_0]^{d+q}}\max_{\mathbf{m}} | \tilde{f}_{\mathbf{m}}(\mathbf{x})- \phi_{\mathbf{m}}(\mathbf{x})|\\
        &\leq B(N+1)^{d+q}\|\varphi-t_1t_2\cdots t_{d+q}\|_{\mathcal{W}^{1,\infty}([0,1]^{d+q})}\\
        &\leq B(N+1)^{d+q} 10(d+q-1)(N+1)^{-7(d+q)}\\
        &= 10B(d+q-1)(N+1)^{-6(d+q)},
    \end{align*}
    where the third inequality is by Lemma 3.5 in \cite{hon2022simultaneous}. Therefore,
    \begin{align*}
        \mathop{\sup}_{\mathbf{x} \in [-B_0,B_0]^{d+q}}|\tilde{f}(\mathbf{x})-f(\mathbf{x})| &\leq \mathop{\sup}_{\mathbf{x} \in [-B_0,B_0]^{d+q}}|\tilde{f}(\mathbf{x})-f_1(\mathbf{x})|+ \|f-f_1\|_{L^{\infty}([-B_0,B_0]^{d+q})}\\
        &\leq 2^{d+q}(d+q)^{1/2}N^{-1}+10B(d+q-1)(N+1)^{-6(d+q)}\\
        &\leq 2^{d+q}(d+q)BN^{-1}.
    \end{align*}
\end{proof}

\begin{lemma}\label{lem: lip-D}
    Suppose Condition \ref{C1} holds. Then, for the network function $\tilde{f}$ definied in (\ref{eq:tilde_f_def}), its Lipschitz constant satisfies $Lip(\tilde{f})\leq 54B2^{d+q}(d+q)^{1/2}N^2$.
\end{lemma}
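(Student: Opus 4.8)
The plan is to exploit the explicit decomposition $\tilde{f}=\sum_{\mathbf{m}\in\{0,1,\ldots,N\}^{d+q}}a_{\mathbf{m}}\tilde{f}_{\mathbf{m}}$ of (\ref{eq:tilde_f_def}), in which $a_{\mathbf{m}}=f(\mathbf{m}/N)$ and hence $|a_{\mathbf{m}}|\le\|f\|_{\infty}\le B$ under Condition \ref{C1}, together with $\tilde{f}_{\mathbf{m}}(\mathbf{x})=\varphi\bigl(\psi(3Nx_1-3m_1),\ldots,\psi(3Nx_{d+q}-3m_{d+q})\bigr)$. Since $\tilde{f}$ is a finite ReLU network it is piecewise linear, hence locally Lipschitz, so by Rademacher's theorem its Lipschitz constant equals the essential supremum over $\mathbf{x}$ of $\|\nabla\tilde{f}(\mathbf{x})\|_{2}$. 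I would bound each partial derivative $\partial_{x_j}\tilde{f}(\mathbf{x})$ separately and collect the factor $(d+q)^{1/2}$ only at the very end.

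First I would differentiate a single bump. Because the $k$-th coordinate of the inner map depends only on $x_k$, the chain rule gives $\partial_{x_j}\tilde{f}_{\mathbf{m}}(\mathbf{x})=3N\,\psi'(3Nx_j-3m_j)\,(\partial_{t_j}\varphi)\bigl(\psi(3Nx_1-3m_1),\ldots\bigr)$. From the explicit piecewise-linear form (\ref{eq:psi_def}) one has $|\psi'|\le 1$, and $|\partial_{t_j}\varphi|\le\|\varphi\|_{\mathcal{W}^{1,\infty}([0,1]^{d+q})}\le 18$; hence $|\partial_{x_j}\tilde{f}_{\mathbf{m}}(\mathbf{x})|\le 54N$ for every $\mathbf{m}$ and a.e. $\mathbf{x}$.

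The key step --- and the main obstacle --- is to show that at a fixed $\mathbf{x}$ only a bounded-in-$N$ number of the $\tilde{f}_{\mathbf{m}}$ contribute to $\partial_{x_j}\tilde{f}(\mathbf{x})$; otherwise summing $54N$ over all $(N+1)^{d+q}$ indices would be hopeless. Two support facts do this. (i) $\psi'(3Nx_j-3m_j)\ne 0$ only when $1<|3Nx_j-3m_j|<2$, which restricts $m_j$ to at most two integers. (ii) By the construction of the product network $\varphi$ in \cite{hon2022simultaneous}, $\varphi$ vanishes on each coordinate hyperplane $\{t_k=0\}$, so $\partial_{t_j}\varphi$ vanishes as soon as $t_k=0$ for some $k\ne j$; consequently a nonzero value of $(\partial_{t_j}\varphi)\bigl(\psi(3Nx_1-3m_1),\ldots\bigr)$ forces $\psi(3Nx_k-3m_k)\ne 0$, i.e. $|3Nx_k-3m_k|<2$, restricting each $m_k$ with $k\ne j$ to at most two integers as well. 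Hence for each $j$ at most $2^{d+q}$ multi-indices make $\partial_{x_j}\tilde{f}_{\mathbf{m}}(\mathbf{x})$ nonzero. I expect that pinning down (ii) precisely from the cited construction will be the fussiest part of the argument.

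Putting the pieces together, $|\partial_{x_j}\tilde{f}(\mathbf{x})|\le\sum_{\mathbf{m}}|a_{\mathbf{m}}|\,|\partial_{x_j}\tilde{f}_{\mathbf{m}}(\mathbf{x})|\le 2^{d+q}\cdot B\cdot 54N$ for a.e. $\mathbf{x}$ and all $j$, so $\|\nabla\tilde{f}(\mathbf{x})\|_{2}\le (d+q)^{1/2}\,54B2^{d+q}N$, whence $\text{Lip}(\tilde{f})\le 54B2^{d+q}(d+q)^{1/2}N\le 54B2^{d+q}(d+q)^{1/2}N^{2}$ since $N\ge 1$, which is the claimed bound. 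A coarser version of the bookkeeping in the third step --- for instance not exploiting the face-vanishing of $\varphi$ and instead inflating the count by a power of $N$ --- would still comfortably fit inside the stated $N^{2}$.
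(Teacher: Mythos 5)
Your high-level approach is the same as the paper's: bound each partial $\partial_{x_{k_0}}\tilde{f}$ and pick up the $(d+q)^{1/2}$ factor at the end. Your step (i) is correct (indeed it shows that the paper's third term $E_3$ --- the indices where $x_{k_0}$ lies outside $\{t:|t-m_{k_0}/N|\leq 1/N\}$ --- is in fact exactly zero, which the paper bounds only crudely by $54\cdot 2^{d+q-1}BN^2$; that crude bound is where the paper's $N^2$ comes from, not from any real obstruction).

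The gap is step (ii). You need to control the indices $\mathbf{m}$ for which some $m_{k_1}$ with $k_1\neq k_0$ is far from $x_{k_1}$, and you do so by asserting that the Hon--Yang network $\varphi$ vanishes exactly on the coordinate hyperplanes $\{t_{k_1}=0\}$, hence $\partial_{t_{k_0}}\varphi$ vanishes there. That exact vanishing is not a stated property of the construction and there is no reason to expect it: $\varphi$ is constructed as a $\mathcal{W}^{1,\infty}$-approximation of $t_1\cdots t_{d+q}$, not as an exact interpolant on the faces. The paper handles precisely these indices (its term $E_2$) not by exact vanishing but by noticing that the \emph{exact} derivative $\partial_{x_{k_0}}\phi_{\mathbf{m}}$ is zero there (because the exact product has the factor $\psi(3Nx_{k_1}-3m_{k_1})=0$), so the network derivative is bounded by $3N\,\|\varphi-t_1\cdots t_{d+q}\|_{\mathcal{W}^{1,\infty}}\lesssim N(N+1)^{-7(d+q)}$, and this tiny per-index bound absorbs the factor $(N+1)^{d+q}$ from summing over all such $\mathbf{m}$. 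Your proposed fallback --- ``inflating the count by a power of $N$'' --- does not rescue the argument: dropping (ii) leaves you with of order $(N+1)^{d+q-1}$ admissible multi-indices, each contributing up to $54BN$, for a total of order $N^{d+q}$, which exceeds $N^2$ as soon as $d+q\geq 3$. So the count-inflation route cannot stay inside the claimed bound; one genuinely needs the Sobolev-error step. Once you replace (ii) by that argument, your proof goes through and in fact yields the sharper bound $54B2^{d+q}(d+q)^{1/2}N$.
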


\begin{proof}
{\color{black}
To bound the Lipschitz constant of $\tilde{f}$, we need to find a positive constant $C$ such that $$\max_{\mathbf{n}:|\mathbf{n}|_1= 1}\sup_{\mathbf{x} \in [0,1]^{d+q}}|D^{\mathbf{n}}\tilde{f}(\mathbf{x})|\leq C,$$
as it is a sufficient condition for $\text{Lip}(\tilde{f})\leq \sqrt{d+q}C$. By the definitions of $\tilde{f}$ and $\tilde{f}_{\mathbf{m}}$, we can decompose the first-order partial derivative of $\tilde{f}$ over $x_{k_0}$ as
\begin{align*}
        \left|\frac{\partial}{\partial x_{k_0}} \tilde{f}(\mathbf{x})\right|
        =& \left|\sum_{\mathbf{m}\in \{0,1,\cdots,N\}^{d+q}} a_{\mathbf{m}} \frac{\partial}{\partial x_{k_0}}\tilde{f}_{\mathbf{m}}(\mathbf{x})\right|\\
        \leq &\left|\sum_{\mathbf{m}: \forall k, \atop x_{k}\in \{t:|t-m_{k}/N|\leq 1/N\}} a_{\mathbf{m}} \frac{\partial}{\partial x_{k_0}}\tilde{f}_{\mathbf{m}}(\mathbf{x})\right|+\left|\sum_{\mathbf{m}: \exists k_1, \atop x_{k_1}\notin \{t:|t-m_{k_1}/N|\leq 1/N\}} a_{\mathbf{m}} \frac{\partial}{\partial x_{k_0}}\tilde{f}_{\mathbf{m}}(\mathbf{x})\right|\\
        \leq &\left|\sum_{\mathbf{m}: \forall k, \atop x_{k}\in \{t:|t-m_{k}/N|\leq 1/N\}} a_{\mathbf{m}} \frac{\partial}{\partial x_{k_0}}\tilde{f}_{\mathbf{m}}(\mathbf{x})\right|+\left|\sum_{\mathbf{m}:\exists k_1\neq k_0, \atop x_{k_1}\notin \{t:|t-m_{k_1}/N|\leq 1/N\}} a_{\mathbf{m}} \frac{\partial}{\partial x_{k_0}}\tilde{f}_{\mathbf{m}}(\mathbf{x})\right|\\
        &\quad + \left|\sum_{\mathbf{m}:x_{k_0}\notin\{t:|t-m_{k_0}/N|\leq 1/N\}, \atop \forall k_1\neq k_0, x_{k_1}\in \{t:|t-m_{k_1}/N|\leq 1/N\}} a_{\mathbf{m}} \frac{\partial}{\partial x_{k_0}}\tilde{f}_{\mathbf{m}}(\mathbf{x})\right|\\
        \coloneqq & E_1+E_2+E_3.
    \end{align*}
    We shall next bound $E_1$, $E_2$ and $E_3$, respectively. For $E_1$, by the definition of $\tilde{f}_{\mathbf{m}}$ and the properties of $\varphi$,
    \begin{align*}
        E_1&\leq 2^{d+q}B\max_{\mathbf{m}: \forall k, \atop x_{k}\in \{t:|t-m_{k}/N|\leq 1/N\}} \left|\frac{\partial}{\partial x_{k_0}}\tilde{f}_{\mathbf{m}}(\mathbf{x})\right|
        \leq   2^{d+q}B\cdot3N\|\varphi\|_{\mathcal{W}^{1,\infty}([0,1]^{d+q})}
        \leq   54B2^{d+q}N,
    \end{align*}
    where the second inequality holds according to the chain derivative rule.

    Next, we consider $E_2$.
    By the definition of $\phi_{\mathbf{m}}$, for any $\mathbf{m} \in \{\mathbf{m}:\exists k_1\neq k_0, x_{k_1}\notin \{t:|t-m_{k_1}/N|\leq 1/N\}\}$, we have $\partial \phi_{\mathbf{m}}/\partial x_{k_0}=0$.
    Thus, similar to $E_1$, we have
    \begin{align*}
        E_2&\leq B \sum_{\mathbf{m}:\exists k_1\neq k_0, x_{k_1}\notin \{t:|t-m_{k_1}/N|\leq 1/N\}} \left|\frac{\partial}{\partial x_{k_0}}\tilde{f}_{\mathbf{m}}(\mathbf{x})\right|\\
        &\leq B (N+1)^{d+q}\max_{\mathbf{m}:\exists k_1\neq k_0, x_{k_1}\notin \{t:|t-m_{k_1}/N|\leq 1/N\}} \left|\frac{\partial}{\partial x_{k_0}}\tilde{f}_{\mathbf{m}}(\mathbf{x})\right|\\
        &= B (N+1)^{d+q}\cdot \max_{\mathbf{m}:\exists k_1\neq k_0, x_{k_1}\notin \{t:|t-m_{k_1}/N|\leq 1/N\}}\left|\frac{\partial}{\partial x_{k_0}}\tilde{f}_{\mathbf{m}}(\mathbf{x})-\frac{\partial }{\partial x_{k_0}}\phi_{\mathbf{m}}(\mathbf{x})\right|\\
        &\leq B (N+1)^{d+q}\cdot 3N \|\varphi-t_1t_2\cdots t_{d+q}\|_{\mathcal{W}^{1,\infty}([0,1]^{d+q})}\\
        & \leq 30B(d+q-1)(N+1)^{-5(d+q)}.
    \end{align*}
    Third, $E_3$ satisfies
    \begin{align*}
        E_3 &\leq B(N-1)2^{d+q-1}\max_{\mathbf{m}:x_{k_0}\notin\{t:|t-m_{k_0}/N|\leq 1/N\}, \forall k_1\neq k_0, x_{k_1}\in \{t:|t-m_{k_1}/N|\leq 1/N\}} \left| \frac{\partial}{\partial x_{k_0}}\tilde{f}_{\mathbf{m}}(\mathbf{x})\right|\\
        &\leq B(N-1)2^{d+q-1}\cdot3N\|\varphi\|_{\mathcal{W}^{1,\infty}([0,1]^{d+q})} \\
        &\leq 542^{d+q-1}BN^2.
    \end{align*}
    As a result, we can bound the sobolev norm of $\tilde{f}$ by
    \begin{align*}
        \max_{\mathbf{n}:|\mathbf{n}|_1= 1}\sup_{\mathbf{x} \in [-B_0,B_0]^{d+q}}|D^{\mathbf{n}}\tilde{f}(\mathbf{x})| \leq 54B2^{d+q}N^2.
    \end{align*}
    Therefore, on the domain $[-B_0,B_0]^{d+q}$, $\text{Lip}(\tilde{f})\leq 54B2^{d+q}(d+q)^{1/2}N^2$.
}
\end{proof}
{\color{black}
In the following, we shall establish the bound of $\mathcal{E}_5$ in Lemma \ref{lem: stochastic_error_D}.
\begin{lemma}\label{lem: stochastic_error_D}
    Suppose $\sup_{f \in \mathcal{D}}\|f\|_{\infty}\leq B_2$ for a constant $B_2$ and the pseudo-dimension of $\mathcal{D}$ satisfies $\text{Pdim}(\mathcal{D})< \infty$, then
    \begin{align*}
        &\mathbb{E}_{\{(X_i,Y_i)\}_{i=1}^n}\left[\sup_{f \in \mathcal{D}} \left\{\mathbb{E}f(X,Y)- \frac{1}{n}\sum_{i=1}^n f(X_i,Y_i)\right\}\right]  \leq CB_2 \sqrt{\frac{H_{\mathcal{D}}S_{\mathcal{D}}\log(S_{\mathcal{D}})[\log n + \log B_2]}{n}},
    \end{align*}
    where $C>0$ is a universal constant, $H_{\mathcal{D}}, S_{\mathcal{D}}$ are the width and depth of the network class $\mathcal{D}$, respectively.
\end{lemma}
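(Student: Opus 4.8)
The plan is to reduce the claimed bound to a Rademacher complexity estimate by symmetrization, to control that complexity by a Dudley-type entropy integral, and then to substitute the pseudo-dimension bound for ReLU networks --- the same three-step template already used in the proofs of Lemma~\ref{lem: stochastic_error_G_LS1} and Lemma~\ref{lem: stochastic_error_G_LS2}.

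First I would introduce an independent ghost sample $\{(X_i',Y_i')\}_{i=1}^n$ and i.i.d. Rademacher signs $\{\epsilon_i\}_{i=1}^n$ and apply the standard symmetrization argument to obtain
\[
\mathbb{E}_{\{(X_i,Y_i)\}_{i=1}^n}\Big[\sup_{f\in\mathcal{D}}\Big\{\mathbb{E}f(X,Y)-\frac{1}{n}\sum_{i=1}^n f(X_i,Y_i)\Big\}\Big]
\le 2\,\mathbb{E}_{\{(X_i,Y_i),\epsilon_i\}_{i=1}^n}\Big[\sup_{f\in\mathcal{D}}\frac{1}{n}\sum_{i=1}^n \epsilon_i f(X_i,Y_i)\Big],
\]
that is, twice the Rademacher complexity of $\mathcal{D}$ over the sample. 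Next I would apply a chaining (Dudley) bound to the conditional Rademacher complexity given $\{(X_i,Y_i)\}_{i=1}^n$ --- the same Lemma~12 of \citet{huang2022error} used in the earlier proofs --- which bounds it, up to a universal constant, by $\inf_{0<\delta<B_2}\{\delta + n^{-1/2}\int_\delta^{B_2}\sqrt{\log\mathcal{N}(\epsilon,\mathcal{D}_{|\{(X_i,Y_i)\}_{i=1}^n},\|\cdot\|_{\infty})}\,d\epsilon\}$, where $\mathcal{D}_{|\{(X_i,Y_i)\}_{i=1}^n}\subseteq\mathbb{R}^n$ denotes the projection of $\mathcal{D}$ onto the sample. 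By Theorem~12.2 in \cite{anthony1999neural}, when $n\ge\text{Pdim}(\mathcal{D})$ we have $\log\mathcal{N}(\epsilon,\mathcal{D}_{|\{(X_i,Y_i)\}_{i=1}^n},\|\cdot\|_{\infty})\le \text{Pdim}(\mathcal{D})\log(2eB_2 n/\epsilon)$; when $n<\text{Pdim}(\mathcal{D})$ the cube $\{x\in\mathbb{R}^n:\|x\|_{\infty}\le B_2\}$ is covered by $\lceil 2B_2/\epsilon\rceil^n$ balls of radius $\epsilon$, so the same inequality holds with $n$ in place of $\text{Pdim}(\mathcal{D})$; hence in all cases $\log\mathcal{N}(\epsilon,\mathcal{D}_{|\{(X_i,Y_i)\}_{i=1}^n},\|\cdot\|_{\infty})\le \text{Pdim}(\mathcal{D})\log(2eB_2 n/\epsilon)$, exactly as in the proofs of Lemmas~\ref{lem: stochastic_error_G_LS1} and~\ref{lem: stochastic_error_G_LS2}.

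Finally I would bound $\int_\delta^{B_2}\sqrt{\log(2eB_2 n/\epsilon)}\,d\epsilon \le B_2\sqrt{\log(2eB_2 n/\delta)}$, take $\delta = 1/n$, and collect terms to conclude that the left-hand side of the lemma is at most $C B_2\sqrt{\text{Pdim}(\mathcal{D})(\log n + \log B_2)/n}$ for a universal constant $C$; substituting the ReLU pseudo-dimension bound $\text{Pdim}(\mathcal{D})\lesssim H_{\mathcal{D}}S_{\mathcal{D}}\log S_{\mathcal{D}}$ from \cite{bartlett2019nearly} then gives the stated estimate. I do not expect any genuine obstacle here: this is a textbook empirical-process bound with the same structure as the two preceding lemmas. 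The only points requiring care are keeping the constant explicit while allowing $B_2$ to grow with $n$ (which is why the $\log B_2$ factor from the covering number must be retained) and handling the regime $n<\text{Pdim}(\mathcal{D})$ so the covering-number estimate is valid for every $n$; both are treated as in the earlier proofs.
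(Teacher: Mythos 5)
Your proposal is correct and follows essentially the same approach as the paper: the paper's proof also reduces to a Dudley entropy-integral bound (via Lemma~12 of \citet{huang2022error}, which already incorporates symmetrization, so the paper's prefactor is $8$ rather than your explicit $2$), then invokes Theorem~12.2 of \cite{anthony1999neural} together with the ReLU pseudo-dimension bound of \cite{bartlett2019nearly}, taking $\delta \asymp 1/n$ to close the argument, "following similar arguments as in the proof of Lemma~\ref{lem: stochastic_error_G_LS1}." The only cosmetic difference is that you carry out the symmetrization step explicitly before applying chaining, whereas the paper folds it into the cited lemma; the mathematical content and the dependence on $B_2$, $n$, and $\text{Pdim}(\mathcal{D})$ are identical.
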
}

\begin{proof}{\color{black}
    Define $\mathcal{D}_{|\{(X_i,Y_i)\}_{i=1}^n}=\{(f(X_1,Y_1),\ldots,f(X_n,Y_n)):f \in \mathcal{D}\}$, where $\{(X_i,Y_i)\}_{i=1}^n$ are $n$ i.i.d. samples from $P_{X,Y}$. Let $\mathcal{N}(\epsilon, \mathcal{D}_{|\{(X_i,Y_i)\}_{i=1}^n}, \|\cdot\|_{\infty})$ be the $\epsilon$-covering number of $\mathcal{D}_{|\{(X_i,Y_i)\}_{i=1}^n} \subseteq \mathbb{R}^n$ with respect to the distance $\|\cdot\|_{\infty}$.

    It follows from Lemma 12 in \cite{huang2022error} 
     that  
    \begin{align*}
        &\mathbb{E}_{\{(X_i,Y_i)\}_{i=1}^n}\left[\sup_{f \in \mathcal{D}} \left\{\mathbb{E}f(X,Y)- \frac{1}{n}\sum_{i=1}^n f(X_i,Y_i)\right\}\right] \\
        &\quad \leq 8\mathbb{E}_{\{(X_i,Y_i)\}_{i=1}^n} \left[\inf_{0< \delta < B_2/2}\left\{\delta+ \frac{3}{\sqrt{n}}\int_{\delta}^{B_2/2}
        \sqrt{\mathcal{N}(\epsilon, \mathcal{D}_{|\{(X_i,Y_i)\}_{i=1}^n}, \|\cdot\|_{\infty})}d\epsilon\right\}\right].
    \end{align*}
 It   follows from  similar arguments as in the proof of Lemma \ref{lem: stochastic_error_G_LS1} that
 $\mathcal{E}_5$ can be bounded by
    \begin{align}
        &\mathbb{E}_{\{(X_i,Y_i)\}_{i=1}^n}\left[\sup_{f \in \mathcal{D}} \left\{\mathbb{E}f(X,Y)- \frac{1}{n}\sum_{i=1}^n f(X_i,Y_i)\right\}\right] \nonumber\\
        \leq & C B_2\sqrt{\frac{H_{\mathcal{D}}S_{\mathcal{D}}\log S_{\mathcal{D}}(\mathcal{D})[\log n + \log B_2]}{n}},\label{stochastic_error_pdimD}
    \end{align}
    where $C$ is a universal positive constant.
    }
\end{proof}

We will bound $\mathcal{E}_6$ in Lemma \ref{lem: stochastic_D_G}.
\begin{lemma}\label{lem: stochastic_D_G}
     Suppose $\sup_{f \in \mathcal{D}}\|f\|_{\infty}\leq B_2$ and the pseudo-dimension of $\mathcal{G}$ and $\mathcal{D}$ satisfies $\text{Pdim}(\mathcal{G})< \infty, \text{Pdim}(\mathcal{D})< \infty$. Let $(W_{\mathcal{G}}, H_{\mathcal{G}})$ be the width and depth of the network in $\mathcal{G}$ and $(W_{\mathcal{D}}, H_{\mathcal{D}})$ be the width and depth of the network in $\mathcal{D}$.
   Let $S_{\mathcal{G},\mathcal{D}}$ denote the size of the network in $\mathcal{NN}(m+d,1,2W_{\mathcal{G}}+2W_{\mathcal{D}}+2d,H_{\mathcal{G}}+H_{\mathcal{D}}+1)$.
    Then,
    \begin{align*}
        &\mathbb{E}_{\{(X_i,Y_i)\}_{i=1}^n}\left[\sup_{f \in \mathcal{D}, g \in \mathcal{G}} \left\{\mathbb{E}f(X,g(X,\eta))-\frac{1}{n}\sum_{i=1}^nf(X_i,g(X_i,\eta_i))\right\}\right]\\
        & \quad \leq C B_2 \sqrt{\frac{(H_{\mathcal{G}}+H_{\mathcal{D}}+1)S_{\mathcal{G},\mathcal{D}}\log(S_{\mathcal{G},\mathcal{D}})[\log n + \log B_2]}{n}},
    \end{align*}
    where $C>0$ is a universal constant.
\end{lemma}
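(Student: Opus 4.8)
The plan is to control the expected supremum of the empirical process indexed by the composite class $\{(x,y,\eta)\mapsto f(x,g(x,\eta)): f\in\mathcal{D},\, g\in\mathcal{G}\}$ by the now-standard route of symmetrization, a Dudley-type chaining bound, and a pseudo-dimension estimate for a single enlarged ReLU network that realizes the composition $f\circ(\mathrm{id},g)$. First I would observe that, since $(X_i,g(X_i,\eta_i))$ are i.i.d.\ and the class is uniformly bounded by $B_2$, the symmetrization argument used in the proof of Lemma \ref{lem: stochastic_error_G_LS1} applies verbatim to give
\begin{align*}
&\mathbb{E}_{\{(X_i,Y_i,\eta_i)\}}\!\left[\sup_{f\in\mathcal{D},g\in\mathcal{G}}\Big\{\mathbb{E}f(X,g(X,\eta))-\frac1n\sum_{i=1}^n f(X_i,g(X_i,\eta_i))\Big\}\right]\\
&\quad \leq 8\,\mathbb{E}\!\left[\inf_{0<\delta<B_2/2}\Big\{\delta+\frac{3}{\sqrt n}\int_{\delta}^{B_2/2}\sqrt{\log\mathcal{N}(\epsilon,\mathcal{H}_{|\mathcal{S}_n},\|\cdot\|_\infty)}\,d\epsilon\Big\}\right],
\end{align*}
where $\mathcal{H}=\{f\circ(\mathrm{id},g):f\in\mathcal{D},g\in\mathcal{G}\}$ and $\mathcal{S}_n=\{(X_i,\eta_i)\}_{i=1}^n$, invoking Lemma 12 of \citet{huang2022error}.

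The key step is to bound the covering number of $\mathcal{H}_{|\mathcal{S}_n}$ via a pseudo-dimension estimate. I would show that every $h=f\circ(\mathrm{id},g)\in\mathcal{H}$, with $g\in\mathcal{NN}(m+d,q,W_{\mathcal{G}},H_{\mathcal{G}})$ and $f\in\mathcal{NN}(d+q,1,W_{\mathcal{D}},H_{\mathcal{D}})$, can be represented as a single ReLU feedforward network with input dimension $m+d$, output dimension $1$: one uses the first $H_{\mathcal{G}}$ layers to compute $g(x,\eta)$ while carrying the coordinates of $x$ forward in parallel (this costs the extra $2d$ width and requires two neurons per coordinate to pass a signed value through ReLU, whence the $2W_{\mathcal{G}}+2W_{\mathcal{D}}+2d$ width claim and the $+1$ gluing layer), then feeds $(x,g(x,\eta))$ into the $H_{\mathcal{D}}$ layers of $f$. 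Thus $\mathcal{H}\subseteq\mathcal{NN}(m+d,1,2W_{\mathcal{G}}+2W_{\mathcal{D}}+2d,H_{\mathcal{G}}+H_{\mathcal{D}}+1)$, which has depth $H_{\mathcal{G}}+H_{\mathcal{D}}+1$ and size $S_{\mathcal{G},\mathcal{D}}$ by definition. Applying the \citet{bartlett2019nearly} bound $\mathrm{Pdim}(\mathcal{H})\lesssim (H_{\mathcal{G}}+H_{\mathcal{D}}+1)S_{\mathcal{G},\mathcal{D}}\log S_{\mathcal{G},\mathcal{D}}$, then Theorem 12.2 of \citet{anthony1999neural} (for $n\geq\mathrm{Pdim}(\mathcal{H})$, and the trivial $\lceil 2B_2/\epsilon\rceil^n$ bound otherwise, exactly as in Lemma \ref{lem: stochastic_error_G_LS1}), gives $\log\mathcal{N}(\epsilon,\mathcal{H}_{|\mathcal{S}_n},\|\cdot\|_\infty)\leq \mathrm{Pdim}(\mathcal{H})\log(2eB_2 n/\epsilon)$.

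Finally I would substitute this covering-number bound into the Dudley integral, evaluate $\int_\delta^{B_2/2}\sqrt{\log(2eB_2n/\epsilon)}\,d\epsilon$ and optimize over $\delta$ (taking $\delta$ polynomially small in $n$, as in the earlier lemmas), to arrive at
\begin{align*}
&\mathbb{E}_{\{(X_i,Y_i)\}_{i=1}^n}\!\left[\sup_{f\in\mathcal{D},g\in\mathcal{G}}\Big\{\mathbb{E}f(X,g(X,\eta))-\frac1n\sum_{i=1}^n f(X_i,g(X_i,\eta_i))\Big\}\right]\\
&\quad\leq C B_2\sqrt{\frac{(H_{\mathcal{G}}+H_{\mathcal{D}}+1)S_{\mathcal{G},\mathcal{D}}\log(S_{\mathcal{G},\mathcal{D}})[\log n+\log B_2]}{n}},
\end{align*}
with $C$ a universal constant. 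The main obstacle is the composition-network construction: one must be careful that passing $x$ through ReLU layers unchanged requires doubling the relevant width (store $\sigma(x_j)$ and $\sigma(-x_j)$), that the intermediate output $g(x,\eta)$ stays in the bounded domain $[-B_0,B_0]^q$ so that the subsequent layers of $f$ act on the intended region, and that the resulting architecture genuinely has the width and size claimed in the statement — everything else is a routine repetition of the chaining argument already carried out for $\mathcal{E}_1$ and $\mathcal{E}_5$.
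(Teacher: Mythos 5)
Your proposal matches the paper's proof in all essentials: the paper also observes that each composite $h(x,\eta)=f(x,g(x,\eta))$ can be realized by a single ReLU network in $\mathcal{NN}(m+d,1,2W_{\mathcal{G}}+2W_{\mathcal{D}}+2d,H_{\mathcal{G}}+H_{\mathcal{D}}+1)$ (citing Remarks 13--14 of \citet{nakada2020adaptive} for the gluing-and-passthrough construction you spell out explicitly) and then runs the same symmetrization / Dudley chaining / pseudo-dimension chain of estimates as in Lemma~\ref{lem: stochastic_error_D}. The only cosmetic difference is that you unpack the composition construction where the paper delegates it to a reference.
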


\begin{proof}
{\color{black}
Let $h(X,\eta)=f(X,g(X,\eta))$ and $\mathcal{F}=\mathcal{NN}(m+d,1,2W_{\mathcal{G}}+2W_{\mathcal{D}}+2d,H_{\mathcal{G}}+H_{\mathcal{D}}+1)+1$. Then, $\sup_{h \in \mathcal{F}}\|h\|_{\infty}\leq B_2$. According to Remarks 13-14 in \cite{nakada2020adaptive}, we can show by a similar argument to the proof of Lemma \ref{lem: stochastic_error_D} that
\begin{align*}
        &\mathbb{E}_{\{(X_i,Y_i)\}_{i=1}^n}\left[\sup_{f \in \mathcal{D}, g \in \mathcal{G}} \left\{\mathbb{E}f(X,g(X,\eta))-\frac{1}{n}\sum_{i=1}^nf(X_i,g(X_i,\eta_{i0}))\right\}\right]\\
        \leq & \mathbb{E}_{\{(X_i,Y_i)\}_{i=1}^n}\left[\sup_{h\in \mathcal{F}} \left\{\mathbb{E}h(X,\eta)-\frac{1}{n}\sum_{i=1}^nh(X_i,\eta_{i0})\right\}\right]\\
        \leq & C B_2 \sqrt{\frac{(H_{\mathcal{G}}+H_{\mathcal{D}}+1)S_{\mathcal{G},\mathcal{D}}\log(S_{\mathcal{G},\mathcal{D}})[\log n + \log B_2]}{n}},
    \end{align*}
where $C>0$ is a universal constant.
}
\end{proof}

\section{Additional simulation results}\label{appdx:S4}

In this section, we provide additional simulation results. 
{\color{black} Other than Models 1 and 2 considered in the main text, three additional models are used to show the performance of the proposed method.}

\noindent{\bf Model 1.}
{\it A nonlinear model with an additive error term:}
	\begin{align*}
	Y=X_{1}^{2}+\exp \left(X_{2}+X_{3} / 3\right)+\sin \left(X_{4}+X_{5}\right)+\varepsilon, \quad \varepsilon \sim N(0,1).
	\end{align*}
	
\noindent{\bf Model 2.}
{\it  A model with an additive error term whose variance depends on the predictors:}
\begin{align*}
Y=X_{1}^{2}+\exp\left(X_{2}+X_{3} / 3\right)+X_{4}-X_{5}+\left(0.5+X_{2}^{2} / 2+X_{5}^{2} / 2\right) \times \varepsilon,\ \varepsilon \sim N(0,1)
\end{align*}

\noindent{\bf Model 6.}
{\it A nonlinear model with a heavy tail additive error term:}
	\begin{align*}
	Y=X_{1}^{2}+\exp \left(X_{2}+X_{3} / 3\right)+\sin \left(X_{4}+X_{5}\right)+\varepsilon, \quad \varepsilon \sim t(3).
	\end{align*}

\noindent{\bf Model 7.}
{\it A model with a multiplicative non-Gassisan error term:
\begin{align*}
Y=(5+X_{1}^2/3 + X_{2}^2 + X_{3}^2 + X_{4} + X_{5})\times \exp(0.5 \epsilon),
\end{align*}
where $\epsilon\sim 0.5N(-2,1)+0.5N(2,1).$}

\noindent{\bf Model 8.}
{\it A mixture of two normal distributions:
\begin{align*}
&Y=I_{\{U<0.5\}}N(-X_1,0.25^2)+I_{\{U>0.5\}}N(X_1,0.25^2),
\end{align*}
where $U\sim$Uniform$(0,1)$ and is independent of $X$.}

\noindent We use the same evaluation criteria  in Section \ref{Sim}.

\subsection{Simulation results for different methods for Models 1-2, 6-8}
We compare the performance of the proposed method to the nonparametric least squares regression (NLS) and conditional WGAN (cWGAN).
The numerical results are summarized in {\color{black}Tables \ref{Tab-App-1} and \ref{Tab-App-2}}. It can be seen that the proposed method has smaller MSEs for most cases, indicating that the proposed method can improve the distribution matching to some extent.

\begin{table}
\centering
\caption{\label{Tab-App-1}The $L_1$ and $L_2$ errors, MSE of the estimated  conditional mean and the estimated standard deviation by different methods.}
\begin{threeparttable}
\begin{tabular}{ccc|cccc}
\hline
$X$&Model & Method & $L_1$ & $L_2$ & Mean & SD  \\
\hline
\multirow{15}{*}{$N(0,I_d)$}&\multirow{3}{*}{M1} & NLS & 0.83(0.02) & 1.08(0.05) & - & -\\
&& cWGAN & 1.00(0.02) & 1.97(0.25) & 0.98(0.25) & 0.09(0.03) \\
&& WGR  & {\bf 0.82}(0.02) & {\bf 1.07}(0.05) & {\bf 0.06}(0.01) & {\bf 0.04}(0.01)\\
\cline{2-7}
&\multirow{3}{*}{M2} & NLS & 1.24(0.04) & {\bf 2.45}(0.38) & - & - \\
&& cWGAN & 1.62(0.05) & 4.11(0.44) & 0.85(0.24) & 0.37(0.10) \\
&& WGR  & {\bf 1.23}(0.04) & 3.41(0.38) & {\bf 0.19}(0.10) & {\bf 0.22}(0.04) \\
\cline{2-7}
&\multirow{3}{*}{M6} & NLS & 1.17(0.04) & 3.05(0.70) &- & - \\
&& cWGAN & 1.31(0.03) & 3.95(0.53) & 2.92(0.46) & 0.39(0.15) \\
&& WGR  & {\bf 1.16}(0.05) & {\bf 3.00}(0.71)\ & {\bf 2.49}(0.14) & {\bf 0.28}(0.14) \\
\cline{2-7}
&\multirow{3}{*}{M7} & NLS & {\bf 2.23}(0.09) & {\bf 9.43}(1.09) & - & - \\
&& cWGAN & 2.31(0.09) & 10.13(1.21) & 0.91(0.12) & 1.00(0.08) \\
&& WGR  & 2.24(0.09) & 9.47(1.08) & {\bf 0.20}(0.03) & {\bf 0.85}(0.09) \\
\cline{2-7}
&\multirow{3}{*}{M8} & NLS & {\bf 0.83}(0.02) & 1.09(0.07) &- & - \\
&& cWGAN & {\bf 0.83}(0.03) & 1.09(0.07) & {\bf 0.01}(0.00) & {\bf 0.53}(0.14) \\
&& WGR  & {\bf 0.83}(0.02) & {\bf 1.09}(0.07) & {\bf 0.01}(0.00) & 0.62(0.09) \\
\hline
\multirow{15}{*}{$N(0,I_{100})$}&\multirow{3}{*}{M1} & NLS & 1.17(0.03) & 2.52(0.15) & - & - \leavevmode\\
&& cWGAN & 1.17(0.04) & 2.65(0.41) & 1.67(0.39) & 0.81(0.02)\\
&& WGR  &  {\bf 1.15}(0.04) & {\bf 2.40}(0.24)&{\bf 1.64}(0.25) & {\bf 0.16}(0.04) \\
\cline{2-7}
&\multirow{3}{*}{M2} & NLS &  1.63(0.05) & 5.37(0.47)&  - & -\\
&& cWGAN & 1.64(0.06) & 5.34(0.46) & 2.20(0.22) & 1.11(0.08)\\
&& WGR  &  {\bf 1.61}(0.06) &{\bf 5.27}(0.45) & {\bf 2.06}(0.24) & {\bf 0.23}(0.03)\\
\cline{2-7}
&\multirow{3}{*}{M6} & NLS & {\bf 1.58}(0.03) & {\bf 5.09}(0.62)&  - & -\\
&& cWGAN & 1.71(0.08) & 6.01(0.80) & 5.16(0.74) & 1.19(0.45)\\
&& WGR  &  1.62(0.06) & 5.39(0.66) & {\bf 4.20}(0.53) & {\bf 0.47}(0.19)\\
\cline{2-7}
&\multirow{3}{*}{M7} & NLS &  2.51(0.09) & 11.52(1.17)& - & -\\
&& cWGAN & 2.51(0.09) & 12.02(1.49) & 2.88(0.29) & 0.96(0.25) \\
&& WGR  &  {\bf 2.45}(0.11) & {\bf 11.37}(1.30) & {\bf 2.22}(0.17) & {\bf 0.73}(0.06) \\
\cline{2-7}
&\multirow{3}{*}{M8} & NLS & 0.85(0.02) & 1.13(0.07) &  - & -\\
&& cWGAN &0.89(0.03) & 1.29(0.10) & {\bf 0.00}(0.00) & {\bf 0.06}(0.00) \\
&& WGR  & {\bf 0.83}(0.02) & {\bf 1.09}(0.07) & 0.19(0.05) & {\bf 0.06}(0.30)\\
\hline
\end{tabular}
\footnotesize
NOTE: The corresponding simulation standard errors are given in parentheses. WGR is the propoed method.
\end{threeparttable}
\end{table}

\begin{table}
\centering
\caption{\label{Tab-App-2}MSE of the estimated conditional quantiles by different methods.}
\begin{threeparttable}
\resizebox{\textwidth}{!}{\begin{minipage}{\textwidth}
\begin{tabular}{ccc| ccccc}
\hline
$X$ &Model & Method & $\tau=0.05$ & $\tau=0.25$ & $\tau=0.50$ & $\tau=0.75$ & $\tau=0.95$  \\
\hline
\multirow{10}{*}{$ N(0,I_d)$}&\multirow{2}{*}{M1} & cWGAN &  1.22(0.23) & 1.04(0.25) & 0.99(0.26) & 1.03(0.24) & 1.34(0.21)  \\
&& WGR& {\bf 0.29}(0.07) & {\bf 0.10}(0.01) & {\bf 0.09}(0.02) & {\bf 0.10}(0.03) & {\bf 0.23}(0.06) \\
\cline{2-8}
&\multirow{2}{*}{M2} & cWGAN &  1.86(0.21) & 0.94(0.26) & 0.85(0.26) & 1.00(0.21) & 2.59(0.52)  \\
&& WGR& {\bf 0.77}(0.09) & {\bf 0.31}(0.06) & {\bf 0.19}(0.04) & {\bf 0.27}(0.05) & {\bf 0.81}(0.15) \\
\cline{2-8}
&\multirow{2}{*}{M6} & cWGAN & 3.57(0.73)& 3.29(0.58) &2.95(0.50) &3.01(0.46)& 6.14(1.44) \\
&& WGR& {\bf 3.39}(0.48) & {\bf 2.70}(0.24) & {\bf 2.47}(0.12) & {\bf 2.50}(0.21) & {\bf 2.76}(0.50) \\
\cline{2-8}
&\multirow{2}{*}{M7}& cWGAN & {\bf 0.17}(0.03) & {\bf 0.28}(0.05) & 0.60(0.09) & 1.99(0.21) &  8.40(0.79)  \\
&& WGR& 2.01(0.47) & 0.41(0.10) &{\bf 0.39}(0.06) & {\bf0.74}(0.11) & {\bf 4.04}(0.40)  \\
\cline{2-8}
&\multirow{2}{*}{M8}& cWGAN & {\bf 0.03}(0.02) & 0.04(0.03) & {\bf 0.36}(0.04) & 0.04(0.03) & 0.03(0.02)  \\
&&WGR& {\bf 0.03}(0.01) &{\bf 0.02}(0.01) & 0.40(0.04) & {\bf0.02}(0.01) & {\bf0.03}(0.02)  \\
\hline
\multirow{10}{*}{$ N(0,I_{100})$} & \multirow{2}{*}{M1}&  cWGAN & 3.18(0.71) & 1.83(0.22) &1.75(0.16)  & 1.89(0.21) & 3.61(0.39)  \\
&& WGR& {\bf1.84}(0.23) & {\bf1.69}(0.20) & {\bf1.66}(0.16) & {\bf1.88}(0.13) & {\bf2.41}(0.14) \\
\cline{2-8}
& \multirow{2}{*}{M2} &   cWGAN & 4.99(0.51) & 2.63(0.24) & 2.21(0.22) & 2.79(0.36) & 5.41(0.65) \\
&& WGR& {\bf 3.42}(1.07)& {\bf 2.26}(0.28)& {\bf 2.19}(0.22)& {\bf 2.57}(0.27) & {\bf3.49}(0.47)   \\
\cline{2-8}
& \multirow{2}{*}{M6}&   cWGAN & 7.79(1.79) & 5.95(1.09) & 5.04(0.80) & 6.15(0.88) & 12.86(2.92)  \\
&& WGR& {\bf5.47}(0.76) & {\bf4.89}(0.67) & {\bf4.72}(0.79) &{\bf 4.91}(0.78) &{\bf 4.65}(0.71) \\
\cline{2-8}
& \multirow{2}{*}{M7}&  cWGAN & {\bf 1.43}(0.24) & 1.92(0.11) & 2.65(0.17) & 4.21(0.49) & 10.38(1.89)    \\
&& WGR & 1.88(0.186) & {\bf 1.82}(0.13) & {\bf 2.17}(0.17) & {\bf 3.11}(0.23) & {\bf 7.15}(0.59)\\
\cline{2-8}
& \multirow{2}{*}{M8}&  cWGAN & 1.69(0.10) & 1.06(0.08) & {\bf0.34}(0.04) & 0.99(0.08) & 1.59(0.10)   \\
&& WGR &  {\bf0.64}(0.13)  & {\bf0.64}(0.13) & 0.56(0.10) & {\bf0.66}(0.11) & {\bf 0.62}(0.10) \\
\hline
\end{tabular}
\footnotesize
NOTE: The corresponding simulation standard errors are given in parentheses. WGR is the propoed method.
\end{minipage}}
\end{threeparttable}
\end{table}

\subsection{Simulation results with varying  noise dimension}
We check the performance of the proposed method with varying dimension of the noise $\eta$.
Other than  $m=3$ considered in Section \ref{Sim}, we also try $m=10$ and $m=25$. The results are given in Tables \ref{Tab-App-m} and \ref{Tab-App-m-quantile}.
For larger $m$, there tends to be a smaller MSE of the estimated conditional quantiles, but it takes  longer time to train.
There is trade-off between performance and training time.
We also observe that the performance of the proposed method is
somewhat robust to the noise dimension $m$, as long as
 $m$ takes reasonable values.

\begin{table}
\centering
\caption{\label{Tab-App-m}The $L_1$ and $L_2$ error, MSE of the estimated conditional mean and the estimated standard deviation for different $m$.}
\begin{threeparttable}
\begin{tabular}{ccc|cccc}
\hline
$X$&Model & $m$ & $L_1$ & $L_2$ & Mean & SD  \\
\hline
\multirow{15}{*}{$ N(0,I_d)$}&\multirow{3}{*}{M1}
& 3 &  0.83(0.02) & 1.07(0.05) &0.06(0.01) & 0.04(0.01) \\
&& 10 & 0.83(0.02)  & 1.08(0.05) & 0.06(0.01) & 0.04(0.2)   \\
&& 25 & 0.83(0.03) & 1.08(0.08) & 0.06(0.01) & 0.07(0.02)  \\
\cline{2-7}
&\multirow{3}{*}{M2}
& 3 &  1.24(0.04) & 3.42(0.38) & 0.18(0.13) & 0.25(0.08)\\
&& 10 & 1.23(0.05) & 3.38(0.39) & 0.14(0.03) & 0.14(0.03)    \\
&& 25 & 1.21(0.04) & 3.21(0.28) & 0.16(0.04) & 0.20(0.05) \\
\cline{2-7}
&\multirow{3}{*}{M6}
& 3 &   1.16(0.05) & 3.00(0.72) & 2.49(0.14) & 0.28(0.14) \\
&& 10 &  1.16(0.05) & 3.02(0.72) & 2.36(0.16) & 0.20(0.06) \\
&& 25 &  1.18(0.07) & 3.10(0.72) & 2.37(0.17) & 0.18(0.06)  \\
\cline{2-7}
&\multirow{3}{*}{M7}
& 3 &    2.24(0.09) & 9.47(1.08) & 0.20(0.03) & 0.85(0.09) \\
&& 10 &  2.24(0.09) & 9.43(1.09) & 0.20(0.02) & 0.80(0.15) \\
&& 25 &  2.21(0.10) & 9.21(1.22) & 0.22(0.04) & 0.77(0.07)\\
\cline{2-7}
&\multirow{3}{*}{M8}
& 3 &   0.83(0.02) & 1.09(0.07) & 0.01(0.01) & 0.62(0.09) \\
&& 10 &  0.83(0.02) &1.09(0.07) & 0.00(0.00) & 0.63(0.10) \\
&& 25 &  0.83(0.02) & 1.08(0.07) & 0.00(0.00) & 0.56(0.13) \\
\hline
\multirow{15}{*}{$ N(0,I_{100})$}& \multirow{3}{*}{M1}& 3 &1.15(0.04) & 2.40(0.24) & 1.64(0.25) & 0.16(0.04) \\
&& 10   & 1.25(0.03) & 2.83(0.21) & 1.86(0.22) & 0.17(0.05)\\
&& 25 & 1.18(0.04) & 2.47(0.17) & 1.44(0.13) & 0.16(0.05) \\
\cline{2-7}
& \multirow{3}{*}{M2}& 3  & 1.62(0.05) & 5.36(0.46) & 2.11(0.22) & 0.36(0.12)    \\
&& 10&1.63(0.08) & 5.40(0.57) & 2.20(0.23) & 0.38(0.12) \\
&& 25 & 1.62(0.06) & 5.31(0.43) & 2.17(0.19) & 0.45(0.19)\\
\cline{2-7}
& \multirow{3}{*}{M6}& 3  & 1.62(0.06) & 5.39(0.66) & 4.20(0.53) & 0.47(0.19)  \\
&& 10  & 1.63(0.05) & 5.34(0.56) & 4.47(0.62) & 0.65(0.18) \\
&& 25 & 1.58(0.03) & 4.92(0.33) & 4.20(0.39) & 0.50(0.13) \\
\cline{2-7}
& \multirow{3}{*}{M7}& 3 & 2.45(0.11) & 11.37(1.30) & 2.22(0.17) & 0.73(0.06)  \\
&& 10  & 2.44(0.10) & 11.14(1.08) & 2.15(0.24) & 0.80(0.15) \\
&& 25 & 2.46(0.09) & 11.26(1.20) & 2.10(0.25) & 0.73(0.06) \\
\cline{2-7}
& \multirow{3}{*}{M8}& 3& 0.83(0.02) & 1.09(0.07) & 0.19(0.05) & 0.30(0.16)    \\
&& 10 & 0.90(0.02) & 1.34(0.06) & 0.24(0.08) & 0.34(0.12) \\
&& 25 & 0.89(0.04) & 1.27(0.11) & 0.15(0.08) & 0.40(0.13)\\
\hline
\end{tabular}
\footnotesize
NOTE: The corresponding  standard errors are given in parentheses.
\end{threeparttable}
\end{table}

\begin{table}
\centering
\caption{\label{Tab-App-m-quantile}Mean squared prediction error (MSE) of the estimated conditional quantile for different values of  $m$.}
\begin{threeparttable}
\begin{tabular}{ccc| ccccc }
\hline
$X$ & Model & $m$ & $\tau=0.05$ & $\tau=0.25$ & $\tau=0.50$ & $\tau=0.75$ & $\tau=0.95$\\
\hline
\multirow{15}{*}{$N(0,I_d)$}&\multirow{3}{*}{M1} & 3 & 0.29(0.07) & 0.10(0.01) & 0.09(0.02) & 0.10(0.03) & 0.23(0.06)  \\
&& 10 & 0.18(0.04) & 0.09(0.02) & 0.08(0.01) & 0.09(0.01) & 0.16(0.05) \\
& &25 & 0.26(0.05) & 0.10(0.02) & 0.07(0.02) & 0.11(0.10) & 0.29(0.10)  \\
\cline{2-8}
&\multirow{3}{*}{M2}  & 3 & 0.77(0.09) & 0.31(0.06) & 0.19(0.04) & 0.27(0.05) & 0.81(0.15)  \\
&& 10 & 0.49(0.11) & 0.21(0.05) & 0.15(0.03) & 0.20(0.04) & 0.54(0.12)  \\
&& 25 & 0.62(0.18) & 0.24(0.07) & 0.16(0.04) & 0.23(0.04) & 0.76(0.16) \\
\cline{2-8}
&\multirow{3}{*}{M6} & 3 &  3.39(0.48) & 2.70(0.24) & 2.47(0.12) & 2.50(0.21) & 2.76(0.50) \\
&& 10 & 2.74(0.26) & 2.39(0.19) & 2.39(0.20) & 2.59(0.21) & 3.07(0.45) \\
&& 25 &  2.97(0.30) & 2.46(0.20) & 2.37(0.20) & 2.55(0.21) & 3.02(0.35) \\
\cline{2-8}
&\multirow{3}{*}{M7} & 3 &  2.01(0.47) & 0.40(0.10) & 0.39(0.06) & 0.74(0.11) & 4.04(0.40)  \\
&& 10 & 1.51(0.66) & 0.56(0.05) & 0.31(0.05) & 0.63(0.19) & 4.73(0.68)  \\
&& 25 & 1.07(0.14) & 0.50(0.11) & 0.32(0.09) & 0.57(0.10) & 3.88(0.51)  \\
\cline{2-8}
&\multirow{3}{*}{M8} & 3 &  0.03(0.01) & 0.02(0.01) & 0.40(0.04) & 0.02(0.01) & 0.03(0.02) \\
&& 10 &  0.02(0.01) & 0.02(0.01) & 0.41(0.05) & 0.02(0.01) & 0.02(0.01) \\
&& 25 & 0.03(0.02) & 0.02(0.02) & 0.41(0.04) & 0.02(0.02) & 0.03(0.01) \\
\hline
\multirow{15}{*}{$N(0,I_{100})$} &\multirow{3}{*}{M1} & 3  &  1.84(0.23) & 1.69(0.20) & 1.75(0.16) & 1.88(0.13) & 2.41(0.14) \\
&& 10 & 2.05(0.30) & 1.91(0.25) & 1.92(0.22) & 2.02(0.23) & 2.52(0.36)\\
&& 25 & 1.69(0.17) & 1.42(0.15) & 1.47(0.13) & 1.65(0.12) & 2.02(0.22)\\
\cline{2-8}
 &\multirow{3}{*}{M2} & 3 & 3.42(1.07) & 2.26(0.28) & 2.19(0.22) & 2.57(0.27) & 3.49(0.47)\\
&& 10& 3.07(0.67) & 2.29(0.21) & 2.33(0.26) & 2.64(0.37) & 3.55(0.44) \\
&& 25 & 3.58(1.15) & 2.25(0.17) & 2.34(0.17) & 2.79(0.12) & 3.54(0.26) \\
\cline{2-8}
 &\multirow{3}{*}{M6} & 3 & 4.47(0.76) & 4.89(0.67) & 4.72(0.79) & 4.91(0.78) & 5.65(0.71) \\
&& 10  & 5.91(0.55) & 4.68(0.55) & 4.55(0.61) & 4.94(0.68) & 6.17(0.74) \\
&& 25 & 4.28(0.49) & 4.08(0.36) & 4.36(0.38) & 4.71(0.57) & 5.35(0.76)\\
\cline{2-8}
 &\multirow{3}{*}{M7} & 3 & 1.88(0.19) & 1.82(0.13) & 2.17(0.17) & 3.11(0.23) & 7.15(0.59) \\
&& 10& 1.91(0.29) & 1.87(0.17) & 2.16(0.21) & 3.06(0.30) & 7.22(0.91) \\
&& 25 & 1.89(0.23) & 1.79(0.18) & 2.10(0.25) & 3.00(0.37) & 6.77(0.69) \\
\cline{2-8}
 &\multirow{3}{*}{M8} & 3& 0.64(0.13) & 0.64(0.13) & 0.56(0.10) & 0.66(0.11) & 0.62(0.10) \\
&& 10& 0.67(0.12) & 0.62(0.10) & 0.63(0.08) & 0.64(0.12) & 0.59(0.08) \\
&& 25& 0.69(0.17) & 0.60(0.09) & 0.52(0.10) & 0.65(0.06) & 0.67(0.09)  \\
\hline
\end{tabular}
\footnotesize
NOTE: The corresponding simulation standard errors are given in parentheses.
\end{threeparttable}
\end{table}

\subsection{Simulation results for  different  values of $J$}
We conduct simulation studies to check the performance of the proposed method for different $J$, the sample size of the  generated noise vector $\eta$.
The generator and discriminator networks have two fully-connected hidden layers. The LeakyReLU function is used as the active function.
The noise vector $\eta$ is generated from $N(\bm{0},\bm{I}_{3})$.
We take $J=10,50,200,500$, and report the results in Tables \ref{Tab-App-J} and \ref{Tab-App-J-quantile}.
It can be seen that the proposed method works comparably  for  different  $J$.

\begin{table}
\centering
\caption{\label{Tab-App-J}The $L_1$ and $L_2$ error, MSE of the estimated conditional mean and the estimated standard deviation for different $J$.}
\begin{threeparttable}	
\begin{tabular}{ccc| cccc }
\hline
$X$ & Model & $J$ & $L_1$ & $L_2$ & Mean & SD \\
\hline
\multirow{20}{*}{$N(0,I_d)$} &\multirow{4}{*}{M1} 
& 10 & 0.83(0.02) & 1.09(0.04) & 0.07(0.01) & 0.10(0.03)   \\
&& 50 &   0.82(0.02) & 1.07(0.04) & 0.06(0.02) & 0.05(0.02)\\
&& 200 & 0.83(0.02) & 1.07(0.05) &0.06(0.01) & 0.04(0.01) \\
&& 500 & 0.82(0.02) &1.06(0.06) & 0.05(0.01) &0.05(0.02)  \\
\cline{2-7}
&\multirow{4}{*}{M2}
& 10 &  1.24(0.04) & 3.42(0.38) & 0.18(0.13) & 0.25(0.08) \\
&& 50 &   1.23(0.05) & 3.42(0.40) & 0.19(0.14) & 0.26(0.09) \\
&& 200 & 1.23(0.04) & 3.41(0.38) & 0.19(0.10) & 0.22(0.04)\\
&& 500 & 1.22(0.04) & 3.40(0.38) & 0.16(0.13) &0.22(0.03) \\
\cline{2-7}
&\multirow{4}{*}{M6}
 & 10 &  1.16(0.03) & 2.86(0.30) & 2.60(0.22) &0.35(0.13)\\
&& 50 &  1.16(0.04) & 3.02(0.66) & 2.51(0.15) & 0.24(0.14)\\
&& 200 & 1.16(0.05) & 3.00(0.72) & 2.49(0.14) & 0.28(0.14)\\
&& 500 & 1.16(0.05) & 3.00(0.71) & 2.49(0.14) & 0.28(0.14)\\
\cline{2-7}
&\multirow{4}{*}{M7}
& 10 &  2.23(0.09) & 9.44(1.08) & 0.21(0.03) & 0.95(0.12)\\
&& 50 &  2.24(0.09) & 9.42(1.10) & 0.21(0.04) & 0.77(0.15)  \\
&& 200 & 2.24(0.09) & 9.47(1.08) & 0.20(0.03) & 0.85(0.09) \\
&& 500 & 2.23(0.09) & 9.40(1.07) & 0.18(0.02) & 0.86(0.09) \\
\cline{2-7}
&\multirow{4}{*}{M8}
& 10 &  0.83(0.03) & 1.09(0.07) & 0.00(0.00) & 0.59(0.15)\\
&& 50 &  0.83(0.02) & 1.08(0.07) & 0.00(0.00) & 0.62(0.14)  \\
&& 200 & 0.83(0.02) & 1.09(0.07) & 0.01(0.01) & 0.62(0.09)\\
&& 500 & 0.83(0.02) & 1.09(0.07) & 0.00(0.00) & 0.60(0.11) \\
\hline
\multirow{20}{*}{$N(0,I_{100})$} &\multirow{4}{*}{M1} & 10& 1.17(0.05) & 2.40(0.21) & 1.71(0.20) & 0.20(0.13) \\
&& 50 & 1.15(0.03) & 2.50(022) &1.65(0.22) &0.14(0.04)  \\
&& 200& 1.15(0.04) & 2.40(0.24) & 1.64(0.25) & 0.16(0.04) \\
&&500 & 1.20(0.03) &2.45(0.20) & 1.74(0.18) & 0.13(0.03)\\
\cline{2-7}
&\multirow{4}{*}{M2}& 10 & 1.62(0.05) & 5.36(0.46) & 2.11(0.22) & 0.36(0.12) \\
&& 50 & 1.59(0.04) & 5.23(0.54) & 1.98(0.27) & 0.37(0.10) \\
&& 200  & 1.61(0.06) & 5.27(0.45) & 2.06(0.24) & 0.30(0.03)\\
&&500 & 1.62(0.05)& 5.39(0.48) & 2.08(0.24) &0.30(0.05) \\
\cline{2-7}
&\multirow{4}{*}{M6} & 10  & 1.58(0.05) &5.14(0.54) &4.32(0.51) &0.43(0.17)\\
&& 50  &1.57(0.05) & 5.06(0.52) &4.18(0.59) & 0.53(0.17)\\
&& 200 & 1.62(0.06) & 5.39(0.66) & 4.20(0.53) & 0.47(0.19)\\
&&500  & 1.57(0.05) & 5.05(0.56) & 4.14(0.50) &0.44(0.13) \\
\cline{2-7}
&\multirow{4}{*}{M7} & 10 & 2.44(0.11) & 11.13(1.23) & 2.03(0.15) & 0.73(0.09) \\
&& 50& 2.45(0.10) & 11.23(1.28) & 2.12(0.16) & 0.68(0.05)\\
&& 200& 2.45(0.11) & 11.37(1.30) & 2.22(0.17) & 0.73(0.06) \\
&&500 & 2.46(0.12) & 11.75(1.61) &2.24(0.21) & 0.78(0.10) \\
\cline{2-7}
&\multirow{4}{*}{M8}& 10 & 0.83(0.02) & 1.15(0.06) & 0.15(0.08) & 0.25(0.15)  \\
&& 50 & 0.83(0.02) & 1.09(0.06) & 0.19(0.07) & 0.28(0.12)  \\
&& 200  & 0.83(0.02) & 1.09(0.07) & 0.19(0.05) & 0.30(0.16)\\
&&500 & 0.83(0.02) & 1.10(0.07) & 0.16(0.06) & 0.32(0.16) \\
\hline
\end{tabular}
\footnotesize
NOTE: The corresponding simulation standard errors are given in parentheses.
\end{threeparttable}
\end{table}

\begin{table}
\centering
\caption{\label{Tab-App-J-quantile}Mean squared prediction error (MSE) of the estimated conditional quantile for different $J$.}
\begin{threeparttable}	
\begin{tabular}{ccc| ccccc }
\hline
$X$ &Model & $J$ & $\tau=0.05$ & $\tau=0.25$ & $\tau=0.50$ & $\tau=0.75$ & $\tau=0.95$  \\
\hline
\multirow{20}{*}{$N(0,I_d)$}&\multirow{4}{*}{M1} & 10 & 0.44(0.12) & 0.12(0.03) & 0.09(0.02) & 0.15(0.05) & 0.30(0.09)  \\
&& 50 & 0.33(0.11) & 0.09(0.02) & 0.07(0.02) & 0.11(0.03) & 0.19(0.05)   \\
&& 200 & 0.29(0.07) & 0.10(0.01) & 0.09(0.02) & 0.10(0.03) & 0.23(0.06)   \\
&& 500 & 0.25(0.07) & 0.09(0.02) & 0.06(0.01) & 0.09(0.02) & 0.20(0.05)   \\
\cline{2-8}
&\multirow{4}{*}{M2}  & 10 &  1.19(0.36) & 0.30(0.08) & 0.25(0.03) & 0.33(0.06) & 0.72(0.19)  \\
&& 50 &  1.01(0.29) & 0.31(0.09) & 0.24(0.04) & 0.39(0.09) & 0.87(0.29)    \\
&& 200 & 0.77(0.09) & 0.31(0.06) & 0.19(0.04) & 0.27(0.05) & 0.81(0.15)\\
&& 500 & 0.73(0.14) & 0.30(0.05) & 0.20(0.04) & 0.29(0.05) & 0.78(0.14)   \\
\cline{2-8}
&\multirow{4}{*}{M6}  & 10 &  3.30(0.64) & 2.75(0.39) & 2.61(0.25) & 2.61(0.22) & 2.86(0.26) \\
&& 50 & 3.26(0.38) & 2.70(0.29) & 2.51(0.19) & 2.51(0.19) & 2.75(0.44)   \\
&& 200 &  3.39(0.48) & 2.70(0.24) & 2.47(0.12) & 2.50(0.21) & 2.76(0.50)\\
&& 500 & 3.77(0.67) & 2.89(0.36) & 2.51(0.15) & 2.54(0.20) & 3.16(0.79)\\
\cline{2-8}
&\multirow{4}{*}{M7} & 10 &  3.10(0.56) & 0.63(0.12) & 0.42(0.08) & 0.97(0.11) & 3.83(0.44)  \\
&& 50 &  1.77(0.49) & 0.30(0.06) & 0.37(0.06) & 0.80(0.16) & 4.34(0.96) \\
&& 200 &  2.01(0.47) & 0.40(0.10) & 0.39(0.06) & 0.74(0.11) & 4.04(0.40)  \\
&& 500 &  2.80(0.63) & 0.42(0.14) & 0.42(0.06) & 0.96(0.13) & 3.99(0.68)\\
\cline{2-8}
&\multirow{4}{*}{M8} & 10 & 0.04(0.01) & 0.03(0.01) & 0.56(0.09) & 0.05(0.01) & 0.01(0.01)  \\
&& 50 &   0.02(0.01) & 0.02(0.01) & 0.39(0.05) & 0.02(0.01) & 0.03(0.02) \\
&& 200 &  0.03(0.01) & 0.02(0.01) & 0.40(0.04) & 0.02(0.01) & 0.03(0.02)\\
&& 500 &  0.02(0.01) & 0.02(0.01) & 0.42(0.05) & 0.04(0.01) & 0.02(0.02) \\
\hline
\multirow{20}{*}{$N(0,I_{100})$}&\multirow{4}{*}{M1}& 10& 2.29(0.46) & 1.81(0.25) &1.74(0.20) & 1.91(0.21) & 2.62(0.50)   \\
&& 50 & 1.77(0.29) & 1.74(0.46) & 1.72(0.20) & 1.86(0.17) & 2.25(0.24)  \\
&& 200&  1.84(0.23) & 1.69(0.20) & 1.75(0.16) & 1.88(0.13) & 2.41(0.14) \\
&& 500 & 1.81(0.25) & 1.73(0.21) & 1.81(0.19) & 1.95(0.15) & 2.35(0.22) \\
\cline{2-8}
&\multirow{4}{*}{M2}& 10 &2.80(0.40) & 2.22(0.25) & 2.20(0.22) & 2.45(0.29) & 3.24(0.48) \\
&& 50& 2.55(0.33) & 2.17(0.23) & 2.11(0.22) & 2.36(0.31) & 3.12(0.43) \\
&& 200 & 3.42(1.07) & 2.26(0.28) & 2.19(0.22) & 2.57(0.27) & 3.49(0.47)  \\
&& 500  & 2.52(0.45) & 2.14(0.23) & 2.18(0.23) & 2.46(0.32) & 3.40(0.49) \\
\cline{2-8}
&\multirow{4}{*}{M6}& 10 & 4.61(0.57) & 4.42(0.57) & 4.49(0.53) & 4.72(0.56) & 5.18(0.62) \\
&& 50 & 4.52(0.81) & 4.27(0.64) & 4.41(0.61) & 4.80(0.63) & 5.66(0.97) \\
&& 200  & 4.47(0.76) & 4.89(0.67) & 4.72(0.79) & 4.91(0.78) & 5.65(0.71)\\
&& 500 & 4.44(0.55) & 4.24(0.51) & 4.36(0.51) & 4.65(0.59) & 5.14(0.63) \\
\cline{2-8}
&\multirow{4}{*}{M7}& 10 & 1.78(0.16) & 1.77(0.16) & 2.01(0.15) & 2.79(0.19) & 6.74(0.77) \\
&& 50  & 1.72(0.19) & 1.75(0.19) & 2.09(0.21) & 3.00(0.25) & 6.77(0.52)  \\
&& 200& 1.88(0.19) & 1.82(0.13) & 2.17(0.17) & 3.11(0.23) & 7.15(0.59)\\
&& 500 & 2.07(0.18) & 1.86(0.16) & 2.16(0.17) & 3.16(0.24) & 7.23(0.70)  \\
\cline{2-8}
&\multirow{4}{*}{M8}& 10& 1.60(0.13) & 1.04(0.10) & 0.36(0.12) & 1.07(0.11) & 1.63(0.14)   \\
&& 50&0.56(0.05) & 0.66(0.10) & 0.42(0.04) & 0.64(0.09) & 0.60(0.09) \\
&& 200  & 0.64(0.13) & 0.64(0.13) & 0.56(0.10) & 0.66(0.11) & 0.62(0.10)\\
&& 500& 0.60(0.08) & 0.72(0.09) & 0.40(0.04) & 0.65(0.08) & 0.61(0.08)  \\
\hline
\end{tabular}
\footnotesize
NOTE: The corresponding  standard errors are given in parentheses.
\end{threeparttable}
\end{table}

\clearpage
\section{Additional real data examples}\label{appdx:S5}

\subsection{Color image dataset: Cifar10 and STL10 dataset.}
In this part, we apply the proposed method to two color image datasets: Cifar10 and STL10 dataset available at \url{https://www.cs.toronto.edu/~kriz/cifar.html} and  \url{https://cs.stanford.edu/~acoates/stl10/}.
The images in  Cifar-10 dataset and STL10 dataset are stored in $32 \times 32 \times 3 $ matrices and $96 \times 96 \times 3 $ matrices, respectively, indicating that the information contained in images from Cifar-10 dataset is less and easier to reconstruct than those in STL10 dataset.
Since the images are of different sizes, we resize all images into the size $128\times 128 \times 3$, a commonly-used size for color image generation.

The Cifar-10 dataset contains 60000 colored images in 10 classes, with 6000 images per class.
STL10 dataset contains 13000 labelled images belonging to 10 classes on average, and 100000 unlabelled images, which may be images of other types of animals or vehicles.
For comparison, we consider several  settings for training.

\noindent{\bf (S1)}: For Cifar-10 dataset, we randomly select 10000 images for training, 1000 images for validation, and 10000 images for testing.

\noindent{\bf(S2)}: For  STL10 dataset, we only pick the unlabelled images from the dataset, and randomly select 80000 images for training, 1000 images for validation, and 2000 images for testing.

\noindent{\bf(S3)}:  For STL10 dataset, we only pick the labelled images from the dataset, and randomly partition them into three parts:  10000 training images,  1000 validation images, and 2000 test images.

\noindent{\bf(S4)}: We use the same images as in setting (S3), but  use the training images in each class to obtain different generators for different classes. In other words, 10 generators are obtained as there are 10 classes of labelled images in STL10 dataset.

We use the  Fr\'{e}chet inception distance (FID) score  \citep{NIPS2017_8a1d6947} to measure the performance of our method and compare it with other methods. Table \ref{Tab-app-stl10} shows the FID scores for different settings. NLS has the highest FID scores in all settings, which means it has the worst image reconstruction quality. WGR has lower FID scores than cWGAN in setting (S1), (S2), and (S3), but similar FID scores in setting (S4). Setting (S4) is the easiest one, because it only requires learning the conditional distribution of one class, while other settings require learning a mixture conditional distribution of all classes. Our method achieves the lowest FID scores in all settings.

\begin{table}
\centering
\caption{FID score of the three methods on the test dataset of color images.}
\label{Tab-app-stl10}
\begin{tabular}{c |cccc }
\hline
Method & (S1) & (S2) & (S3) & (S4)\\
\hline
NLS & 66.093 & 116.875 &113.659 & 110.068  \\
cWGAN & 32.669 & 75.211 & 72.731 & 66.669 \\
WGR & 29.513 & 69.653 & 67.565 & 67.110 \\
\hline
\end{tabular}
\end{table}

\subsubsection{Image reconstruction: STL10 dataset}\label{STL10}
In this part, we apply WGR to the reconstruction task for color images.
We use the STL10 dataset, which is available at \url{https://cs.stanford.edu/~acoates/stl10/}.

We preprocess the STL10 dataset by resizing all the images to $128\times 128 \times 3$, which is a standard size for color image generation tasks.
We then simulate a missing data scenario by masking the central quarter part of each image.
Our goal is to reconstruct the masked part of the image.
Therefore, the covariates $X$ have a dimension of $128\times128\times3-65\times65\times 3=36477$ and the response $Y$ has a dimension of $65\times 65\times 3=12675$.

Figure \ref{Fig-stl10} displays the reconstructed images from three different methods. The images produced by WGR are more faithful to the original, as they have higher clarity than NLS and higher accuracy than cWGAN. We use the FID score to measure the quality of reconstructed images numerically. The FID scores of NLS, cWGAN, and WGR are 113.66, 72.73, and 67.56 respectively. In this case, our method achieves the lowest FID score, which implies that it reconstructs images with better quality to a certain degree.

\begin{figure}
\centering
\begin{minipage}[t]{0.85\linewidth}
\hspace{3.0cm} $X$ \hspace{0.8cm} Truth \hspace{0.6cm} WGR\hspace{1.0cm} NLS \hspace{0.5cm} cWGAN
\end{minipage}
\\[0.1\baselineskip]
\begin{minipage}[t]{0.85\linewidth}
\centering
	\includegraphics[width=0.08\textheight]{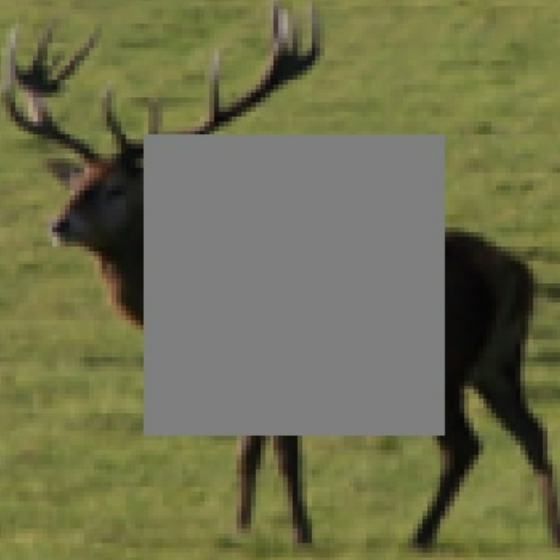}
	\includegraphics[width=0.08\textheight]{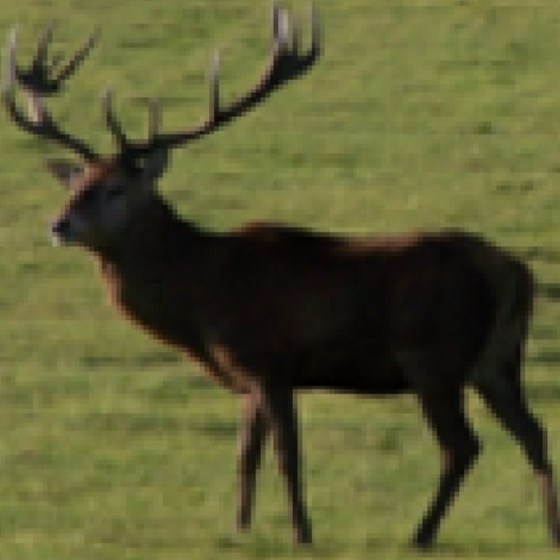}
	\includegraphics[width=0.08\textheight]{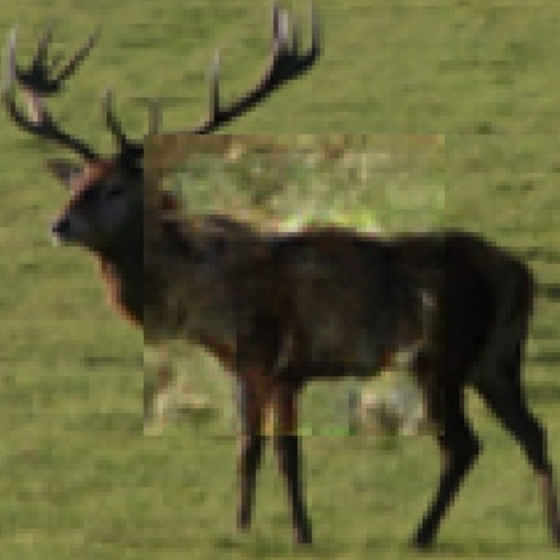}
	\includegraphics[width=0.08\textheight]{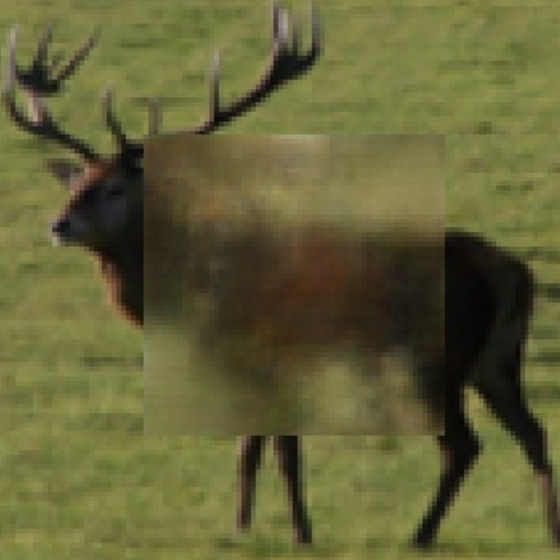}
	\includegraphics[width=0.08\textheight]{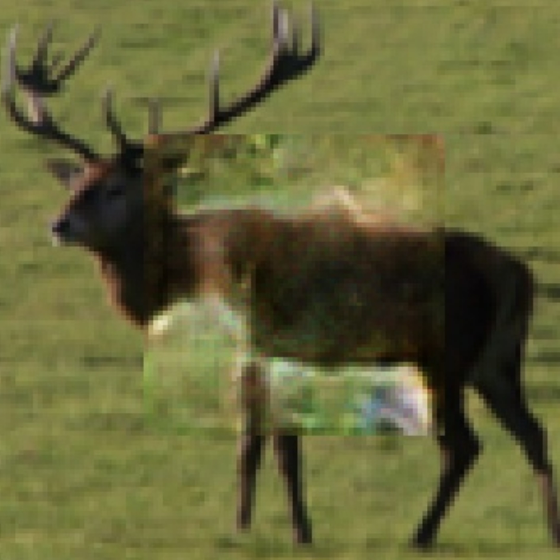}
\end{minipage}
\\[0.1\baselineskip]
\begin{minipage}[t]{0.85\linewidth}
\centering
	\includegraphics[width=0.08\textheight]{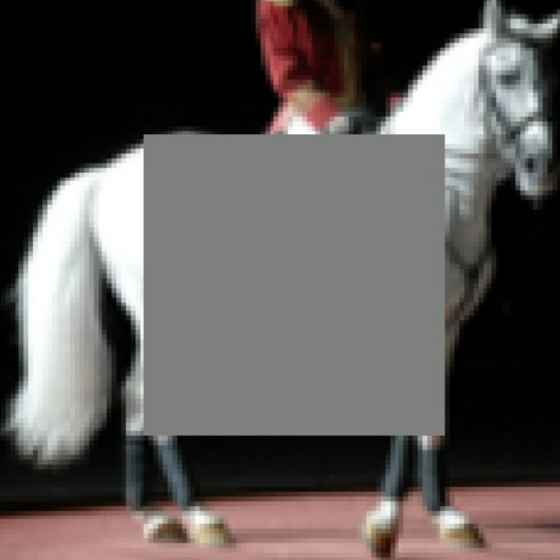}
	\includegraphics[width=0.08\textheight]{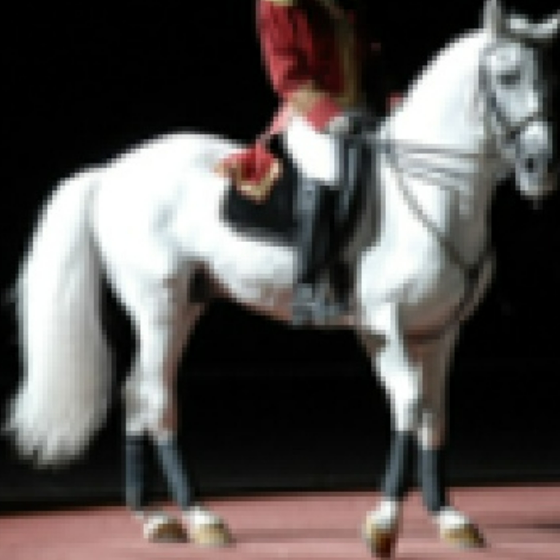}
	\includegraphics[width=0.08\textheight]{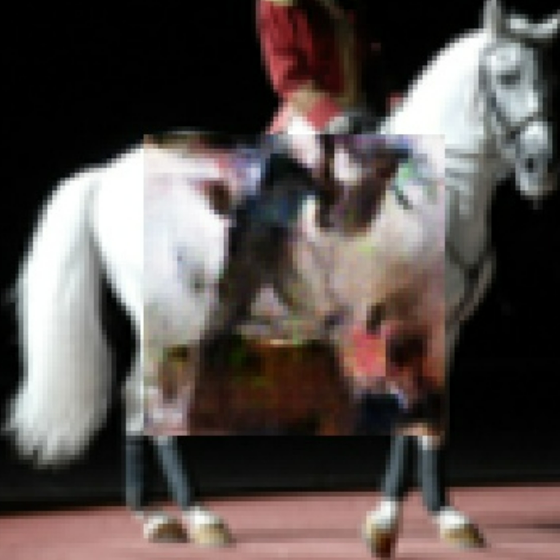}
	\includegraphics[width=0.08\textheight]{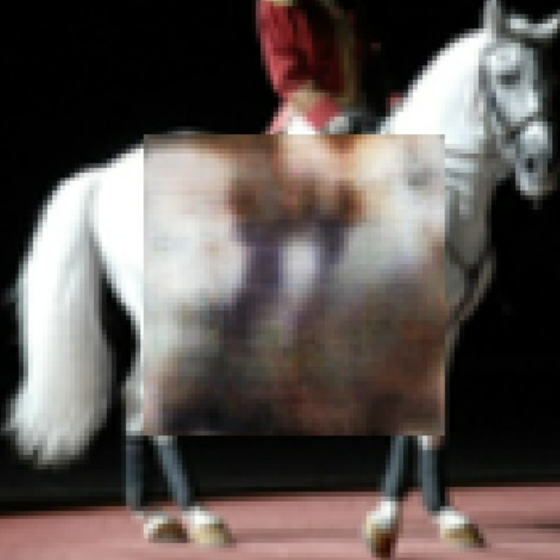}
	\includegraphics[width=0.08\textheight]{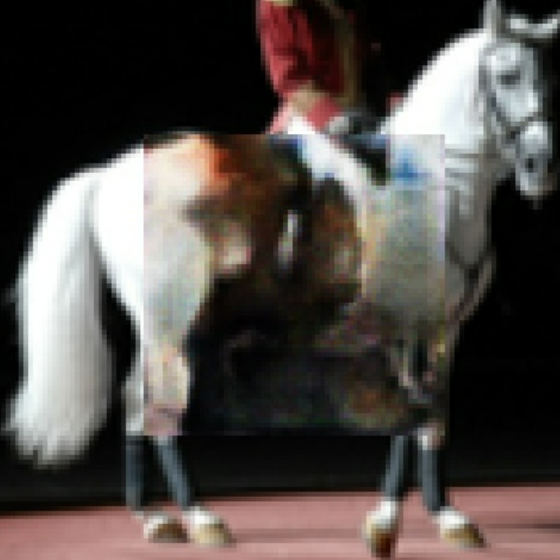}
\end{minipage}
\\[0.1\baselineskip]
\begin{minipage}[t]{0.85\linewidth}
\centering
	\includegraphics[width=0.08\textheight]{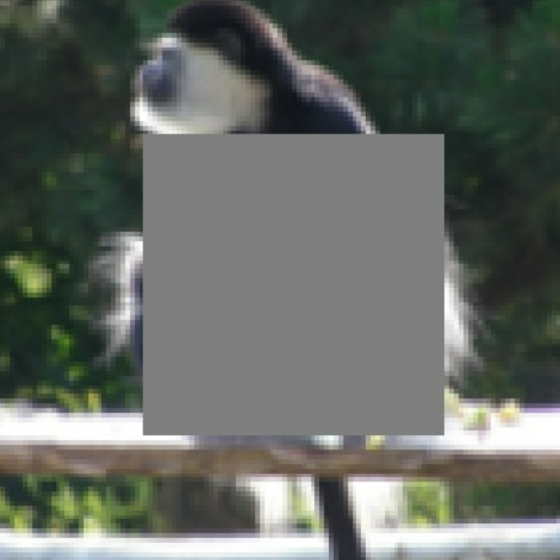}
	\includegraphics[width=0.08\textheight]{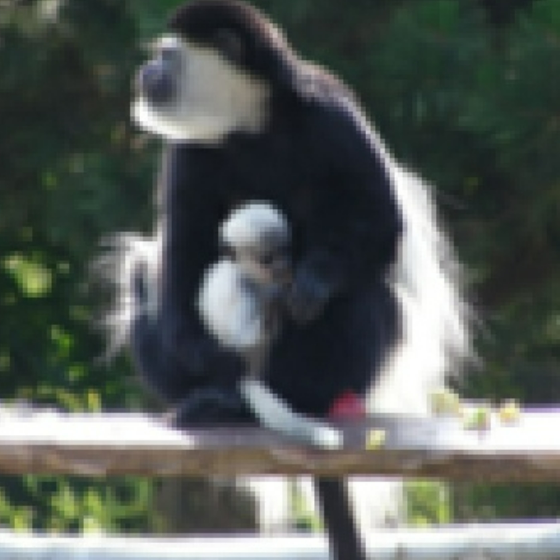}
	\includegraphics[width=0.08\textheight]{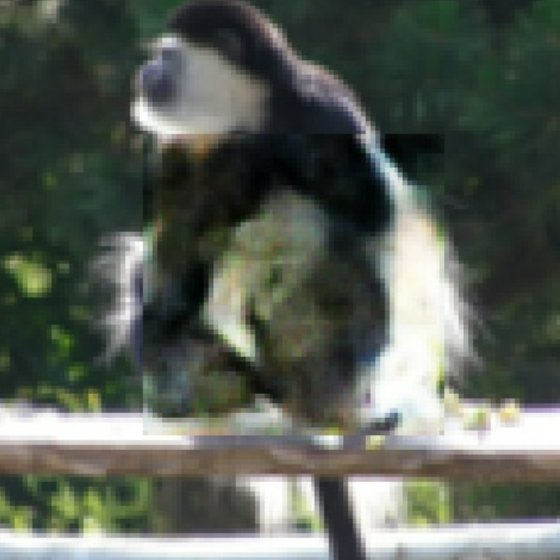}
	\includegraphics[width=0.08\textheight]{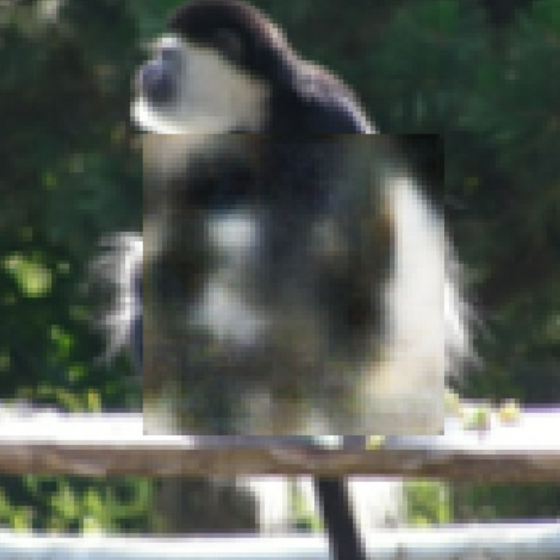}
	\includegraphics[width=0.08\textheight]{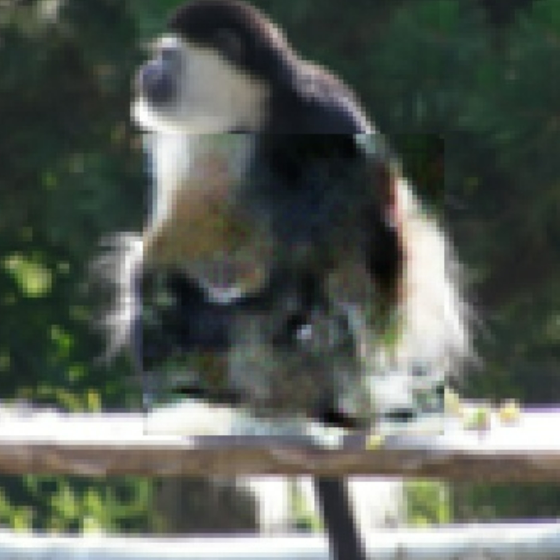}
\end{minipage}
\\[0.1\baselineskip]
\begin{minipage}[t]{0.85\linewidth}
\centering
	\includegraphics[width=0.08\textheight]{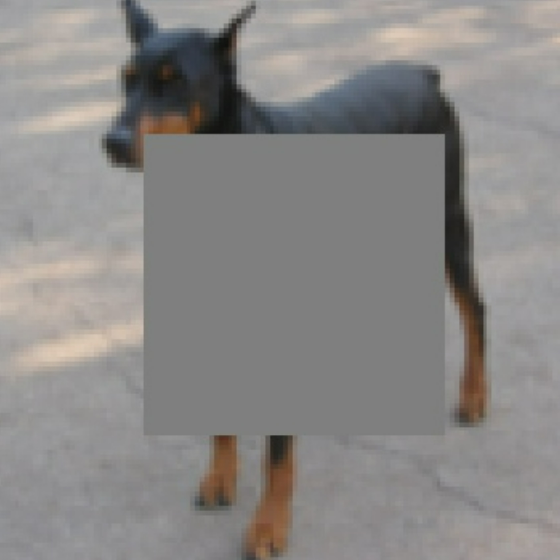}
	\includegraphics[width=0.08\textheight]{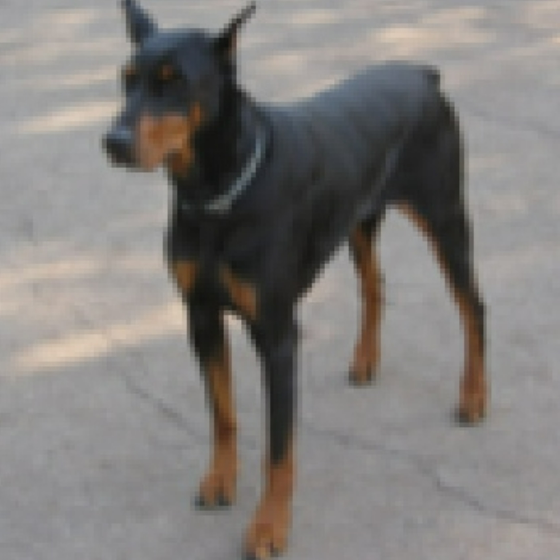}
	\includegraphics[width=0.08\textheight]{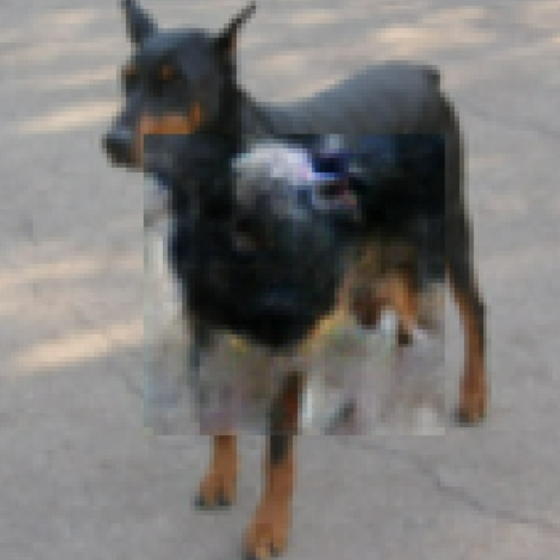}
	\includegraphics[width=0.08\textheight]{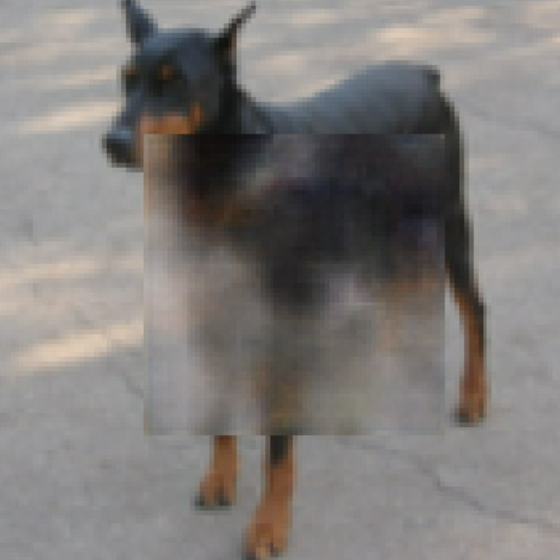}
	\includegraphics[width=0.08\textheight]{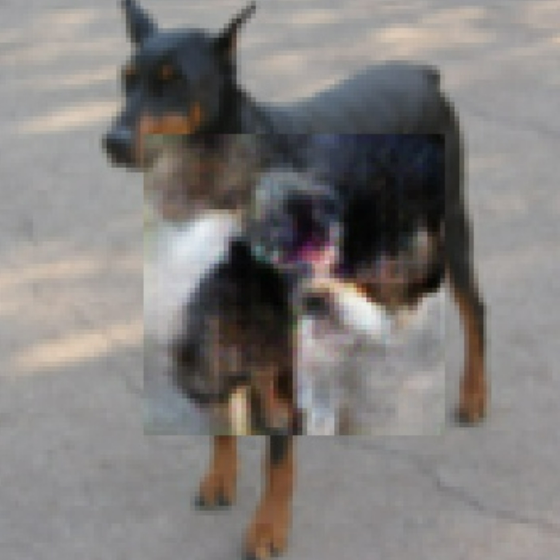}
\end{minipage}
\caption{\label{Fig-stl10}Reconstructed images in STL10 test dataset.}
\end{figure}


\end{appendices}

\end{document}